\newcommand{\BibTeX}{\textsc{B\kern-0.1emi\kern-0.017emb}\kern-0.15em\TeX}
\newcommand*{\tikzmk}[1]{\tikz[remember picture,overlay,] \node (#1) {};\ignorespaces}
\newcommand{\boxit}[1]{\tikz[remember picture,overlay]{\node[yshift=3pt,fill=#1,opacity=.25,fit={(A)($(B)+(.95\linewidth,.7\baselineskip)$)}] {};}\ignorespaces}
\newcommand\@erelb@r[1]{%
  \mathrel{\tikz[baseline=-.5ex]\draw[#1] (0,0)--(.5,0);}
}
\newcommand{\erelbar}[1]{\@erelbar#1}
\def\@erelbar#1#2{%
  \ifcase\numexpr#1*4+#2\relax
    \@erelb@r{-}\or     
    \@erelb@r{->}\or    
    \@erelb@r{-|}\or    
    \@erelb@r{-o}\or   
    \@erelb@r{<-}\or    
    \@erelb@r{<->}\or   
    \@erelb@r{<-|}\or   
    \@erelb@r{<-o}\or   
    \@erelb@r{|-}\or    
    \@erelb@r{|->}\or   
    \@erelb@r{|-|}\or   
    \@erelb@r{|-o}\or 
    \@erelb@r{o-}\or   
    \@erelb@r{o->}\or  
    \@erelb@r{o-|}\or  
    \@erelb@r{o-o}    
  \else
    \@wrong
  \fi
}
\newcommand{\aic}{{\sc\texttt{FWD-AIC}}}
\newcommand{\bic}{{\sc\texttt{FWD-BIC}}}
\newcommand{\iamb}{{\sc\texttt{IAMB-FDR}}}
\newcommand{\lcd}{{\sc\texttt{LCD-AMP}}}
\newcommand{\opc}{{\sc \texttt{PC-like}}}
\newcommand{\spc}{{\sc \texttt{Stable-PC4AMP}}}
\newcommand{\cpc}{{\sc \texttt{Conservative-PC4AMP}}}
\newcommand{\scpc}{{\sc \texttt{Stable-Conservative-PC4AMP}}}
\begin{document}

\title{AMP Chain Graphs: Minimal Separators\\ and Structure Learning Algorithms}
\author{\name{Mohammad Ali Javidian} \email{javidian@email.sc.edu}\\ 
	\name{Marco Valtorta} \email{mgv@cse.sc.edu}\\ 
	\name{Pooyan Jamshidi} \email{pjamshid@cse.sc.edu}\\
	\addr Department of Computer Science \& Engineering, University of South Carolina, Columbia, SC, 29201, USA.}

\editor{X}
\maketitle

\begin{abstract}We address the problem of finding a minimal separator in an Andersson–Madigan–Perlman chain graph (AMP CG), namely,  finding a set $Z$ of nodes that separates a given non-adjacent pair of nodes such that no proper subset of $Z$ separates that pair.
We analyze several versions of this problem
and offer \textit{polynomial time} algorithms for each.
These include finding a
minimal separator from a restricted set of nodes, finding a minimal separator for two given disjoint sets, and testing whether a given separator is minimal. 
To address the problem of learning the structure of AMP CGs from data, we show that the \opc~algorithm \citep{penea12amp} is \textit{order-dependent},
in the sense that the output can depend on the order in which the variables are given. We propose several modifications of the \opc~algorithm that remove part or all of this order-dependence. We also extend the decomposition-based approach for learning Bayesian networks (BNs) proposed by \citep{xie} to learn AMP CGs, which include BNs as a special case, under the faithfulness assumption. We prove the correctness of our extension using the minimal separator results. 
Using standard benchmarks and synthetically generated models and data in our experiments demonstrate the competitive
performance of our decomposition-based method, called \lcd, in comparison with the (modified versions of) \opc~algorithm. The \lcd~algorithm usually outperforms the \opc~algorithm, and our modifications of the \opc~algorithm learn structures that are more similar to the underlying ground truth graphs than the original \opc~algorithm, especially in high-dimensional settings. In particular, we empirically show that the results of both algorithms are more accurate and stabler when the sample size is reasonably large
and the underlying graph is sparse.
\end{abstract}

\begin{keywords}AMP chain graph, conditional independence, decomposition, separator, junction tree, augmented graph, triangulation, graphical model, Markov equivalent, structural learning.
\end{keywords}

\section{Introduction}

Probabilistic graphical models (PGMs), and their use for reasoning intelligently under
uncertainty, emerged in the 1980s within the statistical and artificial intelligence
reasoning communities. Probabilistic graphical models are now widely accepted as a powerful and
mature tools for reasoning under uncertainty. Unlike some of the ad hoc approaches taken in early experts systems, PGMs are based on the strong mathematical foundations of graph and probability theory. In fact, any PGM consists of two main components: (1) a graph that defines the structure of the model; and (2) a joint distribution over random variables of the model. 
The main advantages
of using PGMs compared to other models are that the representation is intuitive, inference can often be done efficiently and practical learning algorithms exist.~\footnote{These algorithms are fast enough in practice, even though learning, inference, and other reasoning tasks are NP-complete or worse in the worst case, because they  exploit sparsity and other features prevalent in application domains \citep{COOPER1990,kf}.}
This led PGMs to  become arguably the most important architecture for
reasoning with uncertainty in artificial intelligence \citep{kf,Neapolitan18}.  
There are many efficient algorithms for both inference
and learning available in open-source \citep{hel, Nagarajan2013,Scutari15} and commercial software (\href{https://www.hugin.com/}{Hugin}, \href{https://www.norsys.com/}{Netica}, \href{https://www.bayesfusion.com/}{GeNIe}, and \href{https://www.bayesfusion.com/}{BayesiaLab}). Moreover, their
power and efficacy has been proven through their successful application to an
enormous range of real-world problem domains. They can be used for a wide
range of reasoning tasks including prediction, monitoring, diagnosis, risk assessment and decision making \citep{sgs,Xiang2002,Jensen2007,FentonNeil}.  

One of the most basic subclasses of PGMs is Markov networks. The graphical framework of Markov networks are undirected graphs (UGs), in which each undirected edge represents
a symmetric relation i.e., direct correlation between the two variables it connects, while no edge
means that the variables are not directly correlated. The best known
and most widely used PGM class, however, is Bayesian networks. The graphical structures of Bayesian networks are directed acyclic graphs (DAGs). In a DAG the directed edges can often be seen as representing cause and effect (asymmetric) relationships e.g., see \citep{Motzek2017}.

Chain graphs (CGs) were introduced as a unification of directed and undirected graphs to model  systems
containing both symmetric and asymmetric relations. In fact, a chain graph is a type of mixed graph, admitting both directed and undirected edges, which contain no partially directed cycles. So, CGs may contain two types of edges,
the directed type that corresponds to the causal relationship in DAGs and a
second type of edge representing a symmetric relationship \citep{s2}. In
particular, $X_1$ is a direct cause of $X_2$ only if $X_1\to X_2$ (i.e., $X_1$ is a parent
of $X_2$), and $X_1$ is a (possibly indirect) cause of $X_2$ only if there is a directed
path from $X_1$ to $X_2$ (i.e., $X_1$ is an ancestor of $X_2$). So, while the interpretation of the directed edge in a CG is quite clear,
the second type of edge can represent different types of relations and, depending on how we interpret it in the graph, we say that we have different CG interpretations with different separation criteria, i.e. different ways of reading conditional independences from the graph, and different intuitive meaning behind
their edges. The three following interpretations are the best known in the literature. The first interpretation (LWF) was introduced by Lauritzen,
Wermuth and Frydenberg \citep{lw, f} to combine DAGs and undirected graphs (UGs). The second
interpretation (AMP), was introduced by Andersson, Madigan and Perlman, and also combines DAGs and UGs but with a Markov equivalence criterion that more closely resembles the one of DAGs \citep{amp}. The third interpretation,
the multivariate regression interpretation (MVR), was introduced by Cox
and Wermuth \citep{cw1, cw2} to combine DAGs and bidirected (covariance) graphs. 

This paper deals with chain graphs under the alternative Andersson-Madigan-Perlman
(AMP) interpretation \citep{amp,AMP2001}. AMP CGs are useful when we have a set of variables for which the internal
relations has no causal ordering, so the relations should be modelled as a Markov network, but
also a second set of variables that can be seen as causes for some of these variables
in the first set. The internal structure of the first set of variables can then be modelled
as a Markov network, creating a chain component in an AMP CG, and the causes as parents of
some of the variables in the chain component. Note that for AMP CGs the parents
only affect the direct children in the chain component, not all the nodes in the chain
component as in the case of LWF CGs. An example in medicine \citep{Sonntag2015} when such a
model might be appropriate is when we are modelling pain levels on different areas on
the body of a patient. The pain levels can then be seen as correlated “geographically”
over the body, and hence be modelled as a Markov network. Certain other factors do, however,
exist that alters the pain levels locally at some of these areas, such as the type of body
part the area is located on or if local anaesthetic has been administered in that area
and so on. These outside factors can then be modeled as parents affecting the pain
levels locally. AMP chain graphs are widely studied in different areas from applications in biology \citep{Sonntag2015}, to more advanced theoretical investigations \citep{r1, LPM2001,roverato05,roverato06, d,studeny09,PENA20141185,p2,Sonntag2015, Pena2016,PENA2016MAMP,p3,pena2018uai}.

Minimality is a desirable
property to ensure efficiency and usability e.g., see \citep{Pena2011}. Finding minimal separators is useful for learning and inference tasks \citep{ad,jv-mvr19}. Of course, finding these sets will take some effort, but the additional effort will be compensated by decreased computing time when using the corresponding independencies in learning and inference. Moreover, it will also increase the reliability of the results, because fewer data are needed to reliably compute a conditional dependence measure of lower order. For example, Acid and de Campos \citep{ACID2001} proposed a hybrid algorithm for learning Bayesian networks from data that uses minimal \textit{d}-separators. They showed that the use of minimal $d$-separating sets is clearly useful, not only with respect to the quality of the learned network but also in terms of time complexity of the proposed algorithm. In this paper, we address the problem of finding minimal separators in  AMP chain graphs and their applications in learning the structure of AMP CGs from data.

One important aspect of PGMs is the possibility of learning the structure of models directly
from sampled data. Two \textit{constraint-based} learning algorithms, that use a statistical analysis to test the presence of a
conditional independency, exist for learning AMP CGs:
(1) the \opc~algorithm \citep{penea12amp, PENA2016MAMP}, and (2) the answer set programming (ASP) algorithm \citep{Pena2016}.
In this paper, we show that the \opc~algorithm is \textit{order-dependent},
in the sense that the output can depend on the order in which the variables are given. We propose several modifications of the \opc~algorithm, i.e., \textbf{Stable} \textbf{PC}-like for \textbf{AMP} CGs (\spc), \textbf{Conservative} \textbf{PC}-like for \textbf{AMP} CGs (\cpc), and  a version that is both \textbf{Stable} and \textbf{Conservative} (\scpc) for learning the structure of AMP chain graphs under the faithfulness assumption that remove part or all of the order-dependence. 
\begin{figure}[!ht]
\centering
\captionsetup[subfigure]{font=footnotesize}
\centering
\subcaptionbox{Observational Data}[.33\textwidth]{%
\includegraphics[width=.25\linewidth]{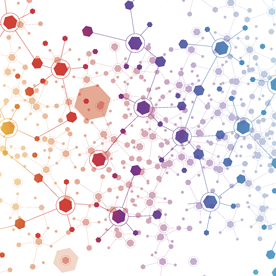}
}%
\subcaptionbox{Undirected Independence Graph Recovery}[.33\textwidth]{%
\begin{tikzpicture}[transform shape]
	\tikzset{vertex/.style = {shape=circle,inner sep=0pt,
  text width=5mm,align=center,
  draw=black,
  fill=white}}
\tikzset{edge/.style = {->,> = latex',thick}}
		\node[vertex,thick] (o) at  (1,.5) {$c$};
	\node[vertex,thick] (p) at  (-1,.5) {$e$};
	\node[vertex,thick] (q) at  (0,3) {$b$};
	\node[vertex,thick] (r) at  (-1.5,2) {$d$};
	\node[vertex,thick] (s) at  (1.5,2) {$a$};
	\node[vertex,thick] (u) at  (-2,-.5) {$f$};
	\draw[thick] (q) to (r);
	\draw[thick] (s) to (q);
	\draw[thick] (r) to (p);
	\draw[thick] (s) to (o);
	\draw[thick] (o) to (p);
	\draw[thick] (p) to (u);
	\draw[thick] (o) to (r);
\end{tikzpicture}}%
\subcaptionbox{Triangulation}[.33\textwidth]{\begin{tikzpicture}[transform shape]
	\tikzset{vertex/.style = {shape=circle,inner sep=0pt,
  text width=5mm,align=center,
  draw=black,
  fill=white}}
\tikzset{edge/.style = {->,> = latex',thick}}
	\node[vertex,thick] (e) at  (9,.5) {$c$};
	\node[vertex,thick] (d) at  (7,0.5) {$e$};
	\node[vertex,thick] (a) at  (8,3) {$b$};
	\node[vertex,thick] (b) at  (6.5,2) {$d$};
	\node[vertex,thick] (c) at  (9.5,2) {$a$};
	\node[vertex,thick] (cc) at  (6,-.5) {$f$};
	\draw[thick] (b) to (a);
	\draw[thick] (c) to (a);
	\draw[thick] (e) to (d);
	\draw[thick] (e) to (c);
	\draw[thick] (b) to (d);
	\draw[thick] (cc) to (d);
	\draw[thick] (c) to (b);
	\draw[thick] (b) to (e);
\end{tikzpicture}}
\vskip 1cm
\subcaptionbox{$p$-Separation Tree}[.33\textwidth]{\begin{tikzpicture}[scale=.75, transform shape,auto,node distance=1.5cm]
    \tikzset{edge/.style = {->,> = latex',thick}}
    \node[entity,thick] (node1) {$e$}
    [grow=up,sibling distance=3cm]
    child[grow=up,level distance=2cm,thick]  {node[attribute,thick] (ch1) {$c,d,e$}}
    child[grow=right,level distance=3cm,thick] {node[attribute,thick] {$e,f$}};
    \node[entity,thick] (rel1) [above right = of node1] {$c,d$}
    child[grow=up,level distance=2cm,thick] {node[attribute,thick] (ch2) {$b,c,d$}};
    \node[entity,thick] (node2) [above left = of rel1]	{$b,c$}
    child[grow=up,thick] {node[attribute,thick] {$a,b,c$}};
    \path[thick] (ch1) edge (rel1);
    \path[thick] (ch2) edge (node2);
    \end{tikzpicture}}%
\subcaptionbox{Local Skeleton Recovery}[.33\textwidth]{
\begin{tikzpicture}[scale=.75, transform shape]
	\tikzset{vertex/.style = {shape=circle,inner sep=0pt,
  text width=5mm,align=center,
  draw=black,
  fill=white}}
\tikzset{edge/.style = {->,> = latex',thick}}
		\node[vertex,thick] (d) at  (0,0) {$d$};
	\node[vertex,thick] (e) at  (0,1.5) {$e$};
	\node[vertex,thick] (c) at  (1.5,1.5) {$c$};
	\node[vertex,thick] (r) at  (2.5,1.5) {$e$};
	\node[vertex,thick] (s) at  (4,0) {$f$};
	\node[vertex,thick] (a) at  (0,5.5) {$a$};
	\node[vertex,thick] (m) at  (1.5,5.5) {$c$};
	\node[vertex,thick] (n) at  (0,4) {$b$};
	\node (o) at  (1,3) {$S_{bc}=\{a\}$};
	\node[vertex,thick] (x) at  (2.5,5.5) {$c$};
	\node[vertex,thick] (y) at  (4,5.5) {$b$};
	\node[vertex,thick] (z) at  (2.5,4) {$d$};
	\node (o) at  (3.5,3) {$S_{cd}=\{b\}$};
	\draw[thick] (d) to (e);
	\draw[thick] (d) to (c);
	\draw[thick] (c) to (e);
	\draw[thick] (r) to (s);
	\draw[thick] (a) to (m);
	\draw[thick] (a) to (n);
	\draw[thick] (x) to (y);
	\draw[thick] (y) to (z);
\end{tikzpicture}
}%
\subcaptionbox{Global Skeleton Recovery}[.33\textwidth]{%
\begin{tikzpicture}[transform shape]
	\tikzset{vertex/.style = {shape=circle,inner sep=0pt,
  text width=5mm,align=center,
  draw=black,
  fill=white}}
\tikzset{edge/.style = {->,> = latex',thick}}
		\node[vertex,thick] (o) at  (1,.5) {$c$};
	\node[vertex,thick] (p) at  (-1,.5) {$e$};
	\node[vertex,thick] (q) at  (0,3) {$b$};
	\node[vertex,thick] (r) at  (-1.5,2) {$d$};
	\node[vertex,thick] (s) at  (1.5,2) {$a$};
	\node[vertex,thick] (u) at  (-2,-.5) {$f$};
	\draw[thick] (q) to (r);
	\draw[thick] (s) to (q);
	\draw[thick] (r) to (p);
	\draw[thick] (s) to (o);
	\draw[thick] (o) to (p);
	\draw[thick] (p) to (u);
\end{tikzpicture}}
\vskip 1cm
\subcaptionbox{AMP CG Recovery}[.33\textwidth]{%
\begin{tikzpicture}[transform shape]
	\tikzset{vertex/.style = {shape=circle,inner sep=0pt,
  text width=5mm,align=center,
  draw=black,
  fill=white}}
\tikzset{edge/.style = {->,> = latex',thick}}
		\node[vertex,thick] (o) at  (1,.5) {$c$};
	\node[vertex,thick] (p) at  (-1,.5) {$e$};
	\node[vertex,thick] (q) at  (0,3) {$b$};
	\node[vertex,thick] (r) at  (-1.5,2) {$d$};
	\node[vertex,thick] (s) at  (1.5,2) {$a$};
	\node[vertex,thick] (u) at  (-2,-.5) {$f$};
	\draw[thick] (q) to (r);
	\draw[thick] (s) to (q);
	\draw[thick,edge] (r) to (p);
	\draw[thick] (s) to (o);
	\draw[thick,edge] (o) to (p);
	\draw[thick,edge] (p) to (u);
\end{tikzpicture}}%
\subcaptionbox{Largest Deflagged Graph Recovery}[.33\textwidth]{\begin{tikzpicture}[transform shape]
	\tikzset{vertex/.style = {shape=circle,inner sep=0pt,
  text width=5mm,align=center,
  draw=black,
  fill=white}}
\tikzset{edge/.style = {->,> = latex',thick}}
	\node[vertex,thick] (c) at  (9,.5) {$c$};
	\node[vertex,thick] (e) at  (7,0.5) {$e$};
	\node[vertex,thick] (b) at  (8,3) {$b$};
	\node[vertex,thick] (d) at  (6.5,2) {$d$};
	\node[vertex,thick] (a) at  (9.5,2) {$a$};
	\node[vertex,thick] (f) at  (6,-.5) {$f$};
	\draw[thick] (b) to (a);
	\draw[thick,edge] (a) to (c);
	\draw[thick,edge] (d) to (e);
	\draw[thick,edge] (c) to (e);
	\draw[thick,edge] (b) to (d);
	\draw[thick,edge] (e) to (f);
\end{tikzpicture}}
        \caption{An overview of \lcd~'s steps for learning the structure of the largest deflagged AMP CG from a faithful distribution.}
    \label{fig:my_label}
\end{figure}

We use some of our findings regarding minimal separators in AMP CGs to prove the correctness of a new efficient algorithm for learning AMP chain graphs, called \textbf{L}earn \textbf{C}hain graphs via \textbf{D}ecomposition for \textbf{AMP} CGs (\lcd). Our proposed \lcd~algorithm, illustrated in Figure \ref{fig:my_label}, consists of five steps: (1) An undirected graphical model for the data is chosen. Any conditional independencies that hold under this model
will also hold under the selected chain graph, so this step serves to restrict the search space
in the third step. (2) A junction tree as a facilitator for decomposition of structure learning
is built from the triangulated graph obtained from the resulting graph at the end of step
(1). (3) Local skeletons are recovered in each individual node of the obtained separation
tree from the previous step. (4) The global skeleton is recovered by merging recovered local
skeletons from the previous step along with removing those edges that are deleted in any
local skeleton. (5) Arrowheads are added to some of the edges to obtain desired AMP chain
graph. The details of each step with related definitions  are provided later in the paper (Section \ref{main-alg}). This algorithm not only \textit{reduces complexity} and \textit{increases the power of computational independence tests} but also achieves a \textit{better quality} with respect to the learned structure.

The results of the experiments show that our proposed algorithm \textbf{L}earn \textbf{C}hain graphs via \textbf{D}ecomposition for \textbf{AMP} interpretation (\lcd)  consistently outperforms the ( {\sc Stable-}) \opc~algorithm\footnote{When we use parenthesis, we mean that what we write applies to both the original \opc~algorithm and the \spc~algorithm.}. Our proposed  algorithms, i.e., the \textbf{Stable} \textbf{PC}-like for \textbf{AMP} CGs (\spc) and \lcd~ are able to exploit the parallel computations for scaling up the task of learning AMP chain graphs. This will enable AMP chain graph discovery on large datasets. In fact, lower complexity, higher power of computational independence test, better learned structure quality, along with the ability of exploiting parallel computing, make our proposed algorithms more desirable and suitable for big data analysis when AMP chain graphs are being used.  Code for reproducing our results is available at {\color{blue}{\url{https://github.com/majavid/AMPCGs2019}}}.

Our main contributions are the following:

\begin{enumerate}
    \item We propose several polynomial time algorithms to solve the problem of finding minimal separating sets in AMP chain graphs (Section \ref{sec:findingminimals}).
    \item We show that the original \opc~algorithm \citep{penea12amp} is \textit{order-dependent}, in the sense that the output can depend on the order in which the variables are given.  Then, we propose modifications of the \opc~algorithm, i.e., \textbf{Stable} \textbf{PC}-like for \textbf{AMP} (\spc), \textbf{Conservative} \textbf{PC}-like for \textbf{AMP} (\cpc), and \scpc~for learning the structure of AMP chain graphs under the faithfulness assumption that remove part or all of the order-dependence (Section \ref{sec:pcalg}).
    \item We present a computationally feasible algorithm for learning the structure of AMP chain graphs via decomposition, called \lcd~, that reduces complexity and increase the power of computational independence tests (Section \ref{main-alg}).
    \item We compare the performance of our algorithms with that of the \opc~algorithm proposed in \citep{penea12amp}, in the Gaussian and discrete cases. We empirically show that our modifications of the \opc~algorithm achieve output of better quality than the original \opc~algorithm, especially in high-dimensional settings. We also show that our decomposition based algorithm, i.e., the \lcd~algorithm outperforms the ( {\sc Stable-}) \opc~algorithm in our experiments (Section \ref{evaluation}).
    \item We release supplementary material including data and an \textsf{R} package that implements the proposed algorithms.
\end{enumerate}

\section{Basic Definitions and Concepts}
In this paper, we consider graphs containing both directed (of the form $a \to b$ or, simply, $(a,b)$) and undirected (of the form $a-b$ or, simply, $\{a,b\}$) edges
and largely use the terminology of \citep{AMP2001}, where the reader can also find further
details. Below we briefly list some of the central concepts used in this paper.

	If $A\subseteq V$ is a subset of the vertex set in a graph $G=(V,E)$, the  \textit{induced subgraph} $G_A=(A,E_A)$ is a graph in which the edge set $E_A=E\cap (A\times A)$ is obtained from $G$ by keeping edges with both endpoints in $A$.

    If there is an arrow from $a$ pointing towards $b$, $a$ is said to be a \textit{parent} 
	of $b$. The set of parents of $b$ is denoted as $pa(b)$. If there is an undirected edge between $a$ and $b$, $a$ and $b$ are said to be \textit{adjacent} or \textit{neighbors}. The set of neighbors of a vertex $a$ is denoted as $ne(a)$. The expressions $pa(A)$ and $ne(A)$ denote the collection of  
	parents and neighbors of vertices in $A$ that are not themselves 
	elements of $A$. The \textit{boundary} $bd(A)$ of a subset $A$ of vertices is the set of vertices in $V\setminus A$ that are parents or neighbors to vertices in $A$. The \textit{closure} of $A$ is $cl(A)=bd(A)\cup A$.

	A \textit{directed path} of length $n$ from $a$ to $b$ is a sequence $a=a_0,\dots , a_n=b$ of 
	distinct vertices such that $(a_i,a_{i+1})\in E$, for all $i=0,\dots ,n-1$. 
	(A \textit{semidirected path} of length $n$ from $a$ to $b$ is a sequence $a=a_0,\dots , a_n=b$ of 
	distinct vertices such that either $(a_i,a_{i+1})$ or $\{a_i, a_{i+1}\} \in E$, for all $i=0,\dots ,n-1$.)
	A \textit{chain} of length $n$ from $a$ to $b$ is a sequence $a=a_0,\dots , a_n=b$ of 
	distinct vertices such that $(a_i,a_{i+1})\in E$, or $(a_{i+1},a_i)\in E$, or $\{a_i,a_{i+1}\}\in E$, for all $i=0,\dots ,n-1$. 
	A vertex $\alpha$ is said to be an \emph{ancestor} of a vertex $\beta$ if there is a directed path $\alpha \to \dots \to \beta$ from $\alpha$ to $\beta$. We define the \textit{smallest ancestral set} containing $A$ as $An(A):=an(A)\cup A$. A vertex $\alpha$ is said to be \emph{anterior} to a vertex $\beta$ if there is a chain $\mu$ from $\alpha$ to $\beta$ on which every edge is either of the form $\gamma-\delta$, or $\gamma \to\delta$ with $\delta$ between $\gamma$ and $\beta$, or $\alpha=\beta$; that is, there are no edges $\gamma\gets\delta$ pointing toward $\alpha$. We apply this definition to sets: $ant(X) = \{\alpha | \alpha \textrm{ is an anterior of } \beta \textrm{ for some } \beta \in X\}$.

A \textit{partially directed cycle} (or semi-directed cycle) in a graph $G$ is a sequence of $n$ distinct vertices $v_1,v_2,\dots,v_n (n\ge 3)$, and $v_{n+1}\equiv v_1$, such that

(a) for all $i (1\le i\le n)$ either $v_i-v_{i+1}$ or $v_i\to v_{i+1}$, and

(b) there exists a $j (1\le j\le n)$ such that $v_j\to v_{j+1}$.

An \textit{AMP chain graph} is a graph in which there are no partially directed cycles. The \textit{chain components} $\mathcal{T}$ of a chain graph are the connected components of the undirected
graph obtained by removing all directed edges from the chain graph. We define the \textit{smallest coherent set} containing $A$ as $Co(A):=\cup_\tau\{\tau\in \mathcal{T}|\tau\cap A \ne \emptyset \}$. Let $\overline{G}$ be obtained by deleting all directed edges of $G$; for $A\subseteq V$ the \textit{extended subgraph} G[A] is defined by $G[A]:=G_{An(A)}\cup \overline{G}_{Co(An(A))}$.

A triple of vertices $\{X, Y, Z\}$ is said to form a \textit{flag} in CG if the 
induced subgraph $CG_{X\cup Y\cup Z}$ is $X \to Y-Z$ or $X-Y\gets Z$. A triple of vertices $\{X, Y, Z\}$ is said to form a \textit{triplex} in CG if the 
induced subgraph $CG_{X\cup Y\cup Z}$ is either $X \to Y-Z$, $X \to Y \gets Z$, or 
$X-Y\gets Z$. A triplex is \textit{augmented} by adding the $X-Z$ edge. A set of four 
vertices $\{X, A,B,Y\}$ is said to form a \textit{bi-flag} if the edges $X \to A$, $Y\to B$, and $A-B$ are present in the induced subgraph over $\{X, A, B, Y\}$. A bi-flag 
is augmented by adding the edge $X-Y$. A \textit{minimal complex} (or simply a complex) in a chain graph is an induced subgraph of the form $a\to v_1-\cdots \cdots-v_r\gets b$. The \textit{augmented CG} $G^a$ is the undirected graph 
formed by augmenting all triplexes and bi-flags in CG and replacing all 
directed edges with undirected edges (see Fig. \ref{augmented}). The \textit{skeleton} (underlying graph) of a CG $G$ is obtained from $G$ by changing all directed edges of $G$ into undirected edges.  Vertex $Y$ is an \textit{unshielded collider} (or V-structure) in a DAG $G$ if $G$
contains the induced subgraph $U\to Y\gets V$.

\begin{figure}
    \centering
	\[\begin{tikzpicture}[transform shape]
	\tikzset{vertex/.style = {shape=circle,draw,minimum size=1.5em}}
	\tikzset{edge/.style = {->,> = latex'}}
	\node[vertex,thick] (a) at  (2,1) {$Z$};
	\node[vertex,thick] (b) at  (1,0) {$Y$};
	\node[vertex,thick] (c) at  (0,1) {$X$};
	\node (d) at  (4,-1) {$(a)$};
	\draw[edge] (a) to (b);
	\draw (c) to (b);
	
	\node[vertex,thick] (e) at  (5,1) {$Z$};
	\node[vertex,thick] (f) at  (4,0) {$Y$};
	\node[vertex,thick] (g) at  (3,1) {$X$};
	\draw[edge] (g) to (f);
	\draw (e) to (f);
	
	\node[vertex,thick] (h) at  (8,1) {$Z$};
	\node[vertex,thick] (i) at  (7,0) {$Y$};
	\node[vertex,thick] (j) at  (6,1) {$X$};
	\draw[edge] (h) to (i);
	\draw[edge] (j) to (i);
	
	\node[vertex,thick] (k) at  (12,1) {$Z$};
	\node[vertex,thick] (l) at  (11,0) {$Y$};
	\node[vertex,thick] (m) at  (10,1) {$X$};
	\node (n) at  (11,-1) {$(b)$};
	\draw (k) to (l);
	\draw (l) to (m);
	\draw (m) to (k);
	
	\node[vertex,thick] (o) at  (5,-2) {$A$};
	\node[vertex,thick] (p) at  (3,-4) {$Y$};
	\node[vertex,thick] (q) at  (3,-2) {$X$};
	\node[vertex,thick] (r) at  (5,-4) {$B$};
	\node (s) at (4, -5) {$(c)$};
	\node (t) at (2.75, -3) {$?$};
	\draw[edge] (q) to (o);
	\draw (o) to (r);
	\draw[edge] (p) to (r);
	\draw[dashed] (q) to (p);
	
	\node[vertex,thick] (u) at  (9,-2) {$A$};
	\node[vertex,thick] (v) at  (7,-4) {$Y$};
	\node[vertex,thick] (x) at  (7,-2) {$X$};
	\node[vertex,thick] (w) at  (9,-4) {$B$};
	\node (y) at (8, -5) {$(d)$};
	\draw (u) to (v);
	\draw (v) to (x);
	\draw (x) to (w);
	\draw (w) to (u);
	\draw (u) to (x);
	\draw (v) to (w);
	
	\end{tikzpicture}\]
    \caption{(a) Triplexes and (b) the corresponding augmented triplex, (c) the four configurations that define the bi-flag; (d) the corresponding augmented bi-flag. The ``?" indicates that either $X-Y\in G$, $X\to Y \in G$, $Y\to X\in G$, or $X$ and $Y$ are not adjacent in $G$.
}
    \label{augmented}
\end{figure}
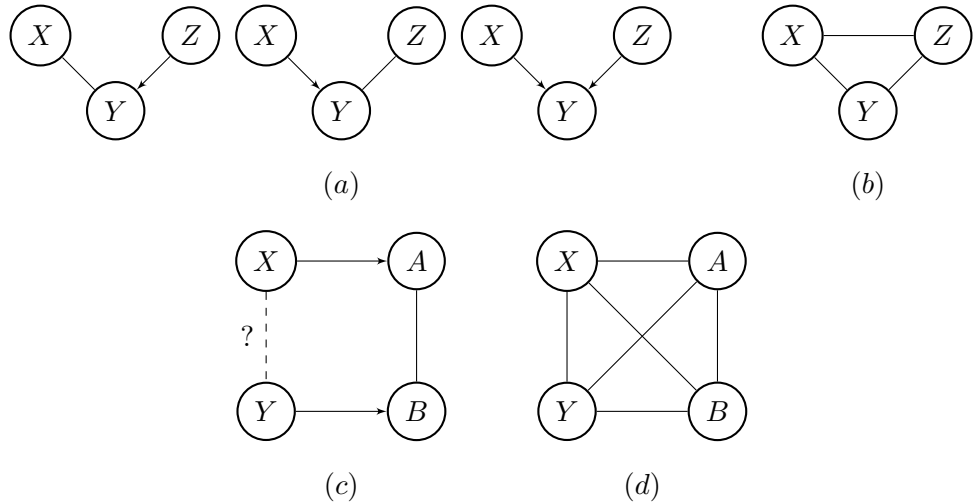

\begin{definition}\label{gMarkovAMP}
    \emph{(Global Markov property for AMP chain graphs)} 
	For any triple
	$(A, B,S)$ of disjoint subsets of $V$ such that $S$ separates $A$ from $B$
	in $(G[A\cup B\cup S])^a$, in the augmented graph  of the extended subgraph of $A\cup B\cup S$, we have $A \!\perp\!\!\!\perp B | S$ (or $\langle A,B|S\rangle$) i.e., $A$ is independent of $B$ given $S$.
\end{definition} 

An equivalent pathwise separation criterion that identifies all valid conditional independencies under the AMP Markov property was introduced in \citep{LPM2001}:
\begin{definition}\label{pseparation}
\emph{(The pathwise $p$-separation criterion for AMP chain graphs)}
    A node $B$ in a chain $\rho$ in an AMP CG $G$ is called a \emph{triplex node}  in $\rho$ if $A\to B\gets C, A\to B - C, \textrm{ or } A - B \gets C$ is a subchain of $\rho$. Moreover, $\rho$ is said to be \emph{$Z$-open} with $Z\subseteq V$ when
    \begin{itemize}
        \item every triplex node in $\rho$ is in $An(Z)$, and
        \item every non-triplex node $B$ in $\rho$ is outside $Z$, unless $A - B - C$ is a subchain of $\rho$ and $pa_G(B) \setminus Z \ne \emptyset$.
    \end{itemize}
    Let $X, Y\ne \emptyset$ and $Z$ (may be empty) denote three disjoint subsets of $V$. When there is no \textit{Z}-open chain in an AMP CG $G$ between a node in $X$ and a node in $Y$, we say that $X$ is separated from $Y$ given $Z$ in $G$ and denote it as $X\!\perp\!\!\!\perp  Y|Z$.
\end{definition}

Theorem 4.1 in \citep{LPM2001} establishes
the equivalence of the \textit{p}-separation criterion and the augmentation criterion
occurring in the AMP global Markov property for CGs.

\begin{example}
Consider the AMP CG $G$ in Fig. \ref{GMarkov}(a). The global Markov property of AMP chain graphs implies that $X \!\perp\!\!\!\perp Y | A$ (see Fig. \ref{GMarkov}). There is no \textit{A}-open chain in the AMP CG $G$ between $X$ and $Y$ because the only chain between $X$ and $Y$ i.e., $X\to A-B\gets Y$ is blocked at $B$  $(B\textrm{ is a triplex node in the chain and } B\not\in An(A))$.
\begin{figure}
    \centering
	\[\begin{tikzpicture}[transform shape]
	\tikzset{vertex/.style = {shape=circle,draw,minimum size=1.5em}}
	\tikzset{edge/.style = {->,> = latex'}}
	\node[vertex,thick] (o) at  (5,-2) {$A$};
	\node[vertex,thick] (p) at  (3,-4) {$Y$};
	\node[vertex,thick] (q) at  (3,-2) {$X$};
	\node[vertex,thick] (r) at  (5,-4) {$B$};
	\node (s) at (4, -5) {$(a)$};
	\draw[edge,thick] (q) to (o);
	\draw[thick] (o) to (r);
	\draw[edge,thick] (p) to (r);
	
	\node[vertex,thick] (u) at  (9,-2) {$A$};
	\node[vertex,thick] (v) at  (7,-4) {$Y$};
	\node[vertex,thick] (x) at  (7,-2) {$X$};
	\node (y) at (8, -5) {$(b)$};
	\draw[edge,thick] (x) to (u);
	
	\node[vertex,thick] (a) at  (13,-2) {$A$};
	\node[vertex,thick] (b) at  (11,-4) {$Y$};
	\node[vertex,thick] (c) at  (11,-2) {$X$};
	\node[vertex,thick] (d) at  (13,-4) {$B$};
	\node (e) at (12, -5) {$(c)$};
	\draw[thick] (a) to (d);
	
	\node[vertex,thick] (f) at  (17,-2) {$A$};
	\node[vertex,thick] (g) at  (15,-4) {$Y$};
	\node[vertex,thick] (h) at  (15,-2) {$X$};
	\node[vertex,thick] (i) at  (17,-4) {$B$};
	\node (j) at (16, -5) {$(d)$};
	\draw[thick] (f) to (i);
	\draw[thick,edge] (h) to (f);
	
	\node[vertex,thick] (k) at  (11,-6) {$A$};
	\node[vertex,thick] (l) at  (9,-8) {$Y$};
	\node[vertex,thick] (m) at  (9,-6) {$X$};
	\node[vertex,thick] (n) at  (11,-8) {$B$};
	\node (o) at (10, -9) {$(e)$};
	\draw[thick] (k) to (m);
	\draw[thick] (k) to (n);
	\draw[thick] (m) to (n);
	
	\end{tikzpicture}\]
    \caption{(a) The AMP CG $G$, (b) $An(X\cup Y\cup A)$, (c) the undirected edges in $Co(An(X\cup Y\cup A))$, (d) $G[X\cup Y\cup A]$, and (e) $(G[X\cup Y\cup A])^a$. 
}
    \label{GMarkov}
\end{figure}
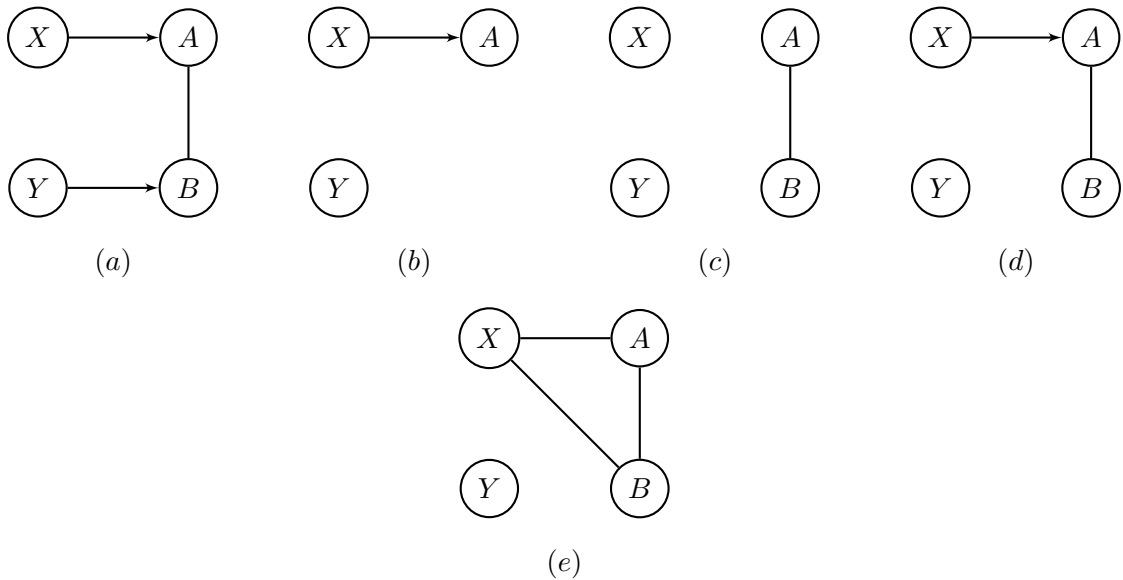
\end{example}

We say that two AMP CGs $G$ and $H$ are \textit{Markov equivalent}
or that they are in the same \textit{Markov equivalence class} if they induce the same  
conditional independence restrictions.  Two chain graphs $G$ and $H$ are Markov equivalent if and only if they have the same skeletons and the same triplexes \citep{AMP2001}. Two LWF chain graphs $G$ and $H$ are Markov equivalent if and only if they have the same skeletons and the same minimal complexes \citep{f}.  Two DAGs $G$ and $H$ are Markov equivalent if and only if they have the same skeletons and the same unshielded colliders \citep{pearl88}. The condition for AMP Markov equivalence of CGs more closely
resembles that for DAG Markov equivalence than does the condition for LWF Markov
equivalence of CGs, in the sense that triplexes involve only three vertices, while complexes can involve arbitrarily many vertices. 

We say that AMP chain graphs $G$ and $H$ belong to the same \textit{strong Markov equivalent class} iff $G$ and $H$ are Markov equivalent and contain the same flags. An AMP CG $G^*$ is said to be the \textit{AMP essential graph} of its Markov equivalence class iff for every directed edge $A\to B$ that exists in $G^*$ there exists no AMP CG $H$ s.t. $G^*$ and $H$ are Markov equivalent and $A\gets B$ is in $H$. An AMP CG $G^*$ is said to be the \textit{largest deflagged graph} of its Markov
equivalence class iff there exists no other AMP CG $H$ s.t. $G^*$ and $H$ are Markov equivalent and either $H$ contains fewer flags than $G^*$ or $G^*$ and $H$ belong to the same strong Markov equivalence class but $H$ contains more undirected edges. Any largest deflagged graph or AMP
essential graph are AMP CGs and both of these have been proven to be unique for the Markov equivalence class they represent \citep{roverato06,andersson2006}.

Let $\bar{G}_V = (V, \bar{E}_V)$ denote an undirected graph where $\bar{E}_V$ is a set of undirected edges.  For a subset $A$ of $V$, let $\bar{G}_A= (A, \bar{E}_A)$ be the subgraph induced by $A$
and $\bar{E}_A = \{e\in \bar{E}_V | e\in A\times A\} = \bar{E}_V\cap (A\times A)$. An undirected graph is called \textit{complete} if any pair of vertices is connected by an edge. For an undirected graph, we say that vertices $u$ and $v$ are separated by a set of vertices $Z$ if each path between $u$ and $v$ passes through $Z$. We say that two distinct vertex sets $X$ and $Y$ are separated by $Z$ if and
only if $Z$ separates every pair of vertices $u$ and $v$ for any $u\in X$ and $v\in Y$. We say that an undirected graph $\bar{G}_V$ is
an \textit{undirected independence graph} (UIG) for  CG $G$ if the fact that a set $Z$ separates $X$ and $Y$ in $\bar{G}_V$ implies that $Z$
$p$-separates $X$ and $Y$ in $G$. Note that the augmented graph derived from  CG $G$, $(G)^a$, is an undirected independence graph for $G$.  We say that $\bar{G}_V$ can be decomposed into subgraphs $\bar{G}_A$ and $\bar{G}_B$ if
\begin{itemize}
	\item[(1)] $A\cup B=V$, and
	\item[(2)] $C=A\cap B$ separates $V\setminus A$ and $V\setminus B$ in $\bar{G}_V$.
\end{itemize}
The above decomposition does not require that the separator $C$ be complete, which is required for weak decomposition defined in \citep{l}. In this paper, we show that a
problem of learning the structure of  CG can also be decomposed into problems for its decomposed subgraphs even if
the separator is not complete.

A \textit{triangulated (chordal)} graph is an undirected graph in which all cycles of four or more vertices have a chord, which is an edge that is not part of the cycle but connects two vertices of the cycle  (see, for example, Figure \ref{fig:AMPUIGChordal}). For an
undirected graph $\bar{G}_V$ which is not triangulated, we can add extra (``fill-in") edges to it such that it becomes a triangulated
graph, denoted by $\bar{G}_V^t$.
\begin{figure}[ht]
    \centering
    \[\begin{tikzpicture}[transform shape]
	\tikzset{vertex/.style = {shape=circle,inner sep=0pt,
  text width=5mm,align=center,
  draw=black,
  fill=white}}
\tikzset{edge/.style = {->,> = latex',thick}}
	\node[vertex,thick] (o) at  (1,.5) {$c$};
	\node[vertex,thick] (p) at  (-1,.5) {$e$};
	\node[vertex,thick] (q) at  (0,3) {$b$};
	\node[vertex,thick] (r) at  (-1.5,2) {$d$};
	\node[vertex,thick] (s) at  (1.5,2) {$a$};
	\node[vertex,thick] (u) at  (-2,-.5) {$f$};
	\node (t) at (0,-1) {$(a)$};
	\draw[edge] (q) to (r);
	\draw[thick] (s) to (q);
	\draw[edge] (r) to (p);
	\draw[edge] (s) to (o);
	\draw[edge] (o) to (p);
	\draw[edge] (p) to (u);
	
	\node[vertex,thick] (i) at  (5,0.5) {$c$};
	\node[vertex,thick] (j) at  (3,0.5) {$e$};
	\node[vertex,thick] (k) at  (4,3) {$b$};
	\node[vertex,thick] (l) at  (2.5,2) {$d$};
	\node[vertex,thick] (m) at  (5.5,2) {$a$};
	\node[vertex,thick] (ii) at  (2,-.5) {$f$};
	\node (n) at (4,-1) {$(b)$};
	\draw[thick] (l) to (k);
	\draw[thick] (j) to (i);
	\draw[thick] (m) to (k);
	\draw[thick] (m) to (i);
	\draw[thick] (j) to (l);
	\draw[thick] (j) to (ii);
	\draw[thick] (i) to (l);
	
	\node[vertex,thick] (e) at  (9,.5) {$c$};
	\node[vertex,thick] (d) at  (7,0.5) {$e$};
	\node[vertex,thick] (a) at  (8,3) {$b$};
	\node[vertex,thick] (b) at  (6.5,2) {$d$};
	\node[vertex,thick] (c) at  (9.5,2) {$a$};
	\node[vertex,thick] (cc) at  (6,-.5) {$f$};
	\node (f) at (8,-1) {$(c)$};
	\draw[thick] (b) to (a);
	\draw[thick] (c) to (a);
	\draw[thick] (e) to (d);
	\draw[thick] (e) to (c);
	\draw[thick] (b) to (d);
	\draw[thick] (cc) to (d);
	\draw[thick] (c) to (b);
	\draw[thick] (b) to (e);
	\end{tikzpicture}\]
    \caption{(a) An AMP CG $G$. (b) The augmented graph $G^a$, which is also an undirected independence graph. (c) The triangulated graph $(G^a)^t$.}
    \label{fig:AMPUIGChordal}
\end{figure}
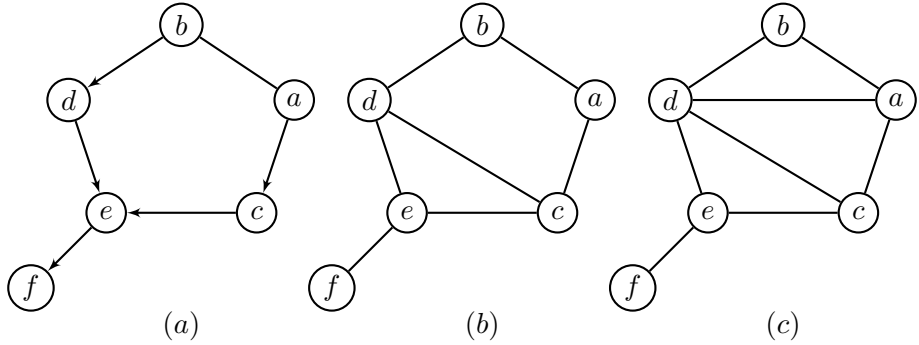

In this paper, we assume that all independencies of a
probability distribution of variables in $V$ can be checked by $p$-separations of $G$, called the faithfulness assumption \citep{sgs}. The faithfulness assumption means that all independencies and conditional independencies among variables can be represented by $G$. 

The global skeleton is an undirected graph obtained by dropping direction of  CG. A local skeleton for a subset $A$ of variables is an undirected subgraph for $A$ in which
the absence of an edge $u\erelbar{00} v$ implies that there is a subset $S$ of $A$ such that $u\!\perp\!\!\!\perp v|S$. Now, we introduce the notion of \textit{$p$-separation trees}, which is used to facilitate the representation of the decomposition. The concept is similar to the junction tree of cliques and the
independence tree introduced for DAGs as $d$-separation trees in \citep{xie}. Let $C = \{C_1, \dots, C_H \}$ be a collection of distinct variable sets such that for $h = 1,\dots ,H, C_h\subseteq V$.
Let $T$ be a tree where each node corresponds to a distinct variable set in $C$, to be displayed as an oval (see, for example, Figure \ref{fig:pseptree}). An undirected edge $e = \{C_i,C_j\}$ connecting nodes $C_i$ and $C_j$ in $T$ is labeled with a separator $S = C_i\cap C_j$, which is displayed as a rectangle.
Removing an edge $e$ or, equivalently, removing a separator $S$ from $T$ splits $T$ into two subtrees
$T_1$ and $T_2$ with node sets $C_1$ and $C_2$ respectively. We use $V_i$ to denote the union of the
vertices contained in the nodes of the subtree $T_i$ for $i = 1,2$.
\begin{figure}[ht]
    \centering
    \[\begin{tikzpicture}[auto,node distance=1.5cm]
    \tikzset{edge/.style = {->,> = latex',thick}}
    \node[entity,thick] (node1) {$e$}
    [grow=up,sibling distance=3cm]
    child[grow=up,level distance=2cm,thick]  {node[attribute,thick] (ch1) {$c,d,e$}}
    child[grow=right,level distance=3cm,thick] {node[attribute,thick] {$e,f$}};
    \node[entity,thick] (rel1) [above right = of node1] {$c,d$}
    child[grow=up,level distance=2cm,thick] {node[attribute,thick] (ch2) {$b,c,d$}};
    \node[entity,thick] (node2) [above left = of rel1]	{$b,c$}
    child[grow=up,thick] {node[attribute,thick] {$a,b,c$}};
    \path[thick] (ch1) edge (rel1);
    \path[thick] (ch2) edge (node2);
    \end{tikzpicture}\]
    \caption{The $p$-separation tree of CG $G$ in Figure \ref{fig:AMPUIGChordal}.}
    \label{fig:pseptree}
\end{figure}
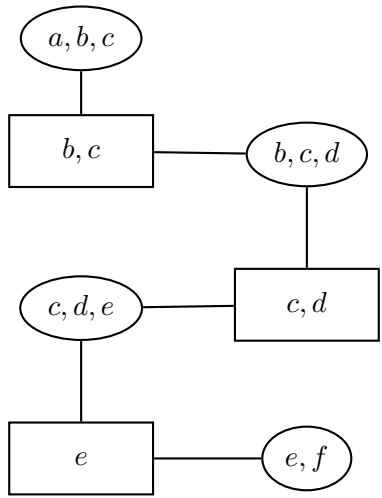

Notice that a separator is defined in terms of a tree whose nodes consist of variable sets, while
the $p$-separator is defined based on  chain graph. In general, these two concepts are not related, though for a $p$-separation tree its separator must be some corresponding $p$-separator in the underlying AMP chain graph. The definition of $p$-separation trees for AMP chain graphs is similar to that of junction trees of cliques,
see \citep{cdls,l}. Actually, it is not difficult to see that a junction tree
of  chain graph $G$ is also a $p$-separation tree. However, as in \citep{mxg}, we point out two differences here: (a) a $p$-separation tree is defined with $p$-separation and it does not require that every node be a clique or
that every separator be complete on the augmented graph; (b) junction trees are mostly used in inference engines, while our interest in $p$-separation trees is mainly derived from their power in facilitating the decomposition of structural learning.

Given an undirected graph $G=(V,E)$, a subset $S\subseteq V$ that does not contain $a$ or $b$ is said to be an \textit{$(a,b)$-separator} if all paths from $a$ to $b$ intersect $S$. A set $S$ of nodes that separates a given pair of nodes such that no proper subset of $S$ separates that pair is called a minimal separator.	Note that removing an $(a,b)$-separator disconnects a graph into two connected components, one containing $a$, and another containing $b$. Conversely, if a set $S$ disconnects a graph into a connected component including $a$ and another connected component including $b$, then $S$ is an $(a,b)$-separator. Two disjoint vertex subsets $A$ and $B$ of $V$ are adjacent if there is at least one pair of adjacent vertices $u\in A$ and $v\in B$. Let $A$ and $B$ be two disjoint non-adjacent subsets of $V$. Similarly, we define an $(A,B)$-separator to be any subset of $V\setminus(A\cup B)$ whose removal separates $A$ and $B$ in
distinct connected components. A minimal $(A,B)$-separator does not contain any other
\textit{$(A,B)$-separator}.

\section{Finding Minimal Separators in AMP Chain Graphs}\label{sec:findingminimals}
In this section we propose and solve an optimization problem related to the separation in AMP chain graphs. The basic problem is formulated as follows: given a pair of non-adjacent nodes, $x$ and
$y$, in an AMP chain graph, $G$, find a minimal set of nodes that separates $x$ and
$y$. We analyze several versions of this problem
and offer polynomial time algorithms for each. Apart from the possible theoretical interest that these problems may have \citep{tpp,ad}, generally, the solution
to the basic problem (Problem \ref{problem2amp}) represents the minimum information i.e., minimal set of variables, whose values we have to know
in order to break the mutual influence between two sets of variables, either in the
absence of any other information (Problem \ref{problem5amp}, \ref{problem6amp}), or
in the presence of some previous knowledge (Problem \ref{problem1amp}, \ref{problem3amp}, \ref{problem4amp}).
These include the following problems: 

\begin{problem}\label{problem1amp} (test for minimal separation) Given two non-adjacent nodes $X$ and $Y$ in an AMP chain graph $G$ and a set $Z$ that separates $X$ from $Y$, test if $Z$ is minimal i.e., no proper subset of $Z$ separates $X$ from $Y$.
\end{problem}
\begin{problem}\label{problem2amp} (minimal separation) Given two non-adjacent nodes $X$ and $Y$ in an AMP chain graph $G$, find a minimal separating set between $X$ and $Y$, namely, find a set $Z$ such that $Z$, and no proper subset of $Z$, separates $X$ from $Y$.
\end{problem}
\begin{problem}\label{problem3amp} (restricted separation) Given two non-adjacent nodes $X$ and $Y$ in an AMP chain graph $G$ and a set $S$ of nodes not containing $X$ and $Y$, find a subset $Z$ of $S$ that separates $X$ from $Y$.
\end{problem}
\begin{problem}\label{problem4amp} (restricted minimal separation) Given two non-adjacent nodes $X$ and $Y$ in an AMP chain graph $G$ and a set $S$ of nodes not containing $X$ and $Y$, find a subset $Z$ of $S$ which is minimal and separates $X$ from $Y$.
\end{problem}
\begin{problem}\label{problem5amp} (minimal separation of two disjoint non-adjacent sets) Given two disjoint non-adjacent sets $X$ and $Y$ in an AMP chain graph $G$, find a minimal separating set between $X$ and $Y$, namely, find a set $Z$ such that $Z$, and no proper subset of $Z$, separates $X$ from $Y$.
\end{problem}
\begin{problem}\label{problem6amp} (enumeration of all minimal separators) Given two non-adjacent nodes (or disjoint subsets) $X$ and $Y$ in an AMP chain graph $G$, enumerate all minimal separating sets between $X$ and $Y$.
\end{problem}

We prove that it is possible
to transform our problem into a separation problem, where the undirected graph in which we have to look for the minimal set separating $X$ from $Y$ depends only on $X$ and $Y$.  For each above mentioned problem, we propose and analyze an algorithm that, taking into account the previous results, solves it.

\subsection{Main Theorem: Minimal Separators in AMP Chain Graphs}
In this subsection we prove that it is possible to transform our problem into a separation problem, where the undirected graph in which we have to look for the minimal set separating $X$ from $Y$ depends only on $X$ and $Y$. Later, in the next subsections, we will apply this result to developing an efficient algorithm that solves our problems.

The next proposition shows that if we want to test a separation relationship between two disjoint sets of nodes $X$ and $Y$ in an AMP chain graph, where the separating set is included in the anterior set of $X\cup Y$, then we can test this relationship in a smaller AMP chain graph, whose set of nodes is formed only by the anteriors of $X$ and $Y$.
\begin{proposition}\label{prop1amp}
	Given an AMP chain graph $G=(V,E)$. Consider that $X, Y,$ and $Z$ are three disjoint subsets of $V,$ $Z\subseteq ant(X\cup Y)$, and $H=G_{ant(X\cup Y)}$ is the subgraph of $G$ induced by $ant(X\cup Y)$. Then ${\langle X, Y | Z\rangle}_G \Leftrightarrow {\langle X, Y | Z\rangle}_H.$
\end{proposition}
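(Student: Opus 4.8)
The plan is to reduce the claimed equivalence to the statement that the extended subgraph of $X\cup Y\cup Z$ is literally the same object whether it is formed inside $G$ or inside $H$. By the AMP global Markov property (Definition \ref{gMarkovAMP}), which Theorem 4.1 in \citep{LPM2001} shows is equivalent to the pathwise $p$-separation criterion, we have ${\langle X, Y\mid Z\rangle}_G$ iff $Z$ separates $X$ from $Y$ in $(G[X\cup Y\cup Z])^a$, and likewise ${\langle X, Y\mid Z\rangle}_H$ iff $Z$ separates $X$ from $Y$ in $(H[X\cup Y\cup Z])^a$. Since augmentation (adding edges to triplexes and bi-flags and dropping all arrowheads) is a deterministic function of the input graph, it suffices to prove the single graph identity $G[X\cup Y\cup Z] = H[X\cup Y\cup Z]$; the two augmented graphs, and hence the two separation statements, then coincide.

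Write $A := X\cup Y\cup Z$. Recall $G[A] = G_{An(A)}\cup \overline{G}_{Co(An(A))}$, so the identity I want is controlled entirely by the sets $An(A)$ and $Co(An(A))$ together with the fact that $H$ is induced. The key step — essentially the whole content of the proof — is the following closure lemma: the anterior set $ant(X\cup Y)$ is closed under taking parents and neighbors. Concretely, if $b\in ant(X\cup Y)$ via an anterior chain $\mu$ from $b$ to some $\beta\in X\cup Y$, then prepending $p\to b$ (respectively $a-b$) to $\mu$ again yields an anterior chain, whence $p\in ant(X\cup Y)$ (respectively $a\in ant(X\cup Y)$). Transitivity of the anterior relation, i.e. concatenation of anterior chains, additionally gives the idempotence $ant(ant(S)) = ant(S)$.

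From this lemma the required set identities follow by bookkeeping. Since $Z\subseteq ant(X\cup Y)$ by hypothesis and trivially $X\cup Y\subseteq ant(X\cup Y)$, we get $A\subseteq ant(X\cup Y)$; closure under parents, iterated along directed paths, gives $An(A)\subseteq ant(X\cup Y)$, and closure under neighbors forces every chain component of $G$ that meets $An(A)$ to lie entirely inside $ant(X\cup Y)$, so $Co(An(A))\subseteq ant(X\cup Y)$. Because $H = G_{ant(X\cup Y)}$ is induced, directed paths, chain components, induced edges, and undirected skeletons among vertices of $ant(X\cup Y)$ are identical in $G$ and $H$; hence $An_G(A) = An_H(A)$, $Co(An_G(A)) = Co(An_H(A))$, and finally $G[A] = H[A]$, which closes the argument.

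I expect the only genuine obstacle to be the closure lemma together with the verification that a $G$-chain-component meeting $ant(X\cup Y)$ is contained in it; once anterior-closure under parents and neighbors is established, the rest is routine. As an independent cross-check one can argue entirely pathwise: given a $Z$-open chain between $X$ and $Y$ in $G$, one shows it can never leave $ant(X\cup Y)$, because any maximal excursion outside $ant(X\cup Y)$ is entered and exited by edges whose arrowheads point into the excursion, forcing a triplex node strictly inside it; such a triplex node cannot lie in $An(Z)\subseteq ant(X\cup Y)$, contradicting $Z$-openness. On that route the excursion/triplex argument is the main obstacle, and matching the non-triplex exception condition between $G$ and $H$ relies on the same fact that parents of anterior nodes are anterior, which yields $pa_G(B) = pa_H(B)$ for every $B\in ant(X\cup Y)$.
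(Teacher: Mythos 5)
Your proof is correct and follows essentially the same route as the paper's: both arguments rest on the fact that $ant(X\cup Y)$ is closed under parents and neighbours (the paper phrases this as $bd(ant(X\cup Y))=\emptyset$), which forces $Co(An(X\cup Y\cup Z))\subseteq ant(X\cup Y)$ and reduces both separation statements to the same augmented extended subgraph. Your packaging via the single identity $G[X\cup Y\cup Z]=H[X\cup Y\cup Z]$ gets both directions at once a bit more cleanly than the paper's separate subgraph-containment and contradiction arguments, but the key lemma and overall strategy are the same.
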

\begin{proof}
	($\Rightarrow$) The necessary condition is obvious, because a separator in a graph is also a separator in all of its subgraphs.
	
	($\Leftarrow$) Since $bd(ant(X\cup Y))=\emptyset$, so $Co(An(ant(X\cup Y)))=ant(X\cup Y)$. Let ${\langle X, Y | Z\rangle}_H$ and $Z\subseteq ant(X\cup Y)$, then $Co(An(X\cup Y\cup Z))\subseteq ant(X\cup Y)$. Consider that ${\langle X, Y \not| Z\rangle}_G$. This means that $X$ is not separated from $Y$ given $Z$ in $(G[X\cup Y\cup Z])^a$, which is a subgraph of $(G[ant(X\cup Y)])^a$.  In other words, there is a chain $C$ between $X$ and $Y$ in $H^a=(G[ant(X\cup Y)])^a=(G_{ant(X\cup Y)})^a$ that bypasses $Z$. Once again using $Z\subseteq ant(X\cup Y)$, we obtain that $X$ and $Y$ are not separated by $Z$ in $H$, in contradiction to the assumption ${\langle X, Y | Z\rangle}_H$. Therefore, it has to be ${\langle X, Y | Z\rangle}_G$.
\end{proof}

The following proposition establishes the basic result necessary to solve our optimization problems.
\begin{proposition}\label{prop2amp}
	Given an AMP chain graph $G=(V,E)$. Consider that $X, Y,$ and $Z$ are three disjoint subsets of $V$ such that ${\langle X, Y | Z\rangle}$ and ${\langle X, Y \not| Z'\rangle}, \forall Z'\subsetneq Z$. Then $Z\subseteq ant(X\cup Y).$
\end{proposition}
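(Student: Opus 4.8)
The plan is to argue by contradiction, using minimality to manufacture an open chain and then reading an anterior route off that chain. Suppose $Z$ separates $X$ from $Y$, is minimal, yet $Z\setminus ant(X\cup Y)\neq\emptyset$. Among the nodes of $Z\setminus ant(X\cup Y)$ I would select one, $w$, that is \emph{maximal} for the anterior preorder $\alpha\preceq\beta$ (``$\alpha$ is anterior to $\beta$''): no other node of $Z\setminus ant(X\cup Y)$ is a strict anterior-successor of $w$. Such a $w$ exists because, $G$ having no partially directed cycles, mutual anteriority forces two nodes into a common chain component, so $\preceq$ descends to an honest partial order on the finite collection of chain components. By minimality $Z':=Z\setminus\{w\}$ does not separate $X$ from $Y$, so Definition \ref{pseparation} furnishes a $Z'$-open chain $\rho$ between a node of $X$ and a node of $Y$; since $Z$ separates, $\rho$ is not $Z$-open, i.e. adjoining $w$ to the conditioning set is exactly what closes $\rho$.

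The first key step is to localise this closure. Enlarging the conditioning set from $Z'$ to $Z$ only enlarges $An(\cdot)$, so the requirement ``every triplex node lies in $An(\cdot)$'' can only get easier to satisfy; hence $\rho$ cannot be closed at a triplex node and must be closed through the non-triplex clause. Comparing that clause for $Z'$ and for $Z$, the only node whose status can flip from admissible to blocking is either (i) $w$ itself, occurring on $\rho$ as a non-triplex node now forced into the conditioning set with no qualifying $A-B-C$ escape, or (ii) a non-triplex node $B$ lying on $\rho$ in an undirected subchain $A-B-C$ with $B\in Z'$ whose last parent outside $Z'$ is precisely $w$, so that $w\to B$ and $pa_G(B)\subseteq Z$. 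I would record the directed edge $w\to B$ in case (ii).

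The second key step is a forward-following construction. Starting at $w$ (case (i)) or at $B$ (case (ii)) and walking along $\rho$ toward an endpoint, I follow edges that are undirected or directed away from the start, halting at the first edge that points back. If the walk reaches the endpoint, then $w$ (using $w\to B$ in case (ii)) enjoys a semidirected path to $X\cup Y$, so $w\in ant(X\cup Y)$, contradiction. Otherwise the halting node $t$ carries an arrowhead on its forward edge and an undirected-or-arrowhead back edge, so $t$ is a triplex node of $\rho$; by $Z'$-openness $t\in An(Z')$, giving a directed path $t\to\cdots\to z'$ with $z'\in Z'$. Concatenating yields a semidirected path from $w$ to $z'\in Z'\subseteq Z$, i.e. $w\preceq z'$. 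Maximality of $w$ then closes the argument: if this route carries a directed edge then $w\prec z'$ strictly (a return path would complete a partially directed cycle), so $z'\notin Z\setminus ant(X\cup Y)$, forcing $z'\in ant(X\cup Y)$ and hence $w\in ant(X\cup Y)$ — contradiction.

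The step I expect to be the main obstacle is this last one: relating the openness witness, which certifies only $t\in An(Z')$, to the target set $ant(X\cup Y)$. Since $An(Z')\subseteq ant(X\cup Y)$ is itself what we are trying to establish, the reasoning is inherently circular unless $w$ is chosen extremally, and the delicate sub-case is when the forward walk stays inside $w$'s own chain component and lands on a separator node $z'$ in that same component through purely undirected edges, so that $w\preceq z'$ is \emph{not} strict. Handling that cleanly is where I would spend the real effort — either by taking $\rho$ of minimal length so it cannot loiter on an undirected path strung between two conditioned triplex nodes, or by refining the extremal choice of $w$ to break ties within a chain component using the parent constraint $pa_G(w)\subseteq Z$ forced in case (i).
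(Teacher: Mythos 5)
Your argument is correct up to its final step, but the difficulty you flag yourself is a genuine gap, and neither of the fixes you sketch closes it. First note that in your case (ii) the prepended edge $w\to B$ already forces the concatenated route from $w$ to carry a directed edge, so there acyclicity gives strict $w\prec z'$ and your maximality argument does finish; the unresolved situation is confined to case (i). There, if the forward walk from $w$ proceeds entirely along undirected edges of $\rho$ inside $w$'s chain component $\tau_w$ and halts at a triplex node $t$ that lies in $Z'$ itself (so the directed path witnessing $t\in An(Z')$ is trivial), then $w\preceq t$ and $t\preceq w$ are both true, maximality of $w$ says nothing, and $t\notin ant(X\cup Y)$ holds automatically because $\tau_w\cap ant(X\cup Y)=\emptyset$ — so you can conclude neither a violation of maximality nor $w\in ant(X\cup Y)$. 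Choosing $\rho$ of minimal length does not visibly exclude this configuration, and the tie-breaking refinement via $pa_G(w)\subseteq Z$ is not worked out. As it stands the proof is incomplete.

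The paper sidesteps the problem entirely by removing all offending nodes at once rather than one at a time: it sets $Z'=Z\cap ant(X\cup Y)$, a proper subset of $Z$ by hypothesis, so minimality yields a connecting chain in $(G[X\cup Y\cup Z'])^a$ that bypasses $Z'$. Since $Co(An(X\cup Y\cup Z'))\subseteq ant(X\cup Y)$, every node of that chain already lies in $ant(X\cup Y)$ and hence outside $Z\setminus ant(X\cup Y)$; bypassing $Z'$ therefore means bypassing all of $Z$, and the chain persists in $(G_{ant(X\cup Y)})^a$, contradicting $\langle X, Y | Z\rangle$. No extremal choice, case analysis, or forward walk is needed. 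If you prefer to stay with the pathwise criterion of Definition \ref{pseparation}, the same bulk choice of $Z'$ works, but you then need the auxiliary fact (Lemma \ref{lem1amp} in the appendix) that any chain leaving $ant(X\cup Y)$ is blocked by every subset of $ant(X\cup Y)$, which confines the $Z'$-open chain to $ant(X\cup Y)$; after that, any parent of a node on the chain is itself anterior to $X\cup Y$, so the escape clause $pa_G(B)\setminus Z'\ne\emptyset$ upgrades to $pa_G(B)\setminus Z\ne\emptyset$ for free. I recommend replacing the one-node-at-a-time extremal argument with this bulk removal.
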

\begin{proof}
	Suppose that $Z\not\subseteq ant(X\cup Y)$. Define $Z'=Z\cap ant(X\cup Y)$. Then, by assumption we have ${\langle X, Y \not| Z'\rangle}$. Since $Z'\subseteq ant(X\cup Y)$, it is obvious that $Co(An(X\cup Y\cup Z'))\subseteq ant(X\cup Y)$. So, $X$ and $Y$ are not separated by $Z'$ in $(G[X\cup Y\cup  Z'])^a$, hence there is a chain $C$ between $X$ and $Y$ in $(G[X\cup Y\cup  Z'])^a$ that bypasses $Z'$ i.e., the chain $C$ is formed from nodes in $ant(X\cup Y)$ that are outside of $Z$. Since $Co(An(X\cup Y\cup Z'))\subseteq ant(X\cup Y)$, then $(G[X\cup Y\cup Z'])^a$ is a subgraph of $(G[ant(X\cup Y)])^a$. Then, the previously found chain $C$ is also a chain in $(G[ant(X\cup Y)])^a$ that bypasses $Z$, which means that $X$ and $Y$ are not separated by $Z$ in $(G[ant(X\cup Y)])^a=(G_{ant(X\cup Y)})^a$. So, $X$ and $Y$ are not $p$-separated by $Z$ in $G_{ant(X\cup Y)}$. This implies that $X$ and $Y$ are not $p$-separated by $Z$ in $G$, in contradiction to the assumption ${\langle X, Y | Z\rangle}$. Therefore, it has to be $Z\subseteq ant(X\cup Y).$ 	
\end{proof}

The next proposition shows that, by combining the results in propositions \ref{prop1amp} and \ref{prop2amp}, we can reduce our problems to a simpler one, which involves a smaller graph.
\begin{proposition}\label{prop3amp}
	Let $G=(V,E)$ be an AMP chain graph, and $X, Y\subseteq V$ are two disjoint subsets. Then the problem of finding a minimal separating set for $X$ and $Y$ in $G$ is equivalent to the problem of finding a minimal separating set for $X$ and $Y$ in the induced subgraph $G_{ant(X\cup Y)}$.
\end{proposition}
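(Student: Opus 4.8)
The plan is to read ``equivalence of problems'' as the set-level statement that $X$ and $Y$ admit \emph{exactly the same} minimal separators in $G$ and in the induced subgraph $H := G_{ant(X\cup Y)}$; once this equality of solution sets is established, any procedure that finds a minimal separator in one instance also solves the other. Concretely, I would write $\mathcal{S}_G$ (resp.\ $\mathcal{S}_H$) for the collection of minimal $(X,Y)$-separators in $G$ (resp.\ $H$) and prove $\mathcal{S}_G = \mathcal{S}_H$ by showing both inclusions. Both follow by combining Propositions \ref{prop1amp} and \ref{prop2amp}; the only genuine work is transferring the \emph{minimality} clause, not the separation itself. Note first that every separator is disjoint from $X\cup Y$, so the disjointness hypotheses of both propositions are met throughout, and that each vertex is anterior to itself, whence $X, Y \subseteq ant(X\cup Y)$ and $H$ is a well-posed induced subgraph containing $X$ and $Y$.

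First I would establish $\mathcal{S}_G \subseteq \mathcal{S}_H$. Take $Z \in \mathcal{S}_G$. Proposition \ref{prop2amp} applies verbatim, since $Z$ separates $X$ from $Y$ while no proper subset does, and it forces $Z \subseteq ant(X\cup Y)$. With this containment in hand, Proposition \ref{prop1amp} gives $\langle X,Y\mid Z\rangle_G \Leftrightarrow \langle X,Y\mid Z\rangle_H$, so $Z$ separates $X$ from $Y$ in $H$ as well. For minimality in $H$, suppose some $Z' \subsetneq Z$ separated $X$ from $Y$ in $H$; since $Z' \subsetneq Z \subseteq ant(X\cup Y)$ we still have $Z' \subseteq ant(X\cup Y)$, so Proposition \ref{prop1amp} applied to $Z'$ yields $\langle X,Y\mid Z'\rangle_G$, contradicting the minimality of $Z$ in $G$. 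Hence no such $Z'$ exists and $Z \in \mathcal{S}_H$.

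The reverse inclusion $\mathcal{S}_H \subseteq \mathcal{S}_G$ is symmetric and in fact slightly easier, because the containment in the anterior set is now automatic: any $Z \in \mathcal{S}_H$ is a subset of the vertex set of $H$, which is precisely $ant(X\cup Y)$, so Proposition \ref{prop2amp} is not even needed here. Proposition \ref{prop1amp} then transfers $\langle X,Y\mid Z\rangle_H$ to $\langle X,Y\mid Z\rangle_G$ directly, and the identical ``any smaller candidate $Z' \subsetneq Z$ still lies in $ant(X\cup Y)$'' argument transfers minimality from $H$ back to $G$. Combining the two inclusions gives $\mathcal{S}_G = \mathcal{S}_H$, which is the claimed equivalence.

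The one point I expect to require care is the minimality transfer, and it is worth isolating why it works: a hypothetical smaller separator $Z'$ is a subset of a set $Z$ already known to sit inside $ant(X\cup Y)$, and it is exactly this inheritance of the containment $Z' \subseteq ant(X\cup Y)$ that licenses a \emph{second} application of Proposition \ref{prop1amp} (this time to $Z'$) to pull the separation statement across the two graphs. Proposition \ref{prop2amp} is used only to initialize the $G \to H$ direction—to guarantee that $Z$ itself lies in $ant(X\cup Y)$ before Proposition \ref{prop1amp} can be invoked—whereas the $H \to G$ direction obtains that containment for free from the very definition of $H = G_{ant(X\cup Y)}$.
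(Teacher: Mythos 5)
Your proof is correct, and it rests on exactly the same two pillars as the paper's: Proposition \ref{prop2amp} to force a minimal separator of $G$ into $ant(X\cup Y)$, and Proposition \ref{prop1amp} to transfer the separation statement between $G$ and $H=G_{ant(X\cup Y)}$. The difference is in how ``equivalence of problems'' is formalized. The paper works with the collections $S_G$ and $S_H$ of \emph{all} separators and proves $\min_{Z\in S_G}|Z|=\min_{Z\in S_H}|Z|$, i.e.\ that the minimum cardinalities agree, and then asserts that the sets of minimal separators therefore coincide; strictly speaking that cardinality argument only pins down the minimum-size separators, and the passage to arbitrary minimal (inclusion-wise) separators is left implicit. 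You instead prove the set equality $\mathcal{S}_G=\mathcal{S}_H$ of minimal separators directly by double inclusion, and you handle the minimality transfer explicitly: the observation that any candidate $Z'\subsetneq Z$ inherits the containment $Z'\subseteq ant(X\cup Y)$, so Proposition \ref{prop1amp} can be applied a second time to $Z'$, is precisely the step the paper glosses over. Your version is therefore a slightly more careful rendering of the same idea and is more faithful to the statement, which speaks of minimal rather than minimum separating sets; the paper's version has the minor advantage of also recording that the optimum cardinality is preserved, which is what the algorithmic reduction ultimately uses.
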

\begin{proof}
	The proof is very similar to the proof of Proposition 3 in \citep{ad,jv-uai18} and Proposition 9 in \citep{jv-pgm18}. Let $H=G_{ant(X\cup Y)}$, and let us to define sets $S_G=\{Z\subseteq V | \langle X, Y | Z\rangle_G\}$	and $S_H=\{Z\subseteq ant(X\cup Y) | \langle X, Y | Z\rangle_H\}$. Then we have to prove that $\min_{Z\in S_G}|Z|=\min_{Z\in S_H}|Z|$, and therefore, by proposition \ref{prop2amp}, the sets of minimal separators are the same. From proposition \ref{prop1amp}, we deduce that $S_H\subseteq S_G$, and therefore $\min_{Z\in S_H}|Z|\ge \min_{Z\in S_G}|Z|$.
	
	\noindent ($\Rightarrow$) Let $T=\min({Z\in S_G})$. Then $\forall T'\subsetneq T$ we have $T'\not\in S_G$, and from proposition \ref{prop2amp} we obtain $T\subseteq ant(X\cup Y)$, and now using proposition \ref{prop1amp} we get $T\in S_H$. So, we have $|T|=\min_{Z\in S_H}|Z|\ge \min_{Z\in S_G}|Z|=|T|$, hence $|T|= \min_{Z\in S_H}|Z|$.
	
	\noindent ($\Leftarrow$) Let $T=\min({Z\in S_H})$. If, $|T|=\min_{Z\in S_H}|Z|> \min_{Z\in S_G}|Z|=|Z_0|$, we have $\forall Z'\subsetneq Z_0, Z'\notin S_G$, and therefore, once again using proposition \ref{prop2amp} and \ref{prop1amp}, we get $Z_0\in S_H$, so that $|Z_0|\ge \min_{Z\in S_H}|Z|=|T|$, which is a contradiction. Thus, $|T|=\min_{Z\in S_G}|Z|$. 
\end{proof}

\begin{theorem}\label{thm1amp}
	The problem of finding a minimal separating set for $X$ and $Y$ in an AMP chain graph $G$ is equivalent to the problem of finding a minimal separating set for $X$ and $Y$ in the undirected graph $(G_{ant(X\cup Y)})^a$.
\end{theorem}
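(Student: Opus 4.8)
The plan is to reduce the problem on the chain graph $G$ to a pure vertex-separation problem on the single undirected graph $(G_{ant(X\cup Y)})^a$, using the three preceding propositions as the backbone. First I would invoke Proposition \ref{prop3amp} to replace $G$ by the induced subgraph $H:=G_{ant(X\cup Y)}$: it already guarantees that the minimal separating sets for $X$ and $Y$ in $G$ are exactly those in $H$, and that the minimum cardinalities agree. Thus it remains to show that the minimal $p$-separators of $X$ and $Y$ in the AMP CG $H$ coincide with the minimal vertex separators of $X$ and $Y$ in the undirected graph $H^a=(G_{ant(X\cup Y)})^a$. Concretely, I would prove the sharper statement that for every $Z\subseteq ant(X\cup Y)$ disjoint from $X\cup Y$, $\langle X,Y\mid Z\rangle_H$ holds iff $Z$ separates $X$ from $Y$ in $H^a$; the two families of separators are then literally equal, so in particular their minimal elements are.

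One direction is immediate from the material already in the text: the augmented graph $H^a$ is an undirected independence graph for $H$, so whenever $Z$ separates $X$ from $Y$ in $H^a$ it $p$-separates them in $H$. The content is in the converse, and here I would go through the pathwise criterion (Definition \ref{pseparation}) together with Theorem 4.1 of \citep{LPM2001}, which tells us that $\langle X,Y\mid Z\rangle_H$ is equivalent to $Z$ separating $X$ from $Y$ in the augmented extended subgraph $(H[X\cup Y\cup Z])^a$. The goal is therefore to compare this $Z$-dependent graph with the fixed graph $H^a$.

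The key lemma I would isolate is that augmentation commutes with the extended-subgraph operation inside the anterior-closed set $W:=ant(X\cup Y)$: namely $(H[X\cup Y\cup Z])^a$ is exactly the subgraph of $H^a$ induced on $Co(An(X\cup Y\cup Z))$. The inclusion ``$\subseteq$'' is routine, since every edge created by augmenting a triplex or bi-flag of the extended subgraph is also created in $H^a$. The reverse inclusion is the real obstacle: I must rule out a triplex (or bi-flag) of $H$ whose two shoulders lie in $Co(An(X\cup Y\cup Z))$ while its collider lies in $W\setminus Co(An(X\cup Y\cup Z))$, since such a configuration would place an augmenting edge in $H^a$ that is absent from $(H[X\cup Y\cup Z])^a$. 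This is exactly where the two defining features of the setting are used: because $W$ is anterior-closed ($bd(ant(X\cup Y))=\emptyset$, as already noted in the proof of Proposition \ref{prop1amp}) and because $H$ contains no partially directed cycle, any such collider $b$ is forced into $Co(An(X\cup Y\cup Z))$ after all. Tracing the anterior chain witnessing $b\in W$, the first edge leaving $b$ is either directed away from it, which makes $b$ a directed ancestor and so a member of $An(\cdot)$, or undirected, which places $b$ in the same chain component as a genuine ancestor and hence in $Co(An(\cdot))$; the forbidden alternative would close a partially directed cycle.

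Finally, I would combine this lemma with the elementary undirected fact that restricting $H^a$ to the set $Co(An(X\cup Y\cup Z))$ does not change whether $Z$ separates $X$ from $Y$: any $X$--$Y$ path in $H^a$ that avoids $Z$ can be rerouted to stay inside $Co(An(X\cup Y\cup Z))$, since a detour through $W\setminus Co(An(X\cup Y\cup Z))$ would have to re-enter through the same separating set. Chaining these equivalences---$p$-separation in $H$ $\Leftrightarrow$ separation in $(H[X\cup Y\cup Z])^a$ $\Leftrightarrow$ separation in $H^a$---gives the coincidence of the separator families, and hence of their minimal elements, completing the reduction begun by Proposition \ref{prop3amp}. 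I expect the commuting lemma, specifically its reverse inclusion and the appeal to the absence of partially directed cycles, to be the main technical hurdle; the undirected rerouting step and the $p$-separation/augmentation soundness direction are comparatively routine.
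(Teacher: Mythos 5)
Your high-level strategy coincides with the paper's: reduce to $H=G_{ant(X\cup Y)}$ via Proposition~\ref{prop3amp} and then show that the $p$-separators of $X,Y$ in $H$ are exactly the vertex separators in $H^a$; the soundness direction via the undirected-independence-graph property is fine. The gap is in your key lemma: $(H[X\cup Y\cup Z])^a$ is \emph{not} in general the subgraph of $H^a$ induced on $Co(An(X\cup Y\cup Z))$, and the obstruction is broader than the one you isolated. The extended subgraph $H[A]=H_{An(A)}\cup \overline{H}_{Co(An(A))}$ discards \emph{every directed edge} having an endpoint in $Co(An(A))\setminus An(A)$, so a triplex $u\to b\gets w$ of $H$ whose collider satisfies $b\in Co(An(A))\setminus An(A)$ already contributes the augmenting edge $u-w$ to $H^a$ but not to $(H[A])^a$, even though all three vertices survive in $H[A]$. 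Your argument for the reverse inclusion is also unsound at exactly the step you flagged: the first edge of an anterior chain pointing away from $b$ does not make $b$ a directed ancestor (the chain $b\to c - d\to x$ witnesses $b\in ant(x)$ with $b\notin An(x)$ and with $\tau_b$ possibly disjoint from $An(x)$); anteriority is a strictly weaker closure than the $Co(An(\cdot))$ closure that actually governs the AMP extended subgraph, which is the whole difficulty here.

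Moreover this is not a patchable technicality, because the equality of separator families you are driving at actually fails. Take $G$ with edges $u\to x$, $c\to x$, $u\to b$, $w\to b$, $b-c$, $w\to y$, and $X=\{x\}$, $Y=\{y\}$. This is a valid AMP CG, and $ant(X\cup Y)=V$ because $b-c\to x$ makes $b$ anterior to $x$; hence $(G_{ant(X\cup Y)})^a=G^a$, which contains the augmenting edge $u-w$ from the triplex $u\to b\gets w$ and therefore the path $x-u-w-y$, so no $(x,y)$-separator of size $0$ exists there. Yet $\emptyset$ $p$-separates $x$ and $y$ in $G$: every chain from $x$ to $y$ passes through the triplex node $b$, and $b\notin An(\emptyset)$; equivalently, $b\notin An(\{x,y\})$, so the edges $u\to b$ and $w\to b$ are dropped from $G[\{x,y\}]$ and $x,y$ lie in different components of $(G[\{x,y\}])^a$. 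So the converse direction of your sharper statement (``$p$-separation in $H$ implies separation in $H^a$'') is false, and the same example defeats the unproved middle equivalence $\langle X,Y\mid Z\rangle_H\Leftrightarrow\langle X,Y\mid Z\rangle_{(H_{ant(X\cup Y\cup Z)})^a}$ on which the paper's own proof rests. Any correct argument must work with the $Z$-dependent graph $(H[X\cup Y\cup Z])^a$ (equivalently with $An(X\cup Y\cup Z)$ and $Co(An(X\cup Y\cup Z))$) rather than with the fixed graph $(G_{ant(X\cup Y)})^a$; as stated, the target equivalence cannot be established because it does not hold.
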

\begin{proof}
	The proof is very similar to the proof of Theorem 1 in \citep{ad,jv-uai18} and Theorem 10 in \citep{jv-pgm18}. Using the same notation from proposition \ref{prop3amp}, let $H^a$ be the augmented graph of $H=G_{ant(X\cup Y)}$, and $S_H^a=\{Z\subseteq ant(X\cup Y) | \langle X, Y | Z\rangle_{H^a} \}$. Let $Z$ be any subset of $ant(X\cup Y)$. Then taking into
	account the characteristics of anterior sets, it is clear that $H_{ant(X\cup Y\cup Z)}=H$. Then, we have
	$Z\in S_H\Leftrightarrow \langle X, Y | Z\rangle_{H}\Leftrightarrow \langle X, Y | Z\rangle_{(H_{ant(X\cup Y\cup Z)})^a}\Leftrightarrow \langle X, Y | Z\rangle_{H^a}\Leftrightarrow Z\in S_H^a.$
	Hence, $S_H=S_H^a$. Now, using proposition \ref{prop3amp}, we obtain $|T|=\min_{Z\in S_G}|Z|\Leftrightarrow |T|=\min_{Z\in S_H^a}|Z|$. 
\end{proof}

Informally, Theorem \ref{thm1amp} says that the search space of finding a minimal separating set $S$ for $X$ and $Y$ in an AMP chain graph $G$ is limited to $ant(X\cup Y)$, as shown in Figure \ref{fig:thm1amp}.
\begin{figure}[!ht]
	    \centering
	    \includegraphics[width=.35\linewidth,page=1]{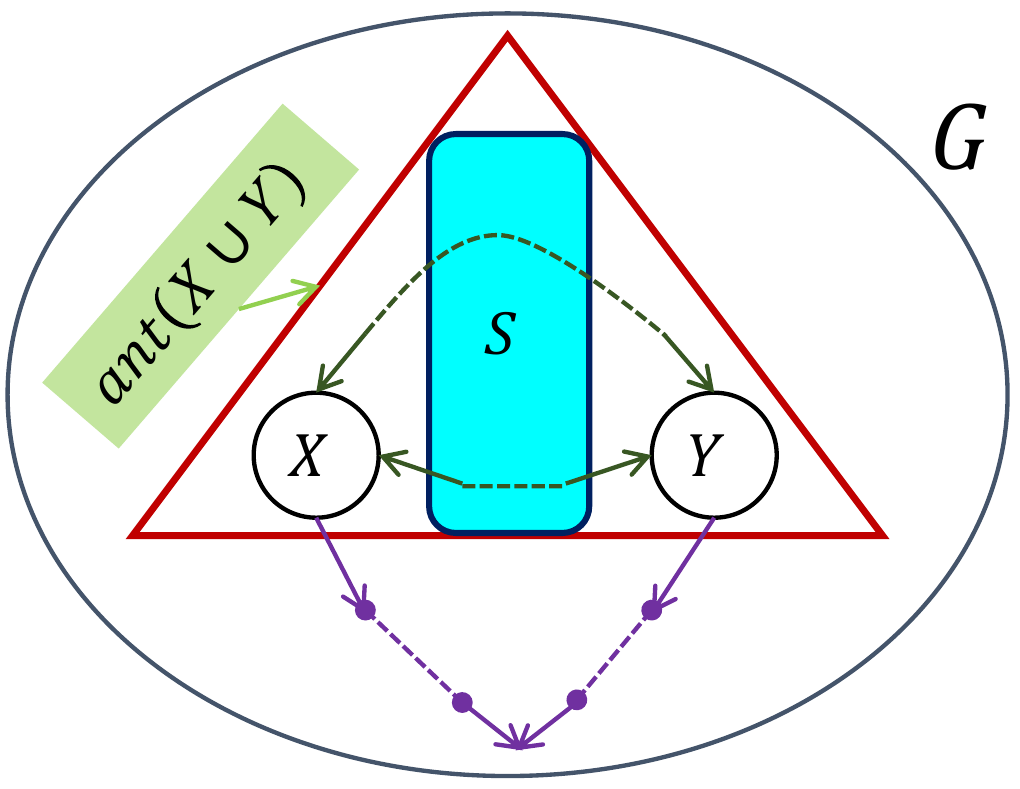}
		\captionof{figure}{Search space for finding a minimal separating set $S$ for $X$ and $Y$ in an AMP chain graph $G$.}
	    \label{fig:thm1amp}
\end{figure}
\begin{algorithm}[!ht]
\caption{Test for minimal separation (Problem \ref{problem1amp})}\label{alg1amp}
	\SetAlgoLined
	\KwIn{A set $Z$ that separates two non-adjacent nodes $X, Y$ in the AMP chain graph $G$.}
	\KwOut{If $Z$ is minimal then the algorithm returns TRUE otherwise, returns FALSE.}
	\eIf{$Z$ contains a node that is not in $ant(X\cup Y)$}{
		\Return{FALSE}\;
	}{
		\tcc{\textcolor{blue}{Building the search space according to Theorem \ref{thm1amp}.}}
		\tikzmk{A}
		Construct $G_{ant(X\cup Y)}$\;
		Construct $(G_{ant(X\cup Y)})^a$\;
		\tikzmk{B}\boxit{green!50}
		\tcc{\textcolor{blue}{Applying Theorem \ref{thm2amp} by running BFS algorithm that starts from both $X$ and $Y$.}}
		\tikzmk{A}
		Starting from $X$, run BFS. Whenever a node in $Z$
		is met, mark it if it is not already marked, and do not continue along
		that path. When BFS stops\;
		\eIf{not all nodes in $Z$ are marked}{\Return{FALSE}\;}{Remove all markings. Starting from $Y$, run BFS. Whenever a node in $Z$
			is met, mark it if it is not already marked, and do not continue along
			that path. When BFS stops\;
			\eIf{not all nodes in $Z$ are marked}{\Return{FALSE}\;}{\Return{TRUE}\;}}\tikzmk{B}\boxit{red!30}
	}
\end{algorithm}

\subsection{Algorithms for Finding Minimal Separators}
In undirected graphs we have efficient methods of testing whether a separation set is minimal, which are based on the following criterion.
\begin{theorem}\label{thm2amp}
	Given two nodes $X$ and $Y$ in an undirected graph, a separating
	set $Z$ between $X$ and $Y$ is minimal if and only if for each node $u$ in $Z$, there
	is a path from $X$ to $Y$ which passes through $u$ and does not pass through any
	other nodes in $Z$.
\end{theorem}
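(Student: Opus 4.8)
The plan is to prove the biconditional by establishing each implication separately, relying only on the definition of a separating set (every $X$--$Y$ path meets the set) and of minimality (no proper subset is itself a separator). Both directions are short; the only real care needed is the set-theoretic bookkeeping that tracks exactly which vertices of $Z$ a given path is allowed to meet.

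For the forward implication ($\Rightarrow$), I would assume $Z$ is a minimal separating set between $X$ and $Y$ and fix an arbitrary $u \in Z$. Because $Z$ is minimal, the proper subset $Z \setminus \{u\}$ fails to separate $X$ from $Y$, so there exists a path $P$ from $X$ to $Y$ that avoids $Z \setminus \{u\}$. On the other hand $Z$ itself does separate $X$ from $Y$, so $P$ must meet $Z$; since $P$ meets none of $Z \setminus \{u\}$, the only vertex of $Z$ it can contain is $u$. Hence $P$ passes through $u$ and through no other vertex of $Z$, which is exactly the required witnessing path. As $u$ was arbitrary, this produces a witness for every node of $Z$.

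For the converse ($\Leftarrow$), I would argue by contradiction. Suppose the witnessing-path condition holds yet $Z$ is not minimal, so some proper subset $Z' \subsetneq Z$ still separates $X$ from $Y$. Choose any $u \in Z \setminus Z'$ and let $P$ be the hypothesized path through $u$ that avoids $Z \setminus \{u\}$. Since $u \notin Z'$ we have $Z' \subseteq Z \setminus \{u\}$, so $P$ avoids $Z'$ entirely; thus $P$ is an $X$--$Y$ path avoiding the separator $Z'$, contradicting that $Z'$ separates $X$ from $Y$. Therefore $Z$ must be minimal.

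I expect no genuine obstacle here: the statement is the classical combinatorial characterization of minimal vertex separators, and each direction is a two-line argument. The one place to stay alert is the containment $Z' \subseteq Z \setminus \{u\}$ in the converse, which hinges on choosing $u$ \emph{outside} $Z'$, and the dual observation in the forward direction that a path avoiding $Z \setminus \{u\}$ but forced to meet $Z$ must meet it precisely at $u$. If one wished to be fully rigorous about ``path'' meaning a simple path, I would insert the standard remark that any walk avoiding a given vertex set can be shortened to a simple path avoiding the same set, so the two notions are interchangeable for the purposes of separation.
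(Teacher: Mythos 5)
Your proof is correct and is the standard argument; note that the paper itself does not prove Theorem \ref{thm2amp} but simply defers to Theorem 5 of the cited reference \citep{tpp}, whose proof proceeds exactly as you do (minimality gives a path avoiding $Z\setminus\{u\}$ that must still meet $Z$, hence meets it only at $u$; conversely a witnessing path through $u\notin Z'$ avoids any proper separator $Z'\subseteq Z\setminus\{u\}$). Your added remark about shortening walks to simple paths is a harmless and reasonable precaution.
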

\begin{proof}
	See the proof of Theorem 5 in \citep{tpp}.
\end{proof}

Applying this theorem to the undirected graph described in Theorem \ref{thm1amp}, i.e.,  $(G_{ant(X\cup Y)})^a$, leads to Algorithm \ref{alg1amp} for Problem \ref{problem1amp}. The idea
is that if $Z$ is minimal then all nodes in $Z$ can be reached using Breadth
First Search (BFS) that starts from both $X$ and $Y$ without passing through any other
nodes in $Z$.

\textit{Analysis of Algorithm \ref{alg1amp} \citep{tpp}:} Let $H=G_{ant(X\cup Y)}$ and $|E_{H}^a|$ stands for the number of edges in $H^a=(G_{ant(X\cup Y)})^a$. Step 4-5 each requires $O(|E_{H}^a|)$ time. Thus, the complexity of Algorithm \ref{alg1amp} is $O(|E_{H}^a|)$.

\begin{remark}
[Characteristic operation and size measure]
The size measure used for graph algorithms in this paper is the sum of the number of vertices and the number of edges in a chain graph (for simplicity, in connected graphs, just the number of edges). 
This measure, which is used in algorithms textbooks (e.g.,~\citep{CLRS3rd}), is appropriate here, because the chain graph is given explicitly as an input.  In contrast, in heuristic search, it is usually assumed that a graph is constructed as it is searched, and the size measure that we chose would be inappropriate~\citep{Edelkamp2011,Pearl1984}.
\end{remark}
A variant of Algorithm \ref{alg1amp} solves Problem \ref{problem2amp}. Algorithm \ref{alg2amp} lists pseudocode for this variation.
\begin{algorithm}[ht]
\caption{Minimal separation (Problem \ref{problem2amp})}\label{alg2amp}
	\SetAlgoLined
	\KwIn{Two non-adjacent nodes $X, Y$ in the AMP chain graph $G$.}
	\KwOut{Set $Z$, that is a minimal separator for $X, Y$.}
	\tcc{\textcolor{blue}{Building the search space according to Theorem \ref{thm1amp}.}}
		\tikzmk{A}
	Construct $G_{ant(X\cup Y)}$\;
	Construct $(G_{ant(X\cup Y)})^a$\;
	\tikzmk{B}\boxit{green!50}
	Set $Z'$ to be $ne(X)$ (or $ne(Y)$) in $(G_{ant(X\cup Y)})^a$\;
	\tcc{$Z'$ is a separator because, according to the local Markov property of an undirected graph, a vertex is conditionally independent of all other vertices in the graph, given its neighbors \citep{l}.}
	\tcc{\textcolor{blue}{Applying Theorem \ref{thm2amp} by running BFS algorithm that starts from both $X$ and $Y$.}}
	\tikzmk{A}
	Starting from $X$, run BFS. Whenever a node in $Z'$
	is met, mark it if it is not already marked, and do not continue along
	that path. When BFS stops, let $Z''$ be the set of nodes which are marked. Remove all markings\;
	Starting from $Y$, run BFS. Whenever a node in $Z''$
	is met, mark it if it is not already marked, and do not continue along
	that path. When BFS stops, let  $Z$ be the set of nodes which are marked\;
	\tikzmk{B}\boxit{red!30}
	\Return{$Z$}\;
\end{algorithm}
\textit{Analysis of Algorithm \ref{alg2amp}:} Each one of steps 2-5 each requires $O(|E_{H}^a|)$ time. Thus, the overall complexity of Algorithm \ref{alg2amp} is $O(|E_{H}^a|)$.

\begin{theorem}\label{thm3amp}
	Given two nodes $X$ and $Y$ in an AMP chain graph $G$ and a set $S$ of nodes not
	containing $X$ and $Y$, there exists some subset of $S$ which separates $X$ and $Y$ if and only if the set $S'=S\cap ant(X\cup Y)$ separates $X$ and $Y$.
\end{theorem}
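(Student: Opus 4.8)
The statement is an equivalence whose ($\Leftarrow$) direction is immediate: if $S'=S\cap ant(X\cup Y)$ separates $X$ and $Y$, then $S'$ is already a subset of $S$ that does so, so such a subset exists. The plan is therefore to focus on ($\Rightarrow$) and to transport the whole question into the fixed undirected graph $H^a=(G_{ant(X\cup Y)})^a$ furnished by Theorem \ref{thm1amp}, where separation is ordinary undirected separation and hence monotone. Concretely, I would first record the equivalence that underlies everything: for any $W\subseteq ant(X\cup Y)$ disjoint from $X$ and $Y$, combining Proposition \ref{prop1amp} (which moves the conditioning set from $G$ to $H=G_{ant(X\cup Y)}$ once it lies inside $ant(X\cup Y)$) with the chain of equivalences established in the proof of Theorem \ref{thm1amp} shows that $\langle X,Y|W\rangle_G$ holds iff $W$ separates $X$ from $Y$ in $H^a$.

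For the forward direction, suppose some $Z\subseteq S$ satisfies $\langle X,Y|Z\rangle_G$. Among the subsets of $Z$ that separate $X$ from $Y$, choose one, $W$, minimal with respect to inclusion; it exists because $Z$ is finite and itself separates. No proper subset of $W$ separates $X$ from $Y$, so Proposition \ref{prop2amp} applies and gives $W\subseteq ant(X\cup Y)$; together with $W\subseteq Z\subseteq S$ this yields $W\subseteq S\cap ant(X\cup Y)=S'$. Because $W\subseteq ant(X\cup Y)$, the recorded equivalence converts $\langle X,Y|W\rangle_G$ into the assertion that $W$ separates $X$ from $Y$ in the undirected graph $H^a$. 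Since $W\subseteq S'\subseteq ant(X\cup Y)$ and $S'$ avoids $X$ and $Y$ (as $S$ does), monotonicity of undirected separation shows that $S'$ too separates $X$ from $Y$ in $H^a$. Applying the equivalence once more, now in the reverse direction and legitimately since $S'\subseteq ant(X\cup Y)$, returns $\langle X,Y|S'\rangle_G$, which is exactly the desired conclusion.

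The crux is the monotonicity step, because $p$-separation in an AMP chain graph is \emph{not} monotone in the conditioning set: enlarging the separator can open a chain through a triplex node, so one cannot argue directly in $G$ that a separating subset of $S'$ forces $S'$ itself to separate. The device that rescues the argument is precisely the reduction to the single augmented graph $H^a$, whose vertex set depends only on $X$ and $Y$ and in which separation is the classical undirected notion, where monotonicity is free. The remaining care is bookkeeping: checking that an inclusion-minimal separating subset of $Z$ meets the hypotheses of Proposition \ref{prop2amp}, and that $S'$ is disjoint from $\{X,Y\}$ so that it is an admissible separating set.
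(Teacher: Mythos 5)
Your proof is correct, but it follows a genuinely different route from the paper's. The paper proves the forward direction by contraposition with an explicit chain argument: assuming $S'$ fails to separate, it exhibits a chain between $X$ and $Y$ in $(G_{ant(X\cup Y)})^a$ that bypasses $S'$, observes that such a chain uses only vertices of $ant(X\cup Y)$ outside $S$ (since $S\setminus S'$ lies entirely outside $ant(X\cup Y)$), and then notes that this chain persists in $(G_{ant(X\cup Y\cup S'')})^a$ for every $S''\subseteq S$, so no subset of $S$ can separate. You instead argue directly: you shrink the assumed separating subset $Z\subseteq S$ to an inclusion-minimal separator $W$, invoke Proposition \ref{prop2amp} to place $W$ inside $ant(X\cup Y)$ and hence inside $S'$, transport to $(G_{ant(X\cup Y)})^a$ via the equivalences underlying Theorem \ref{thm1amp}, and finish with monotonicity of ordinary undirected separation. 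Your version buys a cleaner reuse of the machinery already established (Propositions \ref{prop1amp} and \ref{prop2amp} and the identity $S_H=S_H^a$ from the proof of Theorem \ref{thm1amp}) and makes explicit the one point where care is genuinely needed, namely that $p$-separation in $G$ is not monotone in the conditioning set and monotonicity must be harvested in the fixed augmented graph; the paper's version is more self-contained at the level of chains but leaves the passage from $\langle X,Y\not|S'\rangle_G$ to non-separation in $(G_{ant(X\cup Y)})^a$ somewhat implicit. Both arguments rest on the same underlying fact that $S\setminus S'$ is invisible to the relevant augmented graph, so the two proofs are interchangeable in the development.
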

\begin{proof}
	($\Rightarrow$) Proof by contradiction. Let $S'=S\cap ant(X\cup Y)$ and ${\langle X, Y \not| S'\rangle}$. Since $S'\subseteq ant(X\cup Y)$, it is obvious that $ant(X\cup Y\cup S')=ant(X\cup Y)$. So, $X$ and $Y$ are not separated by $S'$ in $(G_{ant(X\cup Y)})^a$, hence there is a chain $C$ between $X$ and $Y$ in $(G_{ant(X\cup Y)})^a$ that bypasses $S'$ i.e., the chain $C$ is formed from nodes in $ant(X\cup Y)$ that are outside of $S$. Since $ant(X\cup Y)\subseteq ant(X\cup Y\cup S'')), \forall S''\subseteq S$ , then $(G_{ant(X\cup Y)})^a$ is a subgraph of $(G_{ant(X\cup Y\cup S)})^a$. Then, the previously found chain $C$ is also a chain in $(G_{ant(X\cup Y\cup S''})^a$ that bypasses $S''$, which means that $X$ and $Y$ are not separated by any $S''\subseteq S$ in $(G_{ant(X\cup Y\cup S})^a$, which is a contradiction.
	
		\noindent ($\Leftarrow$) It is obvious.
\end{proof}
\begin{figure}[ht]
	\centering
		\includegraphics[width=.35\linewidth,page=2]{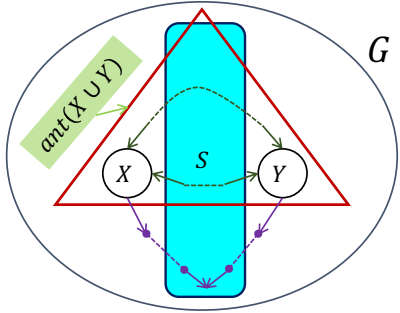}
		\captionof{figure}{Search space for finding a restricted minimal separating set $Z$ for $X$ and $Y$ in an AMP chain graph $G$, when a set of nodes $S$ not containing $X$ and $Y$ is given.}\label{fig:thm2amp}
\end{figure}
	
Informally, search space of finding a restricted minimal separating set $Z$ for $X$ and $Y$ in an AMP chain graph $G$, when a set of nodes $S$ not containing $X$ and $Y$ is given, is limited to $ant(X\cup Y)$, as shown in Figure \ref{fig:thm2amp}.
Therefore, Problem \ref{problem3amp} is solved by testing if $S'=S\cap ant(X\cup Y)$ separates $X$ and $Y$.
\begin{algorithm}[ht]
    \caption{Restricted separation (Problem \ref{problem3amp})}\label{alg3amp}
	\SetAlgoLined
	\KwIn{A set $S$ of nodes not containing $X$ and $Y$ in the AMP chain graph $G$.}
	\KwOut{If there is a subset of $S$ that separates $X$ from $Y$ then the algorithm returns $Z\subseteq S$ that separates $X$ from $Y$ otherwise, returns FALSE.}
	\tcc{\textcolor{blue}{Building the search space according to Theorem \ref{thm3amp}.}}
	\tikzmk{A}
	Construct $G_{ant(X\cup Y)}$\;
	Construct $(G_{ant(X\cup Y)})^a$\;
	Set $S'=S\cap ant(X\cup Y)$\;
	\tikzmk{B}\boxit{green!50}
	Remove $S'$ from $(G_{ant(X\cup Y)})^a$\;
	\tcc{\textcolor{blue}{Using BFS algorithm to test the separability of the candidate set $S'$.}}
	\tikzmk{A}
	Starting from $X$, run BFS\;
	\eIf{$Y$ is met}{\Return{FALSE}}{\Return{$Z=S'$}}
	\tikzmk{B}
	\boxit{red!50}
\end{algorithm}
\textit{Analysis of Algorithm \ref{alg3amp}:} This requires $O(|E_{H}^a|)$ time.

According to Theorem \ref{thm3amp}, Problem \ref{problem4amp} is solved using Algorithm \ref{alg3amp} and then, if False not returned, Algorithm \ref{alg2amp} with $Z'=S\cap ant(X\cup Y)$. The time complexity of this algorithm is also $O(|E_{H}^a|)$.

In order to solve Problem \ref{problem5amp}, i.e., to find the minimal set separating two disjoint non-adjacent subsets of nodes $X$ and $Y$ (instead
of two single nodes) in an AMP chain graph $G$, first we build the undirected graph $(G_{ant(X\cup Y)})^a$. Next, starting out from this graph, we construct a new undirected graph $Aug(G:\alpha_X,\alpha_Y)$ by adding two artificial (dummy) nodes $\alpha_X,\alpha_Y$, and connect them to those nodes that
are adjacent to some node in $X$ and $Y$, respectively. So, the separation of $X$ and $Y$ in $(G_{ant(X\cup Y)})^a$ is equivalent to the separation of $\alpha_X$ and $\alpha_Y$ in $Aug(G:\alpha_X,\alpha_Y)$. Moreover, the minimal separating set for $\alpha_X$ and $\alpha_Y$ in $Aug(G:\alpha_X,\alpha_Y)$ cannot contain nodes from $(X\cup Y)$. Therefore,
in order to find the minimal separating set for $X$ and $Y$ in $G$, it is suffice to find the minimal separating set for $\alpha_X$ and $\alpha_Y$ in $Aug(G:\alpha_X,\alpha_Y)$. So,
we have reduced this problem to one of separation for single nodes, which can be solved using Algorithm \ref{alg2amp}.

Shen and Liang in \citep{shenliang} presents an efficient algorithm for enumerating all minimal $(X,Y)$-separators, separating given non-adjacent vertices $X$ and $Y$ in an undirected connected simple graph $G = (V, E)$. This algorithm requires $O(n^3R_{XY})$ time, where $|V|=n$ and $R_{XY}$ is the number of minimal $(X,Y)$-separators. The algorithm can be generalized for enumerating all minimal $(X,Y)$-separators that separate non-adjacent vertex sets $X,Y\subseteq V$, and it requires $O(n^2(n-n_X-n_Y)R_{XY})$ time. In this case, $|X|=n_X$, $|Y|=n_Y$, and $R_{XY}$ is the number of all minimal $(X,Y)$-separators. According to Theorem \ref{thm1amp}, using this algorithm for $(G_{ant(X\cup Y)})^a$ solves Problem \ref{problem6amp}.
\begin{remark}
	Since DAGs (directed acyclic graphs) are subclass of AMP chain graphs, one can use the
	same technique to enumerate all minimal separators in DAGs.
\end{remark}

\section{\opc~Algorithm}\label{sec:pcalg}
In this section we explain the original \opc~algorithm proposed in \citep{penea12amp} briefly, and we show that this version of the \opc~algorithm is order-dependent,
in the sense that the output can depend on the order in which the variables are given. We propose modifications of the \opc~algorithm that remove (part or all of) this order-dependence.
\subsection{Order-Dependent \opc~algorithm}
The \opc~algorithm for learning AMP CGs under the faithfulness assumption proposed in \citep{penea12amp} is formally described in Algorithm \ref{algAMPoriginalPC} for the reader's convenience.

\begin{algorithm}[ht]
\caption{The order-dependent \opc~algorithm for learning AMP chain graphs \citep{penea12amp}}\label{algAMPoriginalPC}
	\SetAlgoLined
	\small\KwIn{A set $V$ of nodes and a probability distribution $p$ faithful to an unknown AMP CG $G$ and an ordering order($V$) on the variables.}
	\KwOut{A CG $H$ that is triplex equivalent to $G$.}
    Let $H$ denote the complete undirected graph over $V$\;
    \small\tcc{\textcolor{blue}{Skeleton Recovery}}
    \tikzmk{A}
\For{$i\gets 0$ \KwTo $|V_H|-2$}{
        \While{possible}{
            Select any ordered pair of nodes $u$ and $v$ in $H$ such that $u\in ad_H(v)$ and $|[ad_H(u)\cup ad_H(ad_H(u))]\setminus \{u,v\}|\ge i$, using order($V$);
            \tcc{$ad_H(x):=\{y\in V| x\erelbar{01} y, y\erelbar{01} x, \textrm{ or }x\erelbar{00}y\}$}
            \If{\textrm{there exists $S\subseteq ([ad_H(u)\cup ad_H(ad_H(u))]\setminus \{u,v\})$ s.t. $|S|=i$ and $u\perp\!\!\!\perp_p v|S$ (i.e., $u$ is independent of $v$ given $S$ in the probability distribution $p$)}}{
                Set $S_{uv} = S_{vu} = S$\;
                Remove the edge $u\erelbar{00} v$ from $H$\;
            }
        }
    }
    \tikzmk{B}
 \boxit{yellow}
    \tcc{\textcolor{blue}{Orientation phase:}}
    \tikzmk{A}
    \While{possible}{
        Apply rules R1-R4 in Figure \ref{fig:rules_ampcgs} to $H$.
    }
    Replace every edge $\erelbar{20}$ ($\erelbar{22}$) in $H$ with $\erelbar{01}$ ($\erelbar{00}$)\;
    \tikzmk{B}
 \boxit{pink}
 return $H$.
\end{algorithm}

In applications we do not have perfect conditional independence information.
Instead, we assume that we have an i.i.d. sample of size $n$ of $V = (X_1,\dots,Xp)$. In the \opc~algorithm \citep{penea12amp} all conditional independence queries are estimated by statistical conditional independence tests at some pre-specified significance level (p.value) $\alpha$. For example, if the distribution of $V$ is multivariate Gaussian, one can test for zero partial correlation, see, e.g., \citep{Kalisch07}. For this purpose, we used the $\mathsf{gaussCItest()}$ function from the \textsf{R} package \href{https://cran.r-project.org/web/packages/pcalg}{\textsf{pcalg}} throughout this paper. Let order($V$) denote an ordering on the variables in $V$. We now consider the role of
order($V$) in every step of the algorithm.

In the skeleton recovery phase of the \opc~algorithm \citep{penea12amp,PENA2016MAMP} (lines 2-10 of Algorithm \ref{algAMPoriginalPC}), the order of variables affects the estimation of the skeleton and the separating sets. In
particular, at each level of $i$, the order of variables determines the order in which pairs of adjacent
vertices and subsets $S$ of their adjacency sets are considered (see lines 4 and 5 in Algorithm \ref{algAMPoriginalPC}). The skeleton $H$ is updated after each edge removal. Hence, the adjacency sets typically
change within one level of $i$, and this affects which other conditional independencies are
checked, since the algorithm only conditions on subsets of the adjacency sets. When we have perfect conditional independence information,  all orderings on the variables lead to the same output. In the sample version, however, we typically make
mistakes in keeping or removing edges. In such cases, the resulting changes in the adjacency
sets can lead to different skeletons, as illustrated in Example \ref{ex1OrderDepAMP}.

Moreover, different variable orderings can lead to different separating sets in the skeleton recovery phase.
When we have perfect conditional independence information, this is not important, because any valid separating set leads to the
correct triplex decision in the orientation phase. In the sample version, however, different separating
sets in the skeleton recovery phase of the algorithm may yield different decisions about triplexes in the orientation phase (lines 12-15 of Algorithm \ref{algAMPoriginalPC}).
This is illustrated in Example \ref{ex2OrderDepAMP}. The examples were encountered when testing the \opc~algorithm by generating synthesized samples from the DAGs in Figure~\ref{fig:OrderDepex1amp}(a) and~\ref{fig:OrderDepex2amp}(a).

\begin{example}[Order-dependent skeleton of the \opc~algorithm.]\label{ex1OrderDepAMP}
Suppose that the distribution of $V = \{a,b,c,d,e\}$ is faithful to the DAG in Figure
\ref{fig:OrderDepex1amp}(a). This DAG encodes the following conditional independencies with minimal separating
sets: $b\perp\!\!\!\perp c|a$ and $a\perp\!\!\!\perp e|\{b,c,d\}$.

Suppose that we have an i.i.d. sample of $(a,b,c,d,e)$, and that the following
conditional independencies with minimal separating sets are judged to hold at some significance level $\alpha$: $b\perp\!\!\!\perp c|a$, $a\perp\!\!\!\perp e|d$,$a\perp\!\!\!\perp b|d$, $a\perp\!\!\!\perp c|d$, $b\perp\!\!\!\perp d|e$, and $c\perp\!\!\!\perp d|e$. Thus, the first conditional independence relation is correct, while the rest of them are false.

We now apply the skeleton recovery phase of the \opc~algorithm with two different orderings: $\textrm{order}_1(V)=(d,c,b,a,e)$ and $\textrm{order}_2(V)=(d,e,a,c,b)$. The resulting skeletons are shown in Figures \ref{fig:OrderDepex1amp}(b) and \ref{fig:OrderDepex1amp}(c), respectively. 
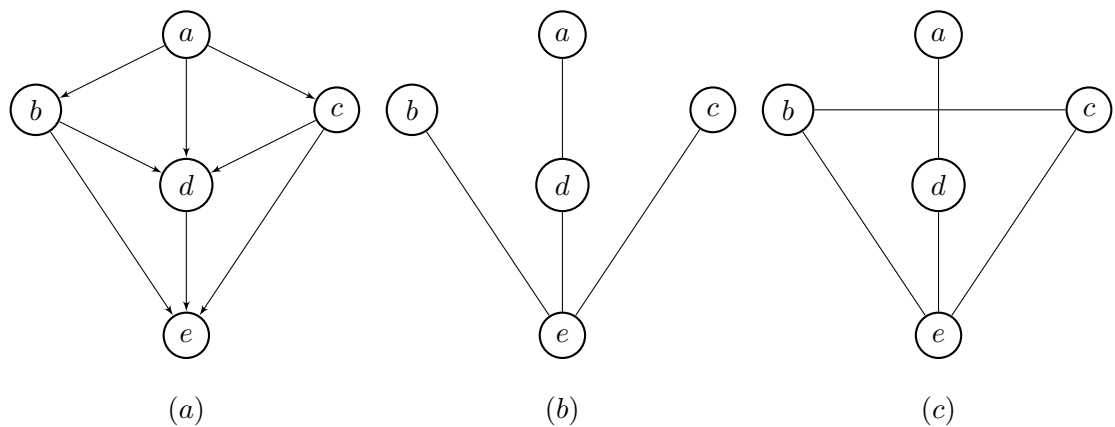
\begin{figure}[ht]
    \centering
	\[\begin{tikzpicture}[transform shape]
	\tikzset{vertex/.style = {shape=circle,draw,minimum size=1em}}
	\tikzset{edge/.style = {->,> = latex'}}
	\node[vertex,thick] (o) at  (0,0) {$e$};
	\node[vertex,thick] (p) at  (0,2) {$d$};
	\node[vertex,thick] (q) at  (0,4) {$a$};
	\node[vertex,thick] (r) at  (-2,3) {$b$};
	\node[vertex,thick] (s) at  (2,3) {$c$};
	\node (t) at (0,-1) {$(a)$};
	\draw[edge] (q) to (r);
	\draw[edge] (q) to (s);
	\draw[edge] (q) to (p);
	\draw[edge] (r) to (p);
	\draw[edge] (r) to (o);
	\draw[edge] (s) to (p);
	\draw[edge] (s) to (o);
	\draw[edge] (p) to (o);
	
	\node[vertex,thick] (i) at  (5,0) {$e$};
	\node[vertex,thick] (j) at  (5,2) {$d$};
	\node[vertex,thick] (k) at  (5,4) {$a$};
	\node[vertex,thick] (l) at  (3,3) {$b$};
	\node[vertex,thick] (m) at  (7,3) {$c$};
	\node (n) at (5,-1) {$(b)$};
	\draw (k) to (j);
	\draw (j) to (i);
	\draw (i) to (l);
	\draw (i) to (m);
	
	\node[vertex,thick] (e) at  (10,0) {$e$};
	\node[vertex,thick] (d) at  (10,2) {$d$};
	\node[vertex,thick] (a) at  (10,4) {$a$};
	\node[vertex,thick] (b) at  (8,3) {$b$};
	\node[vertex,thick] (c) at  (12,3) {$c$};
	\node (f) at (10,-1) {$(c)$};
	\draw (a) to (d);
	\draw (e) to (d);
	\draw (e) to (b);
	\draw (e) to (c);
	\draw (b) to (c);
	\end{tikzpicture}\]
    \caption{Order-dependent skeleton of the \opc~algorithm. (a) The DAG $G$, (b) the skeleton returned by  Algorithm \ref{algAMPoriginalPC} with $\textrm{order}_1(V)$, (c) the skeleton returned by  Algorithm \ref{algAMPoriginalPC} with $\textrm{order}_2(V)$. 
}
    \label{fig:OrderDepex1amp}
\end{figure}

We see that the skeletons are different, and that both are
incorrect as the edges $a\erelbar{00}b, a\erelbar{00}c, b\erelbar{00}d,$ and $c\erelbar{00}d$ are missing. The skeleton for $\textrm{order}_2(V)$ contains an additional
error, as there is an additional edge $b\erelbar{00}c$. We now go through Algorithm \ref{algAMPoriginalPC} to see what happened. We start with a complete undirected graph on $V$. When $i= 0$, variables are tested for marginal independence, and the algorithm correctly does not remove any edge. When $i=1$, there are six pairs of vertices that are
thought to be conditionally independent given a subset of size one. Table \ref{t1OrderDepAMPex1}
shows the trace table of Algorithm \ref{algAMPoriginalPC} for $i=1$ and $\textrm{order}_1(V)=(d,c,b,a,e)$.
\begin{table}[!ht]
\caption{The trace table of Algorithm \ref{algAMPoriginalPC} for $i=1$ and $\textrm{order}_1(V)=(d,c,b,a,e)$. For simplicity, we define $ADJ_H(u):=[ad_H(u)\cup ad_H(ad_H(u))]\setminus \{u,v\}$.}\label{t1OrderDepAMPex1}
\centering
\begin{tabular}{c|c|c|c|c}
 Ordered Pair $(u,v)$& $ADJ_H(u)$ & $S_{uv}$&Is $S_{uv}\subseteq ADJ_H(u)$?& Is $u\erelbar{00} v$ removed?\\
\midrule
\midrule
   $(d,c)$ & $\{a,b,e\}$&$\{e\}$&	Yes&	Yes\\%
    \midrule
  $(d,b)$  &$\{a,c,e\}$&$\{e\}$&	Yes&	Yes \\
\midrule
$(c,b)$ &$\{a,d,e\}$ &$\{a\}$&Yes&	Yes\\
\midrule
$(c,a)$ &$\{b,d,e\}$&$\{d\}$&Yes&Yes\\
\midrule
   $(b,a)$ &$\{c,d,e\}$&$\{d\}$&Yes&Yes\\
    \midrule
  $(a,e)$  &$\{d\}$&$\{d\}$&Yes&Yes\\
\bottomrule
\end{tabular}
\end{table}

Table \ref{t2OrderDepAMPex1}
shows the trace table of Algorithm \ref{algAMPoriginalPC} for $i=1$ and $\textrm{order}_2(V)=(d,e,a,c,b)$.
\begin{table}[!ht]
\caption{The trace table of Algorithm \ref{algAMPoriginalPC} for $i=1$ and $\textrm{order}_2(V)=(d,e,a,c,b)$. For simplicity, we define $ADJ_H(u):=[ad_H(u)\cup ad_H(ad_H(u))]\setminus \{u,v\}$.}\label{t2OrderDepAMPex1}
\centering
\begin{tabular}{c|c|c|c|c}
 Ordered Pair $(u,v)$& $ADJ_H(u)$ & $S_{uv}$&Is $S_{uv}\subseteq ADJ_H(u)$?& Is $u\erelbar{00} v$ removed?\\
\midrule
\midrule
   $(d,c)$ & $\{a,b,e\}$&$\{e\}$&	Yes&	Yes\\%
    \midrule
  $(d,b)$  &$\{a,c,e\}$&$\{e\}$&	Yes&	Yes \\
\midrule
$(e,a)$ &$\{b,c,d\}$ &$\{d\}$&Yes&	Yes\\
\midrule
$(a,c)$ &$\{b,d,e\}$&$\{d\}$&Yes&Yes\\
\midrule
   $(a,b)$ &$\{d,e\}$&$\{d\}$&Yes&Yes\\
    \midrule
  $(c,b)$  &$\{d,e\}$&$\{a\}$&No&No\\
  \midrule
  $(b,c)$  &$\{c,e\}$&$\{a\}$&No&No\\
\bottomrule
\end{tabular}
\end{table}

No conditional independency is found when $i=2.$
\end{example}

\begin{example}[Order-dependent separators \& triplexes of the \opc~algorithm.]\label{ex2OrderDepAMP}
Suppose that the distribution of $V = \{a,b,c,d,e\}$ is faithful to the DAG in Figure
\ref{fig:OrderDepex2amp}(a). This DAG encodes the following conditional independencies with minimal separating sets: $a\perp\!\!\!\perp d|b, a\perp\!\!\!\perp e|\{b,c\}, a\perp\!\!\!\perp e|\{c,d\}, b\perp\!\!\!\perp c, b\perp\!\!\!\perp e|d,$ and $c\perp\!\!\!\perp d$.

Suppose that we have an i.i.d. sample of $(a,b,c,d,e)$. Assume that all true conditional independencies are judged to hold except $c\perp\!\!\!\perp d$. Suppose that $c\perp\!\!\!\perp d|b$ and $c\perp\!\!\!\perp d|e$ are thought to hold. Thus, the first is correct, while the second is false. We now apply the orientation phase of the \opc~algorithm with two different orderings: $\textrm{order}_1(V)=(d,c,b,a,e)$ and $\textrm{order}_3(V)=(c,d,e,a,b)$. The resulting CGs are shown in Figures \ref{fig:OrderDepex2amp}(b) and \ref{fig:OrderDepex2amp}(c), respectively. Note that while the separating set for vertices $c$ and $d$ with $\textrm{order}_1(V)$ is $S_{dc}=S_{cd}=\{b\}$, the separating set for them with $\textrm{order}_2(V)$ is $S_{cd}=S_{dc}=\{e\}$.
\begin{figure}[!htbp]
    \centering
	\[\begin{tikzpicture}[transform shape]
	\tikzset{vertex/.style = {shape=circle,draw,minimum size=1em}}
	\tikzset{edge/.style = {->,> = latex'}}
	\node[vertex,thick] (o) at  (1,0.5) {$e$};
	\node[vertex,thick] (p) at  (-1,0.5) {$d$};
	\node[vertex,thick] (q) at  (0,3) {$a$};
	\node[vertex,thick] (r) at  (-1.5,2) {$b$};
	\node[vertex,thick] (s) at  (1.5,2) {$c$};
	\node (t) at (0,-.5) {$(a)$};
	\draw[edge,thick] (r) to (q);
	\draw[edge,thick] (s) to (q);
	\draw[edge,thick] (r) to (p);
	\draw[edge,thick] (s) to (o);
	\draw[edge,thick] (p) to (o);
	
	\node[vertex,thick] (i) at  (5,0.5) {$e$};
	\node[vertex,thick] (j) at  (3,0.5) {$d$};
	\node[vertex,thick] (k) at  (4,3) {$a$};
	\node[vertex,thick] (l) at  (2.5,2) {$b$};
	\node[vertex,thick] (m) at  (5.5,2) {$c$};
	\node (n) at (4,-.5) {$(b)$};
	\draw[edge,thick] (l) to (k);
	\draw[edge,thick] (j) to (i);
	\draw[edge,thick] (m) to (k);
	\draw[edge,thick] (m) to (i);
	\draw[thick] (j) to (l);
	
	\node[vertex,thick] (e) at  (9,0.5) {$e$};
	\node[vertex,thick] (d) at  (7,0.5) {$d$};
	\node[vertex,thick] (a) at  (8,3) {$a$};
	\node[vertex,thick] (b) at  (6.5,2) {$b$};
	\node[vertex,thick] (c) at  (9.5,2) {$c$};
	\node (f) at (8,-.5) {$(c)$};
	\draw[edge,thick] (b) to (a);
	\draw[edge,thick] (c) to (a);
	\draw[thick] (e) to (d);
	\draw[thick] (e) to (c);
	\draw[thick] (b) to (d);
	\end{tikzpicture}\]
    \caption{Order-dependent separators and triplexes of the \opc~algorithm. (a) The DAG $G$, (b) the CG returned by  Algorithm \ref{algAMPoriginalPC} with $\textrm{order}_1(V)$, (c) the CG returned by  Algorithm \ref{algAMPoriginalPC} with $\textrm{order}_3(V)$. 
}\label{fig:OrderDepex2amp}
\end{figure}
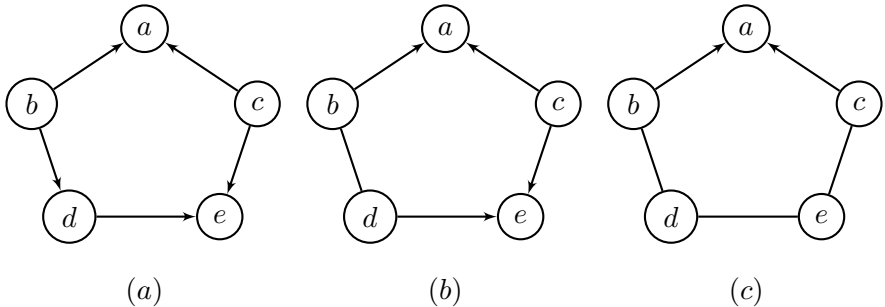

This illustrates that order-dependent separating sets in the skeleton recovery phase of the sample version of the PC-algorithm can lead to order-dependent triplexes in the orientation phase of the algorithm.
\end{example}

\subsection{Order-Independent ( {\sc Stable-}) \opc~Algorithm}
As shown in the previous section, the original \opc~algorithm is order-dependent. In this section we propose modifications of the \opc~algorithm, i.e., the \textbf{Stable} \textbf{PC}-like for \textbf{AMP} chain graphs (\spc), \textbf{Conservative} \textbf{PC}-like for \textbf{AMP} CGs (\cpc), and \scpc~for learning the structure of AMP chain graphs under the faithfulness assumption that remove part or all of the order-dependence. The order-dependence can become very
problematic for high-dimensional data, leading to highly variable results and conclusions
for different variable orderings. The second limitation of the \opc~algorithm is that the runtime of the algorithm, in the worst case, is exponential to the number of variables, and thus it is inefficient when applying to high dimensional datasets such as gene expression. We now propose several modifications of the original \opc~algorithm for learning AMP chain graphs (and hence also of the related algorithms), called \textit{stable \opc},  that remove the order-dependence in the various stages of the algorithm, analogously to what~\citep{Colombo2014} did for the original PC algorithm in the case of DAGs. The stable \opc~algorithm for AMP chain graphs can be used to parallelize the conditional independence (CI) tests at each
level of the skeleton recovery algorithm. So, the CI tests at each level can be grouped and distributed over different cores of the computer, and
the results can be integrated at the end of each level. Consequently, the runtime of our parallelized stable \opc~algorithm is much shorter than the original \opc~algorithm for learning AMP chain graphs. Furthermore, this approach enjoys the advantage of knowing the number of CI tests of each level in advance. This allows the CI tests to be evenly distributed over different cores, so that the parallelized algorithm can achieve maximum possible speedup.
In order to explain the details of the stable \opc~algorithm for AMP chain graphs, we discuss the skeleton and the orientation rules, respectively.

We first consider estimation of the skeleton in the adjacency search (skeleton recovery phase) of the \opc~algorithm for AMP chain graphs (lines 2-10 of Algorithm \ref{algAMPoriginalPC}). The pseudocode for our modification is given in Algorithm \ref{algAMPstablePC} (lines 2-13). The resulting \opc~algorithm for learning AMP chain graphs in Algorithm \ref{algAMPstablePC} is called \textit{\textbf{Stable} \textbf{PC}-like for \textbf{AMP} CGs (\spc)}.

\begin{algorithm}[ht]
\caption{The order-independent ( {\sc Stable-}) \opc~algorithm for learning AMP CGs (\spc)}\label{algAMPstablePC}
	\SetAlgoLined
	\small\KwIn{A set $V$ of nodes and a probability distribution $p$ faithful to an unknown AMP CG $G$ and an ordering order($V$) on the variables.}
	\KwOut{A CG $H$ that is triplex equivalent to $G$.}
    Let $H$ denote the complete undirected graph over $V=\{v_1,\dots,v_n\}$\;
    \small\tcc{\textcolor{blue}{Skeleton Recovery:}}
    \tikzmk{A}
\For{$i\gets 0$ \KwTo $|V_H|-2$}{
    \For{$j\gets 1$ \KwTo $|V_H|$}{
        Set $a_H(v_j)=ad_H(v_j)\cup ad_H(ad_H(v_j))$\;
        \tcc{$ad_H(x):=\{y\in V| x\erelbar{01} y, y\erelbar{01} x, \textrm{ or }x\erelbar{00}y\}$}
    }
        \While{possible}{
            Select any ordered pair of nodes $u$ and $v$ in $H$ such that $u\in ad_H(v)$ and $|a_H(u)\setminus \{u,v\}|\ge i$, using order($V$);
            
            \If{\textrm{there exists $S\subseteq (a_H(u)\setminus \{u,v\})$ s.t. $|S|=i$ and $u\perp\!\!\!\perp_p v|S$ (i.e., $u$ is independent of $v$ given $S$ in the probability distribution $p$)}}{
                Set $S_{uv} = S_{vu} = S$\;
                Remove the edge $u\erelbar{00} v$ from $H$\;
            }
        }
    }
    \tikzmk{B}
 \boxit{orange}
    \tcc{\textcolor{blue}{Orientation phase:}}
    \tikzmk{A}
    \While{possible}{
        Apply rules R1-R4 in Figure \ref{fig:rules_ampcgs} to $H$.
    }
    Replace every edge $\erelbar{20}$ ($\erelbar{22}$) in $H$ with $\erelbar{01}$ ($\erelbar{00}$)\;
    \tikzmk{B}
 \boxit{pink}
 return $H$.
\end{algorithm}
\begin{figure}[ht]
    \centering
    \includegraphics[width=.75\linewidth]{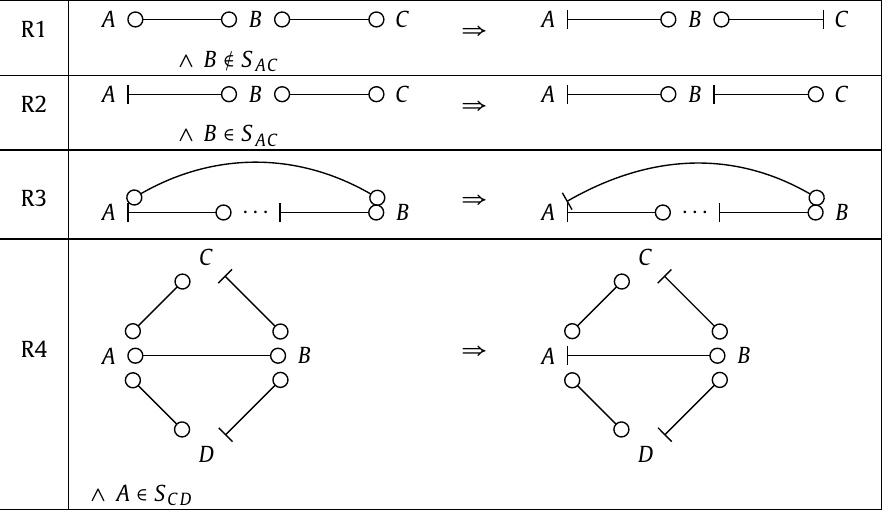}
    \caption{Orientation rules in Algorithms \ref{algAMPoriginalPC} and \ref{algAMPstablePC}: Rules R1-R4 \citep{penea12amp}}
    \label{fig:rules_ampcgs}
\end{figure}

The main difference between Algorithms \ref{algAMPoriginalPC} and \ref{algAMPstablePC} is given by the for-loop on lines
3-5 in the latter one, which computes and stores the adjacency sets $a_H(v_i)$ of all variables
after each new size $i$ of the conditioning sets. These stored adjacency sets $a_H(v_i)$ are used
whenever we search for conditioning sets of this given size $i$. Consequently, an edge deletion
on line 10 no longer affects which conditional independencies are checked for other pairs of
variables at this level of $i$.

In other words, at each level of $i$, Algorithm \ref{algAMPstablePC} records which edges should be removed, but for the purpose of the adjacency sets it removes these edges only when it goes to the
next value of $i$. Besides resolving the order-dependence in the estimation of the skeleton,
our algorithm has the advantage that it is easily parallelizable at each level of $i$   i.e., computations required for $i$-level can be performed in parallel.
The \spc~algorithm is 
correct, i.e. it returns an AMP CG the given probability distribution is faithful to (Theorem \ref{thm:correctnessPCstable}), and yields order-independent skeletons in the sample version (Theorem \ref{thm:stableskeleton}). We illustrate the algorithm in Example \ref{ex:stableskeletons}.

\begin{theorem}\label{thm:correctnessPCstable}
    Let the distribution of $V$ be faithful to an AMP CG $G$, and assume that we are given perfect conditional independence information about all pairs of variables $(u,v)$ in $V$ given subsets $S\subseteq V\setminus \{u,v\}$. Then the output of the \spc~algorithm is an AMP CG that is Markov equivalent with $G$.
\end{theorem}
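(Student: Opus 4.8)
The plan is to reduce the correctness of \spc~to the correctness of the original \opc~algorithm under perfect conditional independence information, which is established in \citep{penea12amp}. The only difference between Algorithm \ref{algAMPstablePC} and Algorithm \ref{algAMPoriginalPC} lies in the bookkeeping of the adjacency sets during skeleton recovery: the stable version freezes the sets $a_H(v_j)=ad_H(v_j)\cup ad_H(ad_H(v_j))$ at the start of each level $i$ (lines 3--5), whereas the original version updates them after every single edge deletion. The orientation phase (the while-loop applying rules R1--R4 and the final edge replacement) is literally identical. Since two AMP CGs are Markov equivalent if and only if they have the same skeleton and the same triplexes \citep{AMP2001}, it suffices to prove that (i) \spc~recovers exactly the skeleton of $G$, and (ii) the separating sets it records drive R1--R4 to the same triplex decisions as \opc.

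For (i) I would first note that under perfect CI information neither algorithm ever deletes an edge between vertices adjacent in $G$: by faithfulness such a pair is dependent given every conditioning set, so no separating set is found and the edge survives. It remains to show that every non-edge of $G$ is deleted. The key is a monotonicity property of the frozen sets: because true edges are never removed, at every stage $ad_G(u)\subseteq ad_H(u)$, and hence $ad_G(u)\cup ad_G(ad_G(u))\subseteq a_H(u)$ throughout the run. The structural fact underpinning the correctness of \opc~guarantees that for any non-adjacent pair $u,v$ there is a separating set $S$ with $u\perp\!\!\!\perp v\mid S$ and $S\subseteq [ad_G(u)\cup ad_G(ad_G(u))]\setminus\{u,v\}$ (or the symmetric set for $v$). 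Setting $i_0=|S|$, the superset property ensures $S\subseteq a_H(u)\setminus\{u,v\}$ is inside the frozen search space at level $i_0$; since the edge $u\erelbar{00}v$ cannot have been removed for the wrong reason at any earlier level, the test $u\perp\!\!\!\perp v\mid S$ is performed and the edge is deleted by level $i_0$ at the latest. Thus the recovered skeleton equals that of $G$.

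For (ii), once the skeleton coincides with that of $G$ and a valid separating set $S_{uv}$ has been stored for each deleted edge, the orientation phase is verbatim that of \opc. Under perfect CI information every stored $S_{uv}$ genuinely $p$-separates $u$ and $v$, so for any unshielded triple $u\erelbar{00}w\erelbar{00}v$ the membership $w\in S_{uv}$ is determined independently of which valid separator happened to be chosen; hence rules R1--R4 reach exactly the triplex decisions they reach in \opc. Invoking the correctness of \opc~then yields that the output of \spc~is an AMP CG with the skeleton and triplexes of $G$, i.e., Markov equivalent to $G$.

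The main obstacle is establishing (i) cleanly, specifically verifying that freezing the adjacency sets can only \emph{enlarge} the examined conditioning sets relative to the true-graph neighborhoods, so that no required separating set is ever excluded from the search. Everything hinges on the inclusion $ad_G(u)\cup ad_G(ad_G(u))\subseteq a_H(u)$ holding at all times; once that monotonicity is in hand, the remainder is routine. I emphasize that the per-level stability of the skeleton under reordering (Theorem \ref{thm:stableskeleton}) is a distinct statement about the sample version and is not needed for this exact-information correctness result.
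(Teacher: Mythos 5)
Your proposal is correct and follows essentially the same route as the paper, which simply states that the proof is completely analogous to that of Theorem 1 for the original \opc~algorithm in \citep{penea12amp}. You have merely made explicit the one observation that justifies the analogy—namely that freezing the adjacency sets preserves the inclusion $ad_G(u)\cup ad_G(ad_G(u))\subseteq a_H(u)$, so no required separating set is ever excluded from the search space—which the paper leaves implicit.
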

\begin{proof}
    The proof of Theorem \ref{thm:correctnessPCstable} is completely analogous to the proof of Theorem 1 for the original \opc~algorithm in \citep{penea12amp}.
\end{proof}

\begin{theorem}\label{thm:stableskeleton}
    The skeleton resulting from the sample version of the \spc~algorithm for AMP CGs is order-independent.
\end{theorem}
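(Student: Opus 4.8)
The plan is to prove order-independence by induction on the level $i$ of the skeleton-recovery phase, showing that the undirected graph present at the \emph{start} of each level is the same for every ordering of $V$. The crucial structural feature of the \spc~algorithm that makes this possible is the for-loop on lines 3--5 of Algorithm \ref{algAMPstablePC}, which freezes the adjacency sets $a_H(v_j)=ad_H(v_j)\cup ad_H(ad_H(v_j))$ before any edge is deleted at level $i$. All conditioning sets examined at level $i$ are drawn from these frozen sets, so the test applied to any pair is insensitive to edge deletions that happen to occur earlier in the same level.

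For the base case, before level $i=0$ the graph $H$ is the complete undirected graph on $V$, which does not depend on order($V$). For the inductive step, suppose that the graph $H_i$ entering level $i$ is identical for two orderings $\sigma$ and $\tau$. Then the frozen adjacency sets $a_{H_i}(v_j)$ computed on lines 3--5 coincide for $\sigma$ and $\tau$. Now fix an unordered pair $\{u,v\}$ adjacent in $H_i$. The edge $u\erelbar{00}v$ is deleted during level $i$ if and only if there is a set $S$ with $|S|=i$ and $S\subseteq (a_{H_i}(u)\setminus\{u,v\})$ or $S\subseteq (a_{H_i}(v)\setminus\{u,v\})$ such that the CI test accepts $u\perp\!\!\!\perp_p v\mid S$; this is exactly the disjunction over the two ordered pairs $(u,v)$ and $(v,u)$ that the \texttt{while} loop eventually examines. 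Because the frozen sets and the (data-determined) outcomes of the CI tests do not depend on order($V$), this condition has the same truth value under $\sigma$ and $\tau$. Hence the set of edges removed at level $i$ is the same, and the graph $H_{i+1}$ entering level $i+1$ is again order-independent, closing the induction. Iterating up to $i=|V_H|-2$ shows that the final skeleton is order-independent; the edge-mark replacement on the last line does not touch the skeleton.

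The step I expect to require the most care is verifying that, within a single level, the deletion of one edge neither suppresses nor forces the testing of any other pair, so that \emph{every} pair adjacent at the start of the level is effectively tested against the frozen conditioning sets. The point is that the guard $u\in ad_H(v)$ on line 7 refers only to the direct $u\erelbar{00}v$ edge in the current $H$, whereas the conditioning candidates are taken from the frozen $a_H$; thus removing an edge $w\erelbar{00}z$ can only stop that same pair $\{w,z\}$ from being revisited and cannot alter the conditioning sets or the adjacency status used for any other pair. Consequently the ``\texttt{while} possible'' loop, regardless of the order in which it selects pairs, deletes precisely the edges satisfying the level-$i$ deletion criterion above, and this criterion is manifestly order-free. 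This argument mirrors the reasoning of \citep{Colombo2014} for the stable PC algorithm in the DAG setting, and it relies on none of the correctness hypotheses of Theorem \ref{thm:correctnessPCstable}, since the deletion decisions are deterministic functions of the frozen adjacency sets and the fixed test outcomes whether or not those outcomes are correct.
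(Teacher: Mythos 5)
Your proof is correct and follows essentially the same route as the paper's: both hinge on the observation that the for-loop on lines 3--5 freezes the adjacency sets $a_H(\cdot)$ at the start of each level, so the decision to delete an edge $u\erelbar{00}v$ at level $i$ depends only on whether some size-$i$ separating subset of the frozen sets exists, not on the order in which pairs are visited. Your version merely makes explicit the induction over levels and the within-level check that deleting one edge cannot affect the guard or conditioning sets for any other pair, points the paper's terser argument leaves implicit.
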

\begin{proof}
We consider the removal or retention of an arbitrary edge $u\erelbar{00} v$ at some level $i$.
The ordering of the variables determines the order in which the edges (line 7 of Algorithm \ref{algAMPstablePC}) and the subsets $S$ of $a_H(u)$ and $a_H(v)$ (line 8 of Algorithm \ref{algAMPstablePC}) are considered. By
construction, however, the order in which edges are considered does not affect the sets $a_H(u)$ and $a_H(v)$.

If there is at least one subset $S$ of $a_H(u)$ or $a_H(v)$ such that $u\perp\!\!\!\perp_p v|S$, then any
ordering of the variables will find a separating set for $u$ and $v$ (but different orderings
may lead to different separating sets as illustrated in Example \ref{ex2OrderDepAMP}). Conversely, if there is no subset $S'$ of $a_H(u)$ or $a_H(v)$ such that $u\perp\!\!\!\perp_p v|S'$, then no ordering will find a separating set.

Hence, any ordering of the variables leads to the same edge deletions, and therefore to
the same skeleton.
\end{proof}

\begin{example}[Order-independent skeletons]\label{ex:stableskeletons}
We go back to Example \ref{ex1OrderDepAMP}, and consider the sample version of Algorithm \ref{algAMPstablePC}. The algorithm now outputs the skeleton shown in Figure \ref{fig:OrderDepex1amp}(b) for both orderings $\textrm{order}_1(V)$ and $\textrm{order}_2(V)$.

We again go through the algorithm step by step. We start with a complete undirected
graph on $V$. No conditional independence found when $i=0$. When $i= 1$, the algorithm first computes the new adjacency sets: $a_H(v)=V\setminus\{v\}, \forall v\in V$. There are six pairs of variables that are thought to be conditionally independent given a
subset of size 1 (see Table \ref{t3OrderDepAMPex1}). Since the sets $a_H(v)$ are not updated after
edge removals, it does not matter in which order we consider the ordered pairs. Any ordering leads to the removal of six edges. 

\begin{table}[!ht]
\caption{The trace table of Algorithm \ref{algAMPstablePC} for $i=1$, $\textrm{order}_1(V)=(d,c,b,a,e)$, and $\textrm{order}_2(V)=(d,e,a,c,b)$. For simplicity, we define $ADJ_H(u):=[ad_H(u)\cup ad_H(ad_H(u))]\setminus \{u,v\}$.}
\centering
\begin{tabular}{c|c|c|c|c}
 Ordered Pair $(u,v)$& $ADJ_H(u)$ & $S_{uv}$&Is $S_{uv}\subseteq ADJ_H(u)$?& Is $u\erelbar{00} v$ removed?\\
\midrule
\midrule
   $(d,c)$ & $\{a,b,e\}$&$\{e\}$&	Yes&	Yes\\%
    \midrule
  $(d,b)$  &$\{a,c,e\}$&$\{e\}$&	Yes&	Yes \\
\midrule
$(c,b)$ &$\{a,d,e\}$ &$\{a\}$&Yes&	Yes\\
\midrule
$(c,a)$ &$\{b,d,e\}$&$\{d\}$&Yes&Yes\\
\midrule
   $(b,a)$ &$\{c,d,e\}$&$\{d\}$&Yes&Yes\\
    \midrule
  $(a,e)$  &$\{b,c,d\}$&$\{d\}$&Yes&Yes\\
\bottomrule
\end{tabular}\label{t3OrderDepAMPex1}
\end{table}
\end{example}

Now, we propose a method to resolve the order-dependence in the determination of the triplexes in AMP chain graphs, by extending the approach in \citep{Ramsey:2006} for unshielded colliders recovery in DAGs. 

Our proposed {\textbf{Conservative} \textbf{PC}-like algorithm  for \textbf{AMP} CGs (\cpc)} works as follows. Let $H$ be the undirected graph resulting from the skeleton recovery phase  of the \opc~algorithm (Algorithm \ref{algAMPoriginalPC}). For all unshielded triples $(X_i, X_j, X_k)$ in $H$, determine all subsets
$S$ of $ad_H(X_i)\cup ad_H(ad_H(X_i))$ and of $ad_H(X_k)\cup ad_H(ad_H(X_k))$ that make $X_i$ and $X_k$ conditionally independent, i.e., that
satisfy $X_i\perp\!\!\!\perp_p X_k|S$. We refer to such sets as separating sets. The triple $(X_i, X_j, X_k)$ is labelled as \textit{unambiguous} if at least one such separating set is found and either $X_j$ is in all separating sets or in none of them; otherwise it is labelled as \textit{ambiguous}. If the triple is unambiguous, it is labeled and then oriented as described in Algorithm \ref{algAMPoriginalPC}. So, the orientation rules are
adapted so that only unambiguous triples are oriented. 

We refer to the combination of the \spc~and \cpc~algorithms for AMP chain graphs as the \scpc~algorithm.

\begin{theorem}\label{thm:correctnessCPCstable}
    Let the distribution of $V$ be faithful to an AMP CG $G$, and assume that we are given perfect conditional independence information about all pairs of variables $(u,v)$ in $V$ given subsets $S\subseteq V\setminus \{u,v\}$. Then the output of the \cpc~/ \scpc~algorithm is an AMP CG that is Markov equivalent with $G$.
\end{theorem}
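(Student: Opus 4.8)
The plan is to reduce the claim to two facts: (i) that under perfect conditional independence information the skeleton present at the start of the orientation phase is exactly the skeleton of $G$, and (ii) that the conservative triplex-determination step labels every unshielded triple unambiguously and recovers precisely the triplexes of $G$. For the \cpc~algorithm the relevant skeleton is the one produced by the skeleton-recovery phase of Algorithm \ref{algAMPoriginalPC}, whose correctness under perfect information is established in \citep{penea12amp}; for the \scpc~algorithm it is the order-independent skeleton of the \spc~algorithm, correct by Theorem \ref{thm:correctnessPCstable}. In either case the skeleton equals that of $G$, so only (ii) needs real work. Once (ii) holds, the rules R1--R4 are identical to those whose correctness is proved in \citep{penea12amp}, and since two AMP CGs are Markov equivalent iff they share skeleton and triplexes \citep{AMP2001}, the output will be Markov equivalent to $G$.

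The heart of the argument is a dichotomy that I would prove directly from the $p$-separation criterion (Definition \ref{pseparation}). Fix an unshielded triple $(X_i,X_j,X_k)$ in the recovered skeleton, so $X_i,X_k$ are non-adjacent while both are adjacent to $X_j$, and let $S$ be any set from $ad_H(X_i)\cup ad_H(ad_H(X_i))$ or $ad_H(X_k)\cup ad_H(ad_H(X_k))$ with $X_i\perp\!\!\!\perp_p X_k\,|\,S$. I claim $X_j$ lies in \emph{every} such $S$ when $(X_i,X_j,X_k)$ is not a triplex of $G$, and in \emph{none} of them when it is. The key is to examine only the length-two chain $\rho\colon X_i - X_j - X_k$ (carrying the arrowheads it has in $G$). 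If $(X_i,X_j,X_k)$ is a triplex, then $X_j$ is a triplex node of $\rho$ and, since $\rho$ has no internal non-triplex node, $\rho$ is $S$-open precisely when $X_j\in An(S)$; hence $X_j\in S\subseteq An(S)$ would open $\rho$ and contradict the separation, forcing $X_j\notin S$. If instead $(X_i,X_j,X_k)$ is not a triplex, then $X_j$ is a non-triplex node of $\rho$, and whenever $X_j\notin S$ the chain $\rho$ is $S$-open (the ``unless $A-B-C$ and $pa_G(B)\setminus Z\neq\emptyset$'' clause can only open a chain, never block one when $X_j\notin S$); hence $X_j\notin S$ would again contradict the separation, forcing $X_j\in S$.

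Given this dichotomy, every unshielded triple is unambiguous in the \cpc~sense: $X_j$ belongs either to all separating sets (non-triplex) or to none (triplex). Thus under perfect information the conservative step never rejects a triple as ambiguous, and it adds arrowheads exactly at the triples that are triplexes of $G$, reproducing the triplexes of $G$ and no others. I would then conclude as in \citep{penea12amp}: with the correct skeleton and the correct set of triplexes, rules R1--R4 orient a subset of the edges without creating any partially directed cycle, yielding a valid AMP CG with the same skeleton and the same triplexes as $G$, hence Markov equivalent to $G$. The argument is uniform for \cpc~and \scpc, differing only in which result supplies the skeleton.

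The step I expect to be the main obstacle is the non-triplex half of the dichotomy, because the $p$-separation criterion treats non-triplex nodes asymmetrically through the clause ``unless $A-B-C$ is a subchain and $pa_G(B)\setminus Z\neq\emptyset$.'' The care needed is to confirm that this exception can never make a non-triplex $X_j$ with $X_j\notin S$ block $\rho$. Restricting attention to the direct length-two chain makes this transparent, since there the only internal node is $X_j$ and the openness condition reduces to a statement about $X_j$ alone.
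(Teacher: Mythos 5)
Your proposal is correct and follows essentially the same route as the paper: establish correctness of the recovered skeleton, then prove the dichotomy that for an unshielded triple $(X_i,X_j,X_k)$ the middle node lies in \emph{all} $p$-separators of $X_i$ and $X_k$ when the triple is not a triplex of $G$ and in \emph{none} of them when it is, so every triple is unambiguous and rules R1--R4 act exactly as in the original ({\sc Stable-}) \opc~algorithm. The only cosmetic difference is that you argue both halves of the dichotomy via the pathwise $p$-separation criterion applied to the length-two chain $X_i - X_j - X_k$, whereas the paper handles the non-triplex half through the augmentation criterion (an undirected path $X_i - X_j - X_k$ in $(G[X_i\cup X_k\cup S])^a$); the two formulations are equivalent by Theorem 4.1 of \citep{LPM2001}.
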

\begin{proof}
    The skeleton of the learned CG is correct by Theorem \ref{thm:correctnessPCstable}.
    Now, we prove that for any unshielded triple $(X_i, X_j, X_k)$ in an AMP CG $G$, $X_j$ is either in all sets that \textit{p}-separate $X_i$ and $X_k$ or in none of them. Since $X_i, X_k$ are not adjacent, they are \textit{p}-separated given some subset $S\setminus\{X_i, X_k\}$ (see Algorithm \ref{alg2amp}). Based on the pathwise \textit{p}-separation criterion for AMP CGs (see Definition \ref{pseparation}), $X_j$ is a triplex node in $G$ if and only if $X_j\not\in An(S).$ So, $X_j\not\in S.$ On the other hand, if $X_j$ is a non-triplex node then $X_j\in S$, for all $S$ that \textit{p}-separate $X_i$ and $X_k$. Because in this case, $X_j\in Co(An(X_i\cup X_k\cup S))$ and so there is an undirected path $X_i\erelbar{00} X_j\erelbar{00} X_k$ in $(G[X_i\cup X_k\cup S])^a$. Any set $S\setminus\{X_i, X_k\}$ that does not contain $X_j$ will fail to \textit{p}-separate $X_i$ and $X_k$ because of this undirected path. 
    As a result, unshielded triples are all unambiguous. Since all unshielded triples are unambiguous, the orientation rules are as in the original ( {\sc Stable-}) \opc~algorithm. Therefore,  the output of the \cpc/\scpc~algorithm is an AMP CG that is Markov equivalent with $G$.
\end{proof}

\begin{theorem}\label{thm:stabletriplex}
    The decisions about triplexes in the sample version of the algorithm for AMP chain graphs recovery by \scpc~is order-independent.
\end{theorem}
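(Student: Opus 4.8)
The plan is to build on the order-independence of the skeleton already established in Theorem \ref{thm:stableskeleton} and then show that, once the skeleton is fixed, the conservative triple-checking rule produces a verdict for each unshielded triple that depends only on order-independent quantities. This mirrors the argument of \citep{Colombo2014} for the conservative PC algorithm on DAGs, adapted to the AMP setting and its triplex (rather than v-structure) notion.

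First I would observe that the \scpc~algorithm uses the \spc~skeleton-recovery phase, whose output skeleton is order-independent by Theorem \ref{thm:stableskeleton}. Because the skeleton does not depend on the ordering, neither do the adjacency sets $ad_H(\cdot)$ of the resulting graph $H$, the second-order neighborhoods $ad_H(X)\cup ad_H(ad_H(X))$, nor the entire collection of unshielded triples $(X_i,X_j,X_k)$ read off from that skeleton.

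Next, for each such unshielded triple the conservative step inspects \emph{all} subsets $S$ of $ad_H(X_i)\cup ad_H(ad_H(X_i))$ and of $ad_H(X_k)\cup ad_H(ad_H(X_k))$ and records those satisfying $X_i\perp\!\!\!\perp_p X_k|S$. I would argue that this collection of separating sets is order-independent: the two neighborhoods over which the search ranges are order-independent by the previous paragraph, and the search is exhaustive, so it does not matter which separating set is encountered first. The triple is labelled unambiguous precisely when at least one separating set exists and $X_j$ lies either in all of them or in none, and it is oriented as a triplex exactly when $X_j$ lies in none (consistent with the characterization in the proof of Theorem \ref{thm:correctnessCPCstable} via Definition \ref{pseparation}). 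Since both the membership pattern of $X_j$ and the existence of a separating set are functions of the order-independent collection above, the unambiguous/ambiguous label and the triplex verdict for every unshielded triple are order-independent; ambiguous triples are left unoriented, so the ordering cannot affect them either.

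The main obstacle — and the crux of the argument — is establishing the order-independence of the collection of separating sets examined in the conservative step. This is exactly where both modifications are needed together: stability freezes the adjacency sets so that the \emph{search space} for $S$ no longer shifts as edges are deleted within a level (unlike Algorithm \ref{algAMPoriginalPC}, cf.\ Example \ref{ex2OrderDepAMP}), while the conservative rule removes the dependence on \emph{which} separating set is found first by consulting all of them. With these two properties in hand, the verdict for each triple reduces to a question about a fixed set determined by the skeleton and the (perfect or sample) independence oracle alone, which completes the argument.
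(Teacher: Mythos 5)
Your proposal is correct and follows essentially the same route as the paper's own proof: both rely on Theorem \ref{thm:stableskeleton} to make the skeleton, adjacency sets, and unshielded triples order-independent, and then observe that the conservative rule's verdict is a function only of these order-independent quantities because it exhaustively examines all candidate separating sets. Your version simply spells out in more detail why exhaustiveness of the subset search removes any residual dependence on which separating set is encountered first.
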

\begin{proof}
The \scpc~algorithm have order-independent skeleton, by
Theorem \ref{thm:stableskeleton}. In particular, this means that their unshielded triples and adjacency sets are order-independent. The decision about whether an unshielded triple is unambiguous
and/or a triplex is based on the adjacency sets of nodes in the triple, which are order independent.
\end{proof}

\begin{example}[Order-independent decisions about triplexes]\label{ex:stabletriplex}
We consider the sample versions of the \scpc~algorithm for AMP chain graphs, using the same input as in Example \ref{ex2OrderDepAMP}. In particular, we assume that all conditional independencies induced by the AMP CG in Figure \ref{fig:OrderDepex2amp}(a)
are judged to hold except $c\perp\!\!\!\perp d$. Suppose that $c\perp\!\!\!\perp d|b$ and $c\perp\!\!\!\perp d|e$ are thought to hold. 

Denote the skeleton after the skeleton recovery phase by $H$. We consider the unshielded triple $(c,e,d)$.
First, we compute $a_H(c)=\{a,b,d,e\}$ and $a_H(d)=\{a,b,c,e\}$. We now consider all
subsets $S$ of these adjacency sets, and check whether $c\perp\!\!\!\perp d|S$. The following separating sets are found: $\{b\},\{e\}$, and $\{b,e\}$. Since $e$ is in some but not all of these separating sets, the \scpc~algorithm for AMP chain graphs determines that the triple is ambiguous, and no orientations are performed. The output of the algorithm is given in Figure \ref{fig:OrderDepex2amp}(c).
\end{example}

At this point it should be clear why the modified \opc~algorithm for AMP chain graphs is labeled ``conservative": it is more cautious than the ( {\sc Stable-}) \opc~algorithm for AMP chain graphs in drawing unambiguous conclusions about orientations. As we showed in Example \ref{ex:stabletriplex}, the output of the algorithm for AMP chain graphs recovery by \cpc~or \scpc~may
not be triplex equivalent with the true AMP CG $G$, if the resulting CG contains an ambiguous triple. 

Table \ref{Relations:modifiedalgs} summarizes all order-dependence issues explained above and the corresponding modifications of the \opc~algorithm for AMP chain graphs that removes the given order-dependence problem.

\begin{table}[!htpb]
\caption{Order-dependence issues and corresponding modifications of the \opc~algorithm
that remove the problem. ``Yes" indicates that the corresponding aspect
of the graph is estimated order-independently in the sample version.}\label{Relations:modifiedalgs}
\centering
\begin{tabular}{c|c|c}
 & skeleton & triplexes decisions\\
\midrule
\midrule
\opc~algorithm for AMP CGs & No & No\\
    \midrule
\spc & Yes & No\\
\midrule
\scpc & Yes & Yes\\
\bottomrule
\end{tabular}
\end{table}

\section{\lcd~Algorithm: Structure Learning by Decomposition}\label{main-alg}
In this section, first, we
address the issue of how to construct a $p$-separation tree from observed data, which is the heart of our decomposition-based algorithm. Then, we present an algorithm, called \textit{\lcd}, that shows how separation trees can be used to facilitate the decomposition of the
structure learning of AMP chain graphs. The theoretical results are presented first, followed by descriptions of our algorithm that is the summary of the key results in our paper.

\subsection{Constructing a \textit{p}-Separation Tree from Observed Data}\label{septree_ampcg}
As proposed in \citep{xie}, one can construct a $d$-separation tree from observed data. In this section we extend
Theorem 2 of \citep{xie}, and thereby prove that their method for constructing a separation tree
from data is valid for AMP chain graphs. To construct an undirected independence graph in which
the absence of an edge $u\erelbar{00} v$ implies $u \perp\!\!\!\perp v | V\setminus\{u,v\}$, we can start with a complete undirected graph, and then for each pair of variables $u$ and $v$, an undirected edge $u\erelbar{00} v$ is removed if $u$ and
$v$ are independent conditional on the set of all other variables \citep{xie}. For normally distributed data, the undirected independence graph can be efficiently constructed by removing an edge $u\erelbar{00} v$ if and only if the corresponding entry in the concentration matrix (inverse covariance matrix) is zero \citep[Proposition 5.2]{l}. For this purpose, performing a conditional independence test for each pair of random variables using the
partial correlation coefficient can be used. If the $p$-value of the test is smaller than the given threshold, then there will be an edge on the output graph. For discrete data, a test of conditional independence given a large number of discrete variables may
be of extremely low power. To cope with such difficulty, there are two fundamental ways to perform structure
learning: (1) \textit{Parameter estimation techniques} \citep{Banerjee2007,ravikumar2010} that utilize a factorization of the distribution according to the cliques of the graph to learn the
underlying graph. These techniques assume a certain form of the potential function, and thereby relate the structure learning problem to one of finding a sparse maximum likelihood estimator of a distribution from its samples. (2) Algorithms based on learning conditional independence relations between the variables \citep{ChowLiu,Bresler2008,Netrapalli2010,anandkumar2012} that they do not need knowledge of the underlying parametrization to learn the graph. These methods are based on comparing all possible neighborhoods of a node to find one which has the \textit{maximum influence} on the node. In \citep[Chapter 6]{ed}, \citep{Bromberg2009}, and \citep{deAbreu2010} there are other methods for UIG learning, including some for data with both continuous and discrete variables. All these methods can be used to construct separation trees from data.

\begin{theorem}\label{thm_junctree_ampcg}
	A junction tree constructed from an undirected independence graph for AMP CG $G$ is a $p$-separation tree for $G$.
\end{theorem}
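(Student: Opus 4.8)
The plan is to reduce the statement to the defining property of an undirected independence graph by exploiting the standard separation property of junction trees. Recall that a $p$-separation tree must have the feature that, for every edge with separator $S = C_i \cap C_j$ whose removal splits $T$ into subtrees $T_1, T_2$ with vertex unions $V_1, V_2$, the set $S$ $p$-separates $V_1 \setminus S$ from $V_2 \setminus S$ in $G$ (this is the AMP analogue of the $d$-separation tree property of \citep{xie}). So the whole burden is to verify that the separators produced by the junction-tree construction are genuine $p$-separators of $G$.

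First I would fix notation: let $\bar{G}_V$ be the given UIG for $G$, let $\bar{G}_V^t$ be a triangulation of $\bar{G}_V$, and let $T$ be a junction tree whose nodes are the cliques of $\bar{G}_V^t$. Since the cliques cover all of $V$, the node sets satisfy $\bigcup_h C_h = V$, so $T$ has the correct shape for a candidate $p$-separation tree. Next I would invoke the classical separation property of junction trees built from chordal graphs: for any edge of $T$ with separator $S = C_i \cap C_j$, the running-intersection property gives $V_1 \cap V_2 = S$, and $S$ separates $V_1 \setminus S$ from $V_2 \setminus S$ in $\bar{G}_V^t$.

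The remaining two steps chain together cheaply. The first is a monotonicity observation: because triangulation only \emph{adds} fill-in edges, $\bar{G}_V$ is a spanning subgraph of $\bar{G}_V^t$; hence any path in $\bar{G}_V$ between $V_1 \setminus S$ and $V_2 \setminus S$ is also a path in $\bar{G}_V^t$ and must therefore meet $S$. Consequently $S$ separates $V_1 \setminus S$ from $V_2 \setminus S$ already in $\bar{G}_V$. Second, by the very definition of an undirected independence graph — separation in $\bar{G}_V$ implies $p$-separation in $G$ — the set $S$ $p$-separates $V_1 \setminus S$ from $V_2 \setminus S$ in $G$. Ranging over all edges of $T$ shows every separator of $T$ is a $p$-separator of $G$, which is exactly what it means for $T$ to be a $p$-separation tree for $G$.

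I expect the only delicate point to be confirming the \emph{direction} of the separation transfer between $\bar{G}_V^t$ and $\bar{G}_V$: one must be sure that adding edges can only destroy separations, so that separation in the denser triangulated graph is the stronger statement and correctly implies separation in the sparser UIG, not the reverse. Everything else is either the standard junction-tree separation lemma or a direct appeal to the UIG definition, so no new combinatorial work on the AMP chain graph $G$ itself should be required; in particular, the AMP-specific $p$-separation machinery (extended subgraphs, augmentation) enters only implicitly through the hypothesis that $\bar{G}_V$ is a UIG for $G$.
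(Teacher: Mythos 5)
Your proposal is correct and follows essentially the same route as the paper's proof: invoke the standard junction-tree separation property in the triangulated graph $\bar{G}_V^t$, transfer the separation down to the subgraph $\bar{G}_V$ since triangulation only adds edges, and then apply the definition of an undirected independence graph to convert separation in $\bar{G}_V$ into $p$-separation in $G$. The direction of the monotonicity step that you flag as the delicate point is indeed the right one, and it is exactly the observation the paper makes.
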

\begin{proof}
    See Appendix A.
\end{proof}

A $p$-separation tree $T$ only requires that all $p$-separation properties of $T$ also hold for AMP CG $G$, but the reverse is
not required. Thus we only need to construct an undirected independence graph that may have fewer conditional
independencies than the augmented graph, and this means that the undirected independence graph may have extra edges
added to the augmented graph. As \citep{xie} observe for $d$-separation in DAGs, if all nodes of a $p$-separation tree contain only a few variables, ``the null hypothesis of the absence of an undirected edge may be tested statistically at a larger significance level."

Since there are standard algorithms for constructing junction trees from UIGs \citep[Chapter 4, Section 4]{cdls}, the construction of separation trees reduces to the construction of
UIGs. In this sense, Theorem \ref{thm_junctree_ampcg} enables us to exploit various techniques for learning UIGs to serve our purpose. More suggested methods for learning UIGs from data, in addition to the above mentioned techniques, can be found in \citep{mxg}.

\begin{example}
To construct a $p$-separation tree for the AMP CG $G$ in Figure \ref{fig:AMPUIGChordal}(a), at first an undirected independence graph
is constructed by starting with a complete graph and removing an edge $u\erelbar{00} v$ if $u \perp\!\!\!\perp v | V\setminus\{u,v\}$. An undirected graph
obtained in this way is the augmented graph of AMP CG $G$. In fact, we only need to construct an undirected independence
graph which may have extra edges added to the augmented graph. Next triangulate the undirected graph and finally obtain
the $p$-separation tree, as shown in Figure \ref{fig:AMPUIGChordal}(c) and Figure \ref{fig:pseptree} respectively.
\end{example}

\subsection{The \lcd~Algorithm for Learning AMP Chain Graphs}

By applying the following theorem to structural learning, we can split a problem of searching for $p$-separators and building the skeleton of a CG into small problems for every node of $p$-separation tree $T$.
\begin{theorem}\label{thm_main_ampcg}
	Let $T$ be a $p$-separation tree for AMP CG $G$ and $u$ and $v$ be two vertices that do not belong to the same chain component. So, vertices $u$ and $v$ are $p$-separated by $S\subseteq V$ in $G$ if and
	only if (i) $u$ and $v$ are not contained together in any node $C$ of $T$ or (ii) there exists a node $C$ that contains both $u$ and $v$ such that a subset $S'$ of $C$ $p$-separates $u$ and $v$.
\end{theorem}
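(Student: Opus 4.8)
The plan is to prove the two implications of the biconditional separately, reading the left-hand side (``$u$ and $v$ are $p$-separated by $S\subseteq V$'') as the existential claim that \emph{some} separating set exists, and to lean throughout on two facts. First, by Theorem~\ref{thm_junctree_ampcg}, $T$ is a junction tree of a triangulated undirected independence graph $\bar G_V^t$, so it enjoys the running-intersection property and every tree-edge separator $S=C_i\cap C_j$ $p$-separates the union of the vertices on its two sides. Second, by Theorem~\ref{thm1amp}, any $p$-separation question for $u$ and $v$ reduces to an ordinary undirected separation question in $(G_{ant(u\cup v)})^a$, with a minimal separator necessarily contained in $ant(u\cup v)$ by Proposition~\ref{prop2amp}.

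For the direction ($\Leftarrow$), condition (ii) is immediate: if $S'\subseteq C\subseteq V$ $p$-separates $u$ and $v$, then $S=S'$ witnesses the left-hand side. For condition (i), I would first note that by running intersection the nodes of $T$ containing $u$ form a connected subtree $T_u$, and likewise the nodes containing $v$ form $T_v$; condition (i) forces $T_u$ and $T_v$ to be disjoint, so some tree edge $e$ lies on the path between them and its separator $S_e$ places $u$ and $v$ into the two different vertex sets $V_1\setminus S_e$ and $V_2\setminus S_e$. By the defining property of a $p$-separation tree, $S_e$ $p$-separates $V_1\setminus S_e$ from $V_2\setminus S_e$ in $G$, hence $p$-separates $u$ from $v$, giving the required $S=S_e$.

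The forward direction ($\Rightarrow$) is the heart of the argument, and I would prove it by induction on the number $H$ of nodes of $T$. The base case $H=1$ is trivial, since the unique node equals $V$ and any separating $S$ lies inside it. For the inductive step I would peel off a leaf node $C_l$ with neighbour $C_m$ and separator $S_l=C_l\cap C_m$, and set $R=C_l\setminus S_l$, the vertices private to the leaf, which by running intersection occur in no other node and are $p$-separated from $V\setminus C_l$ by $S_l$. If neither $u$ nor $v$ lies in $R$, then any common node of $u$ and $v$ survives in the pruned tree $T\setminus\{C_l\}$, and I would argue that this pruned tree is a $p$-separation tree for the marginal graph $G_{V\setminus R}$ in which $u$ and $v$ remain $p$-separable, so the inductive hypothesis applies and returns the desired node and local separator. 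If instead one of them, say $u$, lies in $R$, then $C_l$ is the \emph{only} node containing both, so I must localize a separator inside $C_l$; here I would invoke the neighbour-based construction behind Algorithm~\ref{alg2amp}, taking a minimal separator inside the neighbourhood of $u$ in $(G_{ant(u\cup v)})^a$, and showing that this neighbourhood is confined to the clique $C_l$, so that $S'\subseteq C_l$ together with $v\in C_l$ yields condition (ii).

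I expect the main obstacle to be the two compatibility steps that the induction quietly requires in the AMP setting. The first is that pruning the leaf yields a genuine $p$-separation tree for the marginal $G_{V\setminus R}$ and preserves $p$-separability of $u$ and $v$; this rests on Propositions~\ref{prop1amp}--\ref{prop2amp} and on the fact that $S_l$ insulates $R$ from the rest of the graph. The second, and harder, is that the minimal separator supplied by Theorem~\ref{thm1amp}, which lives in the augmented graph $(G_{ant(u\cup v)})^a$, actually sits inside the vertex set of a single clique of $\bar G_V^t$---equivalently, that augmenting-then-restricting never produces an edge outside the triangulated undirected independence graph. Reconciling $(G_{ant(u\cup v)})^a$ with the cliques of $\bar G_V^t$ is precisely where the hypothesis that $u$ and $v$ lie in \emph{different chain components} enters, since it guarantees that every chain between them passes through a triplex node, so that the $p$-separation criterion of Definition~\ref{pseparation} collapses to plain undirected separation in the augmented graph and the neighbour-based local separator is kept within one tree node.
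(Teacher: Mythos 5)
Your ($\Leftarrow$) direction is fine and essentially matches the paper's (condition (ii) is trivial; condition (i) follows because the subtrees of nodes containing $u$ and containing $v$ are disjoint, so some tree-edge separator puts $u$ and $v$ on opposite sides and, by the definition of a $p$-separation tree, $p$-separates them). The ($\Rightarrow$) direction, however, departs from the paper and has genuine gaps. The paper does not induct on the tree: it first proves (Lemma~\ref{lem2amp}) that some node of $T$ contains $u$ together with \emph{all} of $pa(u)$ --- because two parents $v,w$ of $u$ form a triplex $v\to u\gets w$ that no separator containing $u$ can block --- and then starts from the pairwise Markov statement $u\perp\!\!\!\perp v\mid pa(u)$ and explicitly transports this separator along the tree path to the node $C_2$ closest to $C_1$ that contains both $u$ and $v$, taking $S=[ant(u\cup v)\cap(K\cup\{p\in pa(u)\mid p\in C_2\})]\setminus\tau_u$ and verifying by a case analysis on the first edge of a chain out of $u$ that $S$ blocks everything. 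This "anchor at $pa(u)$ and transport" idea is the missing ingredient in your sketch, and it is what actually uses the hypothesis that $u$ and $v$ lie in different chain components (via the pairwise Markov property and the exclusion of $\tau_u$ from $S$), not the reason you give: it is simply false that every chain between vertices of different chain components passes through a triplex node (consider $u\to w\to v$, or $u - x\to v$; neither middle vertex matches any of the triplex patterns of Definition~\ref{pseparation}), so $p$-separation does not "collapse to plain undirected separation" for the reason you state.

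The leaf-peeling induction itself would fail at the lifting step. AMP $p$-separation is not inherited between $G$ and the induced subgraph $G_{V\setminus R}$: the openness criterion refers to $An(Z)$ and to $pa_G(B)$ in the \emph{whole} graph, so a chain lying entirely in $V\setminus R$ can be $S'$-open in $G$ but blocked in $G_{V\setminus R}$ (a triplex node reaching $An(S')$ only through $R$), or blocked in $G_{V\setminus R}$ but open in $G$ (a non-triplex node $B\in S'$ on an undirected subchain with a parent in $R\setminus S'$, which by the "unless" clause of Definition~\ref{pseparation} does not block). Hence a separator produced by the inductive hypothesis for $G_{V\setminus R}$ need not $p$-separate $u$ and $v$ in $G$, and the pruned tree need not even be a $p$-separation tree for $G_{V\setminus R}$. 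Propositions~\ref{prop1amp}--\ref{prop2amp} do not repair this: they license restriction to $ant(X\cup Y)$, an anterior-closed set, and the leaf-private set $R=C_l\setminus S_l$ is not of that form (it can contain anteriors, indeed parents, of $u$ and $v$). Your second case ($u\in R$) is closer to workable --- since $C_l$ is then the unique clique containing $u$, all of $u$'s neighbours in $\bar G_V^t$, hence in $(G_{ant(u\cup v)})^a$, lie in $C_l$ --- but this relies on the unproved claim that every undirected independence graph contains all augmentation (marriage) edges of $G^a$, and in any event it cannot rescue the broken first case.
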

\begin{proof}
    See Appendix A.
\end{proof}

According to Theorem \ref{thm_main_ampcg}, a problem of searching for a $p$-separator $S$ of $u$ and $v$ in all possible subsets of $V$ is localized to all possible subsets of nodes in a $p$-separation tree that contain $u$ and $v$. For a given $p$-separation tree $T$
with the node set $C = \{C_1,\dots , C_H \}$, we can recover the skeleton and all triplexes for an AMP CG using a constraint-based algorithm, called \lcd, that contains two main steps: (a) determining the skeleton by a divide-and-conquer approach; (b) determining triplexes and orienting some of the undirected edges into directed edges according to a set of rules applied iteratively with localized search for $p$-separators.
We elaborate on each phase of this algorithm below.

\noindent\textbf{\lcd~Description:} \textit{(a) Skeleton Recovery.}  This phase has two steps. First, 
we construct a \textit{local skeleton} for every node $C_h$ of $T$, which is constructed by starting with a complete undirected
subgraph and removing an undirected edge $u\erelbar{00} v$ if there is a subset $S$ of $C_h$ such that $u$ and $v$ are independent
conditional on $S$. For this purpose, we can use the \opc~algorithm \cite{penea12amp} or the \spc~algorithm (Algorithm \ref{algAMPstablePC}, line 2-13) in Algorithm \ref{DBalgampcg} (line 3-11). Second, in order to construct the \textit{global skeleton} (line 13-23 of Algorithm \ref{DBalgampcg}), we combine all these local skeletons together. Note that it is possible that some edges that are present in some local skeletons may be absent in other local skeletons. Also, two non-adjacent vertices $u$ and $v$ in the AMP CG $G$ that belong to the same chain component may be adjacent in the temporary global skeleton. (Note that Theorem \ref{thm_main_ampcg} only guarantees the existence of the $p$-separators for those non-adjacent vertices that do not belong to the same chain component. In Appendix A, we provide an example that shows that Theorem \ref{thm_main_ampcg} cannot be strengthened.) In order to remove the extra edges in the resulting undirected graph, we apply a removal procedure that is similar to the skeleton recovery phase of the \opc~algorithm. However, instead of the complete undirected graph we use the resulting undirected graph obtained in the previous step. \textit{(b) Orientation phase}. In this phase (line 25-28 of Algorithm \ref{DBalgampcg}), we orient undirected edges using rules R1-R4 in \citep{penea12amp, PENA2016MAMP} (illustrated in Figure \ref{fig:rules_ampcgs} for the reader's convenience). 
The whole process is formally described in Algorithm \ref{DBalgampcg}.

\begin{algorithm}[!htbp]
\caption{\lcd~: A decomposition-based recovery algorithm for AMP CGs}\label{DBalgampcg}
	\SetAlgoLined
	\KwIn{A probability distribution $p$ faithful to an unknown AMP CG $G$.}
	\KwOut{A chain graph $H$ that is triplex equivalent to the AMP CG $G$.}
    Construct a $p$-separation tree $T$ with a node set $C = \{C_1, \dots, C_I \}$ as discussed in Section \ref{septree_ampcg}\;
    Set $S=\emptyset$\;
    \tcc{\textcolor{blue}{\textbf{Local skeleton recovery:}}}
    \tikzmk{A}
    \For{$i\gets 1$ \KwTo $I$}{
    Start from a complete undirected graph $\bar{G}_i$ with         vertex set $C_i$\;
    \For{\textrm{each vertex pair $\{u,v\}\subseteq C_i$ }}{\If{$\exists S_{uv}\subseteq C_i \textrm{ such that } u \perp\!\!\!\perp v | S_{uv}$}{
    Delete the edge $u\erelbar{00} v$ in $\bar{G}_i$\;
    Add $S_{uv}$ to $S$;
    }
    }
    }
    \tikzmk{B}
 \boxit{blue!60}
    \tcc{\textcolor{blue}{\textbf{Global skeleton recovery:}}}
    \tikzmk{A}
    Initialize the edge set $\bar{E}_V$ of $\bar{G}_V$ as the union of all edge sets of $\bar{G}_i, i=1,\dots, I$\;
    Set $H=\bar{G}_V$\;
    \For{$i\gets 0$ \KwTo $|V_H|-2$}{
        \While{possible}{
            Select any ordered pair of nodes $u$ and $v$ in $H$ such that $u\in ad_H(v)$ and $|[ad_H(u)\cup ad_H(ad_H(u))]\setminus \{u,v\}|\ge i$;
            \tcc{$ad_H(x):=\{y\in V| x\erelbar{01} y, y\erelbar{01} x, \textrm{ or }x\erelbar{00}y\}$}
            \If{\textrm{there exists $S\subseteq ([ad_H(u)\cup ad_H(ad_H(u))]\setminus \{u,v\})$ s.t. $|S|=i$ and $u\perp\!\!\!\perp_p v|S$ (i.e., $u$ is independent of $v$ given $S$ in the probability distribution $p$)}}{
                Set $S_{uv} = S_{vu} = S$\;
                Remove the edge $u\erelbar{00} v$ from $H$\;
            }
        }
    }
    \tikzmk{B}
 \boxit{green!60}
    \tcc{\textcolor{blue}{\textbf{Orientation phase \citep{penea12amp}:}}}
    \tikzmk{A}
    \While{possible}{
        Apply rules R1-R4 in Figure \ref{fig:rules_ampcgs} to $H$.
        
        \small\tcc{A block is represented by a perpendicular line at the edge end such as in $\erelbar{20}$ or $\erelbar{22}$, and it means that the edge cannot be a directed edge pointing in the direction of the block. Note that $\erelbar{22}$ means that the edge must be undirected. The ends of some of the edges in the rules are labeled with a circle such as in $\erelbar{30}$ or $\erelbar{33}$. The circle represents an unspecified end, i.e. a block or nothing.}
    }
    Replace every edge $\erelbar{20}$ ($\erelbar{22}$) in $H$ with $\erelbar{01}$ ($\erelbar{00}$)\;
    \tikzmk{B}
 \boxit{red!60}
    return $H$.
    
    
\end{algorithm}

We prove that the global skeleton and all triplexes obtained by applying the decomposition in Algorithm \ref{DBalgampcg} are correct, that is, they are the same as those obtained from the joint distribution of $V$. In other words, \lcd~ returns a chain graph that is a member of a class of triplex
equivalent AMP chain graphs; see Appendix A for  proof details. Note that separators in a $p$-separation tree may not be complete in the augmented graph.
Thus the decomposition is weaker than the decomposition usually defined for parameter estimation \citep{cdls,l}.

\begin{remark}
One can apply Algorithm 3 in \citep{roverato06} to to the resulting chain graph  of Algorithm \ref{DBalgampcg} to obtain the largest deflagged graph. Also, one can apply Algorithm 1 in \citep{sonntagpena15} to the resulting chain graph  of Algorithm \ref{DBalgampcg} to obtain the AMP essential graph.
\end{remark}

\subsection{Complexity Analysis of the \lcd~Algorithm}\label{complexity}
Here we start by comparing our algorithm with the main algorithm in \citep{xie} that is designed
specifically for DAG structural learning when the underlying graph structure is a DAG. We make
this choice of the DAG specific algorithm so that both algorithms can have the same separation tree
as input and hence are directly comparable.

The same advantages mentioned by \citep{xie} for their BN structural learning algorithm hold for our algorithm when applied to AMP CGs. For the reader's convenience, we list them here. 
First, by using the $p$-separation tree, \textit{independence tests are performed only conditionally on smaller sets
contained in a node of the $p$-separation tree rather than on the full set of all other variables}. Thus our algorithm has
\textit{higher power for statistical tests}.
Second, the \textit{computational complexity can be reduced}.  The number of
conditional independence tests for constructing the equivalence class is used as characteristic operation for this complexity analysis. Decomposition of graphs is a computationally
simple task compared to the task of testing conditional independence for a large number of triples of sets of variables. The triangulation of an undirected graph is used in our algorithms to construct a $p$-separation tree from an undirected independence graph. Although the problem for optimally triangulating an undirected graph is NP-hard, sub-optimal triangulation methods \citep{bbhp} may be used provided
that the obtained tree does not contain too large nodes to test conditional independencies. Two of the best known
algorithms are lexicographic search and maximum cardinality search, and their complexities are
$O(|V||E|)$ and $O(|V|+ |E|)$, respectively \citep{bbhp}. Thus in our algorithms, \textit{conditional independence tests dominate algorithmic complexity}.

For the sake of complexity analysis, Algorithm \ref{DBalgampcg} can be divided into four parts: (1) construction of the p-separation tree, (2) local skeleton recovery (lines 3--11), (3) global skeleton recovery (lines 12--22), and (4) orientation phase (lines 23-25).  Part (1) includes the construction of the UIG, which takes at most $O(n^2)$ conditional independence tests, where $n$ is the number of variables in the data set.  Part (2) and (3) together require $O(Hm^22^m)$ as claimed in \citep[Section 6]{xie}, where $H$ is the number of $p$-separation tree nodes (usually $H \ll |V|$) and $m=\max_h|C_h|$ where $|C_h|$ denotes the number of variables in $C_h$ ($m$ usually is much less than $|V|$). Part (4) does not require any conditional independence tests.

\section{Experimental Evaluation}\label{evaluation}
In this section we evaluate the performance of our algorithms in various setups
using simulated / synthetic data sets. We first compare the performance of our proposed algorithms, i.e., \spc, \cpc, \scpc~and \lcd~with the original \opc~
learning algorithms by running them
on randomly generated AMP chain graphs. We then compare our algorithms, i.e.,  \spc~and \lcd~algorithms with the \opc~algorithm on different discrete Bayesian networks such as \href{http://www.bnlearn.com/bnrepository/}{ASIA, INSURANCE, ALARM, and HAILFINDER} that have
been widely used in evaluating the performance of structural learning algorithms. Empirical simulations show that our algorithm achieves
competitive results with the \opc~and \spc~learning algorithms; in particular, in the Gaussian case the decomposition-based algorithm outperforms the \opc~and \spc~algorithms. 
Algorithms \ref{DBalgampcg} and the \opc~and \spc~algorithms have been implemented in the R language. All code, data, and the results reported here are
based on our R implementation available at the following GitHub link {\color{blue}{\url{https://github.com/majavid/AMPCGs2019}}}. We do not consider the case of mixed continuous and discrete data in this paper, and leave this important and complex issue for future work; we only observe that this problem has been studied in the case of Markov networks and Bayesian networks, for example see \citep{Edwards2010,lauritzen2001stable,raghu2018comparison,andrews2018scoring}.

\subsection{Performance Evaluation Metrics}
We evaluate the performance of the proposed algorithms in terms of the six measurements that are commonly used by~\citep{Colombo2014,mxg,Tsamardinos2006} for constraint-based learning algorithms: 
\begin{enumerate}
    \item[(a)]  the true positive
rate (TPR)\footnote{Also known as sensitivity, recall, and hit rate.} is the ratio of  the number of correctly identified edges over total number of edges  (in true graph), i.e., $$TPR=\frac{\textrm{true positive } (TP)}{\textrm{the number of real positive cases in the data } (Pos)},$$
\item[(b)]  the false positive rate (FPR)\footnote{Also known as fall-out.} is the ratio of the number of incorrectly identified edges over total number of gaps, i.e., $$FPR=\frac{\textrm{false positive }(FP)}{\textrm{the number of real negative cases in the data }(Neg)},$$ 
\item[(c)] the true discovery rate (TDR)\footnote{Also known as precision or positive predictive value.} is the ratio of  the number of correctly identified edges over total number of edges (both in estimated graph), i.e., $$TDR=\frac{\textrm{true positive } (TP)}{\textrm{the total number of edges in the recovered CG}},$$ 
\item[(d)] accuracy (ACC) is defined as $$ACC=\frac{\textrm{true positive }(TP) +\textrm{ true negative }(TN)}{Pos+Neg},$$
\item[(e)] the structural Hamming distance (SHD)\footnote{This is the metric described by \cite{Tsamardinos2006} to  compare the
structure of the learned and the original graphs.} is the number of legitimate operations needed to change the current resulting graph to the true CG,
where legitimate operations are: (i) add or delete an edge and (ii) insert, delete or reverse an edge
orientation,  and
\item[(e)] run-time for the chain graph recovery algorithms.
\end{enumerate}

Note that we use TPR, FPR, TDR, and ACC for comparing the skeletons of a learned structure and a ground truth graph. In principle, a large TDR, TPR and ACC, a small FPR and SHD indicate good performance.

To investigate the performance of the proposed learning methods in this paper, we use the same approach that \cite{mxg} used in evaluating the performance of the LCD algorithm on LWF chain graphs. We run our algorithms on randomly generated AMP chain graphs and then we compare the results and report summary error measures in all cases.

\subsection{Performance Evaluation on Random AMP Chain Graphs (Gaussian case)}

To investigate the performance of the proposed learning methods in this paper, we use the same approach that \citep{mxg} used in evaluating the performance of the LCD algorithm on LWF chain graphs. We run our algorithms on randomly generated AMP chain graphs and then we compare the results and report summary error measures in all cases.

\subsubsection{Data Generation Procedure}
First we explain the way in which the random AMP chain graphs and random samples are generated.
Given a vertex set $V$ , let $p = |V|$ and $N$ denote the average degree of edges (including undirected
and pointing out and pointing in) for each vertex. We generate a random AMP chain graph on $V$ as
follows:
\begin{itemize}
    \item Order the $p$ vertices and initialize a $p\times p$ adjacency matrix $A$ with zeros;
    \item For each element in the lower triangle part of $A$, set it to be a random number generated from a Bernoulli distribution with probability of occurrence $s = N/(p-1)$;
    \item Symmetrize $A$ according to its lower triangle;
    \item Select an integer $k$ randomly from $\{1,\dots,p\}$ as the number of chain components;

    \item Split the interval $[1, p]$ into $k$ equal-length subintervals $I_1,\dots,I_k$ so that the set of variables falling into each subinterval $I_m$ forms a chain component $C_m$; 

    \item Set $A_{ij} = 0$ for any $(i, j)$ pair such that $i \in I_l, j \in I_m$ with $l > m$.
\end{itemize}

This procedure yields an adjacency matrix $A$ for a chain graph with $(A_{ij} = A_{ji} = 1)$ representing an undirected edge between $V_i$ and $V_j$ and $(A_{ij} =1,  A_{ji} =0)$ representing a directed edge
from $V_i$ to $V_j$. Moreover, it is not difficult to see that $\mathbb{E}[\textrm{vertex degree}] = N$, where an adjacent vertex can
be linked by either an undirected or a directed edge.
In order to sample from the artificial CGs, we first transformed
them into DAGs and then sampled from these DAGs under marginalization and conditioning as indicated in \citep{PENA20141185}. The
transformation of an AMP CG $G$ into a DAG $H$ is as follows: First, every node $X$ in $G$ gets a new parent $\epsilon ^X$
representing an error term, which by definition is never observed. Then, every undirected edge $X\erelbar{00} Y$ in $G$ is replaced by $\epsilon ^X\erelbar{01} S_{XY}\erelbar{10} \epsilon ^Y$ where $S_{XY}$ denotes a selection bias node, i.e. a node that is always observed.
Given a randomly generated chain graph $G$ with ordered chain components $C_1,\dots,C_k$, we generate a Gaussian distribution on the corresponding transformed DAG $H$ using the \href{https://www.hugin.com/}{Hugin API}.
Note that the probability distributions of samples are likely to satisfy the faithfulness assumption, but there is no guarantee i.e., samples can have additional independencies that cannot be represented by the CG $G$.

\subsubsection{Experimental Results in Low-Dimensional Settings}

\paragraph{Experimental Setting}
In our simulation, we change three parameters $p$ (the number of vertices), $n$ (sample size) and
$N$ (expected number of adjacent vertices) as follows:
\begin{itemize}
    \item $p\in\{10, 20, 30, 40, 50\}$,
    \item $n\in\{500, 1000, 5000, 10000\}$, and
    \item $N\in\{2,3\}$.
\end{itemize}

For each $(p,N)$ combination, we first generate 30 random AMP CGs. We then generate a random Gaussian distribution based on each graph and draw an
identically independently distributed
(i.i.d.) sample of size $n$ from this distribution for each possible $n$. For each sample, three different
significance levels $(\alpha = 0.005, 0.01, 0.05)$ are used to perform the hypothesis tests. The \textit{null hypothesis} $H_0$ is ``two variables $u$ and $v$ are conditionally independent given a set $C$ of variables" and alternative $H_1$ is that $H_0$ may not hold. We then compare the results to access the influence of the significance testing level on the performance of our algorithms.

\begin{figure}[!htbp]
	\centering
	\includegraphics[scale=.28,page=1]{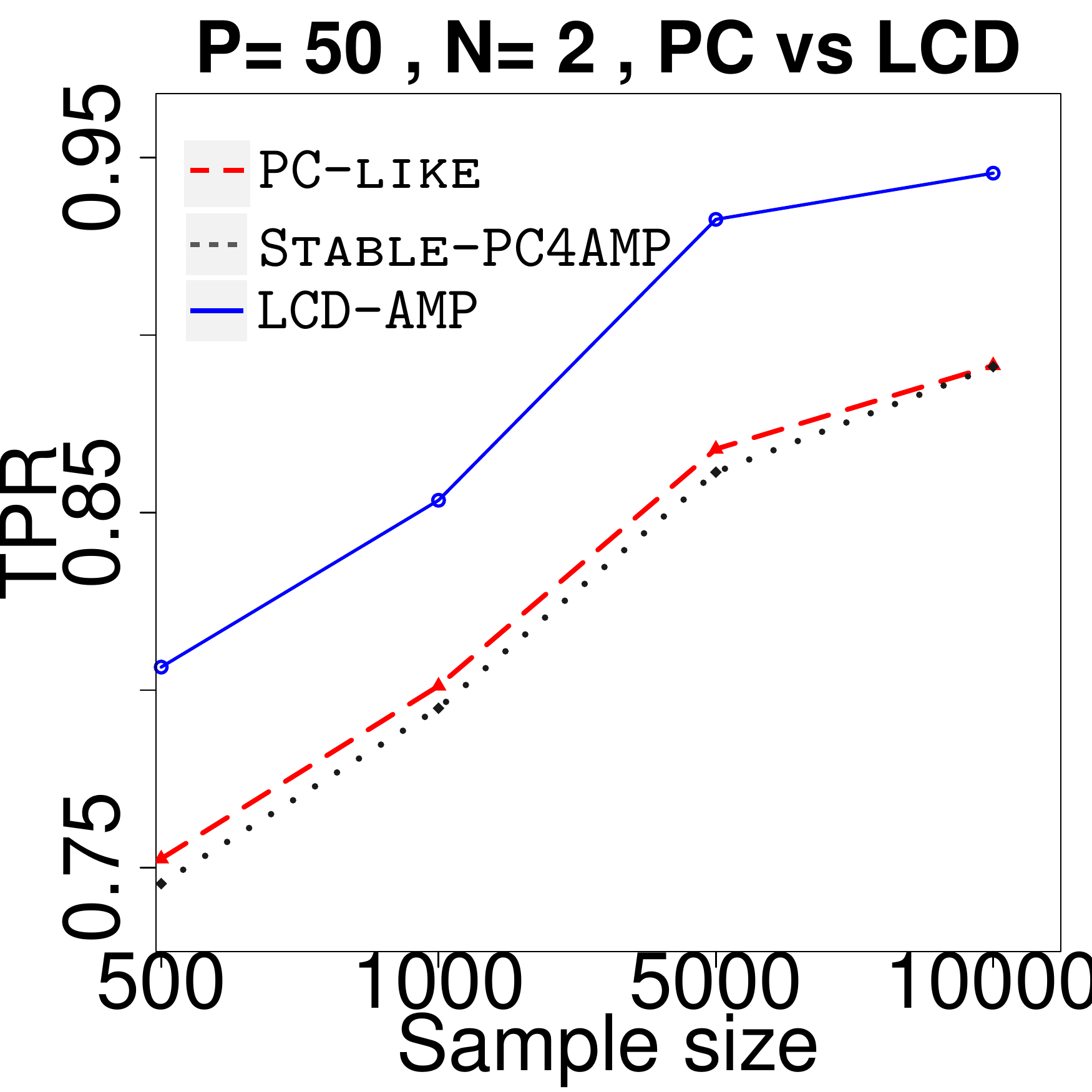}
	\includegraphics[scale=.28,page=2]{images/PCvsLCD50.pdf}
	\includegraphics[scale=.28,page=3]{images/PCvsLCD50.pdf}
	\includegraphics[scale=.28,page=4]{images/PCvsLCD50.pdf}
	\includegraphics[scale=.28,page=5]{images/PCvsLCD50.pdf}
	\includegraphics[scale=.28,page=6]{images/PCvsLCD50.pdf}
	\includegraphics[scale=.28,page=1]{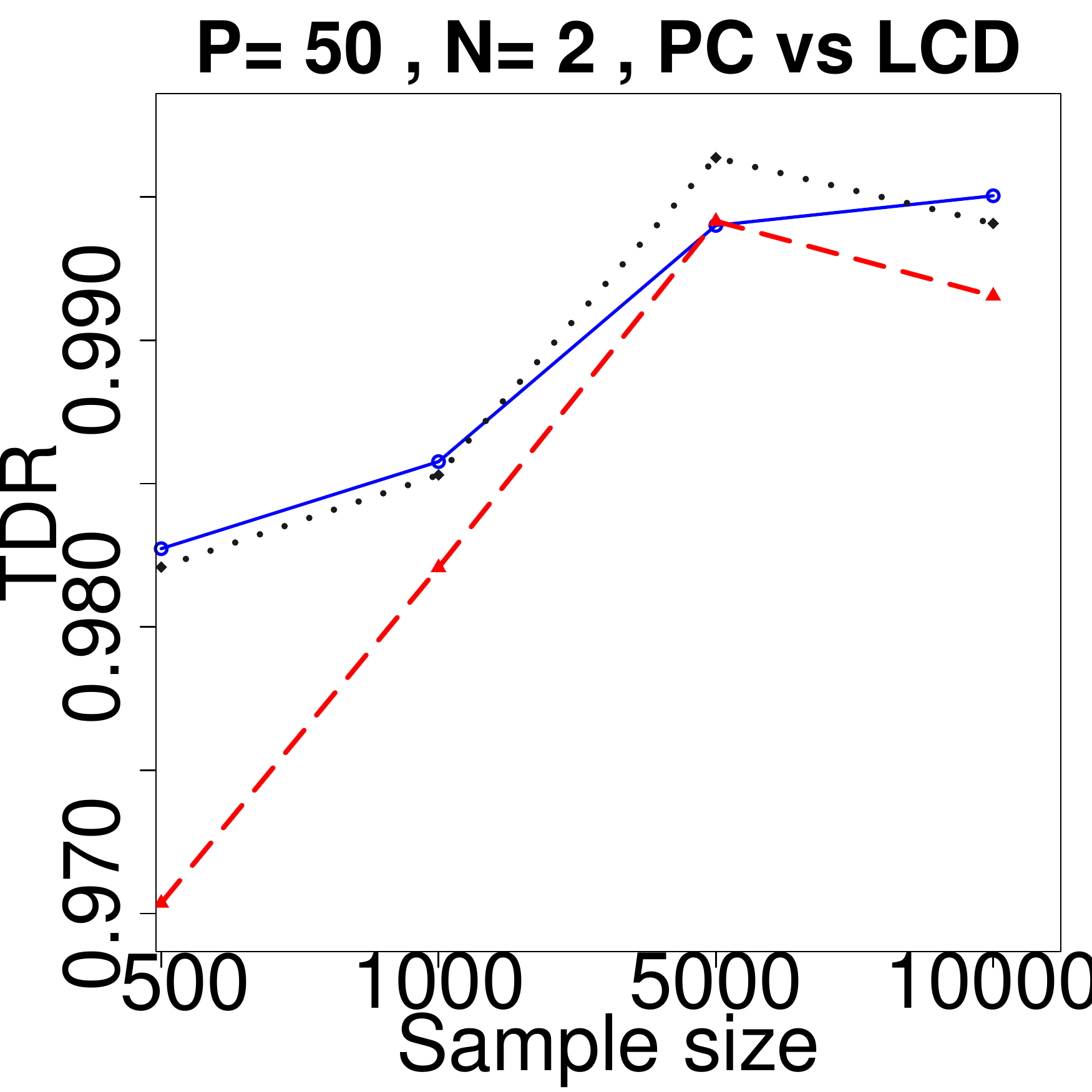}
	\includegraphics[scale=.28,page=2]{images/TDR5023pcvslcd.pdf}
	\includegraphics[scale=.28,page=3]{images/TDR5023pcvslcd.pdf}
	\includegraphics[scale=.28,page=7]{images/PCvsLCD50.pdf}
	\includegraphics[scale=.28,page=8]{images/PCvsLCD50.pdf}
	\includegraphics[scale=.28,page=9]{images/PCvsLCD50.pdf}
\caption{}
	\captionsetup{labelformat=empty}
\end{figure}
\begin{figure}[!htbp]
\ContinuedFloat
  \captionsetup{list=off}
	\centering
	\includegraphics[scale=.28,page=10]{images/PCvsLCD50.pdf}
	\includegraphics[scale=.28,page=11]{images/PCvsLCD50.pdf}
	\includegraphics[scale=.28,page=12]{images/PCvsLCD50.pdf}
	\caption{\footnotesize{First two columns show the performance of the decomposition based (\lcd), original \opc~and \spc~algorithms for randomly generated Gaussian chain graph models:
		average over 30 repetitions with 50 variables  correspond to N = 2, 3, and the significance level $\alpha=0.005$.  In each plot, the solid blue line corresponds to the \lcd~ algorithm,   the dashed red line corresponds to the original \opc~algorithm, and the dotted grey line corresponds to the stable \opc~(\spc) algorithm. The third column shows the performance of the decomposition-based (\lcd) algorithm for randomly generated Gaussian chain graph models:
		average over 30 repetitions with 50 variables  correspond to N = 2, and significance levels $\alpha=0.05,0.01,0.005$.  In each plot, the solid green line corresponds to $\alpha=0.05$,   the dashed brown line corresponds to $\alpha=0.01$, and the dotted blue line corresponds to $\alpha=0.005$.}}
	\label{fig:5023_05errors}
\end{figure}
\begin{figure}[ht]
    \centering
    \includegraphics[scale=.28,page=13]{images/PCvsLCD50.pdf}
	\includegraphics[scale=.28,page=14]{images/PCvsLCD50.pdf}
	\includegraphics[scale=.28,page=15]{images/PCvsLCD50.pdf}
    \caption{\footnotesize{First two columns show the running times of the decomposition-based (\lcd), original \opc~and \spc~algorithms for randomly generated Gaussian chain graph models:
		average over 30 repetitions with 50 variables  correspond to N = 2,3 and significance
		levels $\alpha=0.005$. In each plot, the solid blue line corresponds to the \lcd~ algorithm,   the dashed red line corresponds to the original \opc~algorithm, and the dotted grey line corresponds to the \spc~algorithm. The third column shows the running times of the decomposition-based (\lcd) algorithm for randomly generated Gaussian chain graph models:
		average over 30 repetitions with 50 variables  correspond to N = 2, and significance levels $\alpha=0.05,0.01,0.005$.  In each plot, the solid green line corresponds to $\alpha=0.05$,   the dashed brown line corresponds to $\alpha=0.01$, and the dotted blue line corresponds to $\alpha=0.005$.}}
    \label{fig:5023_05runtime}
\end{figure}
\begin{figure}[ht]
    \centering
    \includegraphics[scale=.28,page=1]{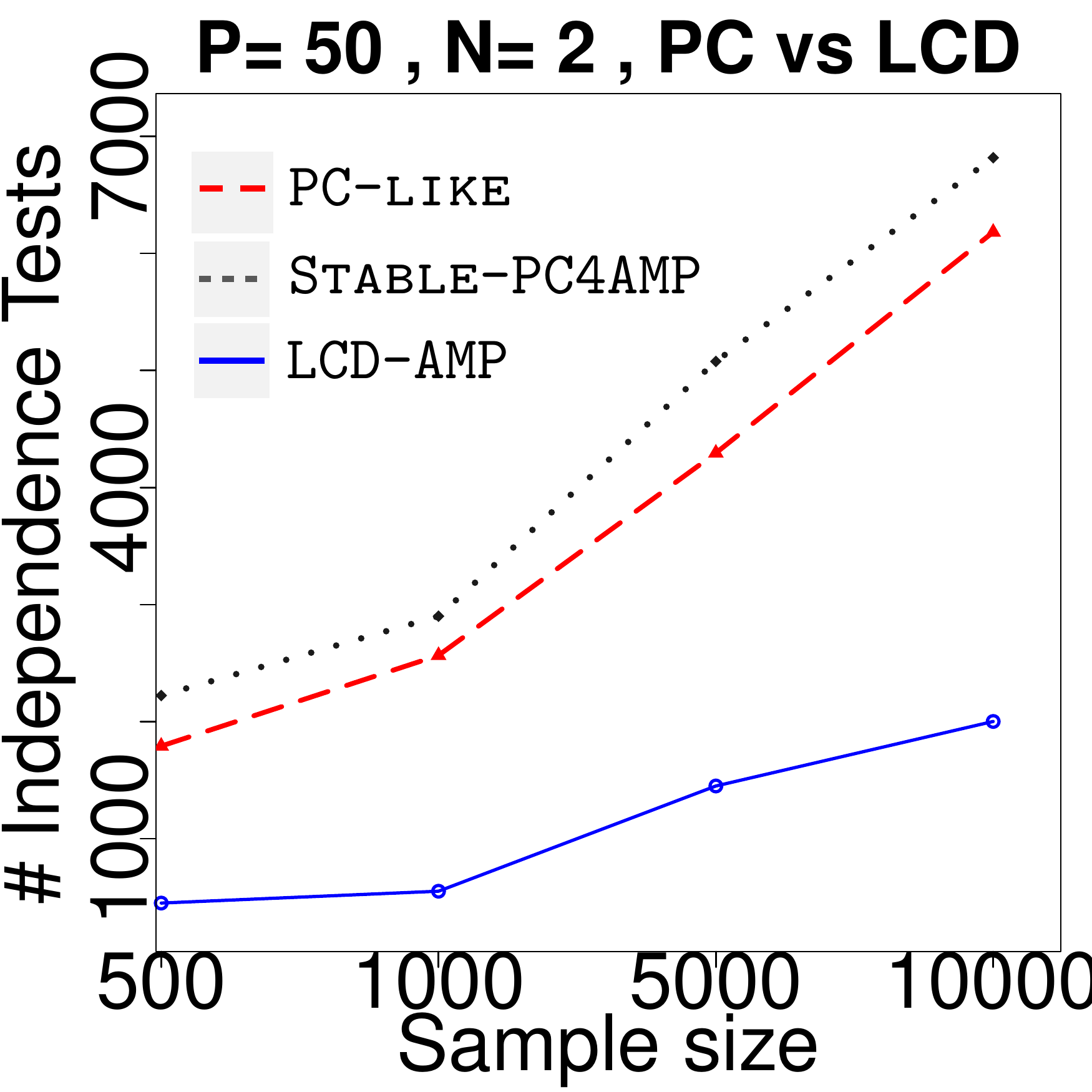}
	\includegraphics[scale=.28,page=2]{images/nIndTests_PCvsLCD5023_005.pdf}
	\includegraphics[scale=.28,page=3]{images/nIndTests_PCvsLCD5023_005.pdf}
    \caption{\footnotesize{First two columns show the number of independence tests used by the decomposition-based (\lcd), original \opc~and \spc~algorithms for randomly generated Gaussian chain graph models:
		average over 30 repetitions with 50 variables  corresponding to average degrees N = 2,3 and significance
		level $\alpha=0.005$. In each plot, the solid blue line corresponds to the \lcd~ algorithm,   the dashed red line corresponds to the original \opc~algorithm, and the dotted grey line corresponds to the \spc~algorithm. The third column shows the number of independence tests used by the decomposition-based (\lcd) algorithm for randomly generated Gaussian chain graph models:
		average over 30 repetitions with 50 variables  corresponding to average degree N = 2, and significance levels $\alpha=0.05,0.01,0.005$.  In each plot, the solid green line corresponds to $\alpha=0.05$,   the dashed brown line corresponds to  $\alpha=0.01$, and the dotted blue line corresponds to $\alpha=0.005$.}}
    \label{fig:5023_005nindtests}
\end{figure}

\paragraph{Results}
The experimental results in Figure \ref{fig:5023_05errors} shows that: 
\begin{itemize}
    \item[(a)] Both algorithms work well on sparse graphs $(N = 2,3)$.
    \item[(b)] For both algorithms, typically the TPR, TDR, and ACC increase with sample size.
    \item[(c)] The SHD and FPR decrease with sample size.
    \item[(d)] A large significance level $(\alpha=0.05)$ typically yields large
TPR, FPR, and SHD.
\item[(e)] In almost all cases, the performance of the \lcd~algorithm based on all error measures i.e., TPR, FPR, TDR, ACC, and SHD is better than the performance of the \opc~and \spc~algorithms.
\item[(f)] In most cases, error measures based on $\alpha=0.01$ and $\alpha=0.005$ are very close. Generally, our empirical results suggests that in order to obtain a better performance, we can choose a small value (say $\alpha=0.005$ or 0.01) for
the significance level of individual tests along with large sample if at all possible. However, the optimal value for a desired overall error rate may depend on the sample size, significance level, and the sparsity of the underlying graph.
\item[(g)] While the \spc~algorithm has a better TDR and FPR in comparison with the original \opc~algorithm, the original \opc~algorithm has a better TPR as observed in the case of DAGs~\citep{Colombo2014}. This can be explained by the fact that the \spc~algorithm tends to perform more tests than the original \opc~algorithm.
\item[(h)] There is no meaningful difference between the performance of the \spc~algorithm  and the original \opc~algorithm in terms of error measures ACC and SHD.
\end{itemize} 


When considering average running times versus sample sizes, as shown in Figures \ref{fig:5023_05runtime}, we observe that:
\begin{itemize}
    \item[(a)]  The average run time increases when sample size increases.
    \item[(b)] The average run times based on $\alpha=0.01$ and $\alpha=0.005$ are very close and in all cases better than $\alpha=0.05$,  while
choosing $\alpha=0.005$ yields a consistently (albeit slightly) lower average run time across all the settings.
\item[(c)] Generally, the average run time for the decomposition-based algorithm is lower than that for the ( {\sc Stable-}) \opc~algorithm.
\end{itemize}

In Figure~\ref{fig:5023_005nindtests}, the algorithms are compared by counting the number of independence tests, rather than runtime, in order to reduce the impact of different implementations (\textsf{R} packages). We observe that:
\begin{itemize}
    \item[(a)]  The average number of independence tests increases when sample size increases.
    \item[(b)] The average number of independence tests based on $\alpha=0.01$ and $\alpha=0.005$ are close and in all cases better than $\alpha=0.05$,  while
choosing $\alpha=0.005$ yields a consistently lower average number of independence tests across all the settings.
    \item[(c)] Generally, the average number of independence tests for the decomposition-based algorithm is better than that for the ({\sc Stable-}) \opc~algorithm.
\end{itemize}

These observations are consistent with the theoretical complexity analysis that we discussed in Section \ref{complexity}. In fact, our findings confirm that the decomposition-based algorithm \textit{reduces complexity} and \textit{increases the power of computational independence tests}.

\subsubsection{Experimental Results in High-Dimensional Settings}

Although the results in Figure \ref{fig:shd} show that our proposed modifications of \opc, i.e., \spc, \cpc, and \scpc~provide stabler estimations and closer to the true underlying structure in sparse high-dimensional settings for simulated Gaussian data compared with \opc, we are interested to test whether the difference is statistically significant.

\paragraph{Experimental Setting}
To show that the order-dependence of \opc~algorithm is
problematic in high-dimensional data, we compared the SHD of the original \opc~algorithm against its modifications for randomly generated Gaussian chain graph models: average over 30 repetitions with 1000 variables with $N = 2$, sample size $S=50$, and the significance level $\alpha=0.05,0.01,0.005,0.001$. We used an \textit{independent t-test} to quantitatively evaluate whether the means of SHDs in different structure discovery algorithms are different.


\paragraph{Results}
The \textit{t-test} results in Tables \ref{t:ttestPCvsModifications} and \ref{t:ttestModifiedPCs} show that: 
\begin{itemize}
    \item[(a)] Except for the p-value $\alpha=0.001$, the mean SHD of our proposed algorithms (i.e., \spc, \cpc, and \scpc) is significantly different (lower) from the mean of \opc's SHD. This confirms that our proposed modifications provide more reliable and better-learned structures in comparison with the \opc~algorithm.
    \item[(b)] The mean of \scpc's SHD is  significantly different from the mean of \spc and \cpc algorithms' SHD for the p-value $\alpha=0.05$. However, for the other p-values the difference is not meaningful.
    \item[(c)] Taken together, the quantitative t-test analysis confirms what one would expect from visual inspection of Figure \ref{fig:shd}.
\end{itemize}

In addition to \textit{t-test}, we also performed \textit{F-test} to test statistical difference between the corresponding pairwise SHD variances. Our results show that the p-value of F-test in all pairwise comparisons between all algorithms (\spc,\cpc, \scpc, and \opc) is greater than the significance level $\alpha=0.05$. In conclusion, there is no significant difference between the variances of the pairwise SHDs. The similarity of SHD variances indicates that requiring stability does not control error propagation in constraint-based algorithms, and that there remains a common source of errors to be discovered in future work.

\begin{figure}[!ht]
    \centering
    \includegraphics[scale=.5]{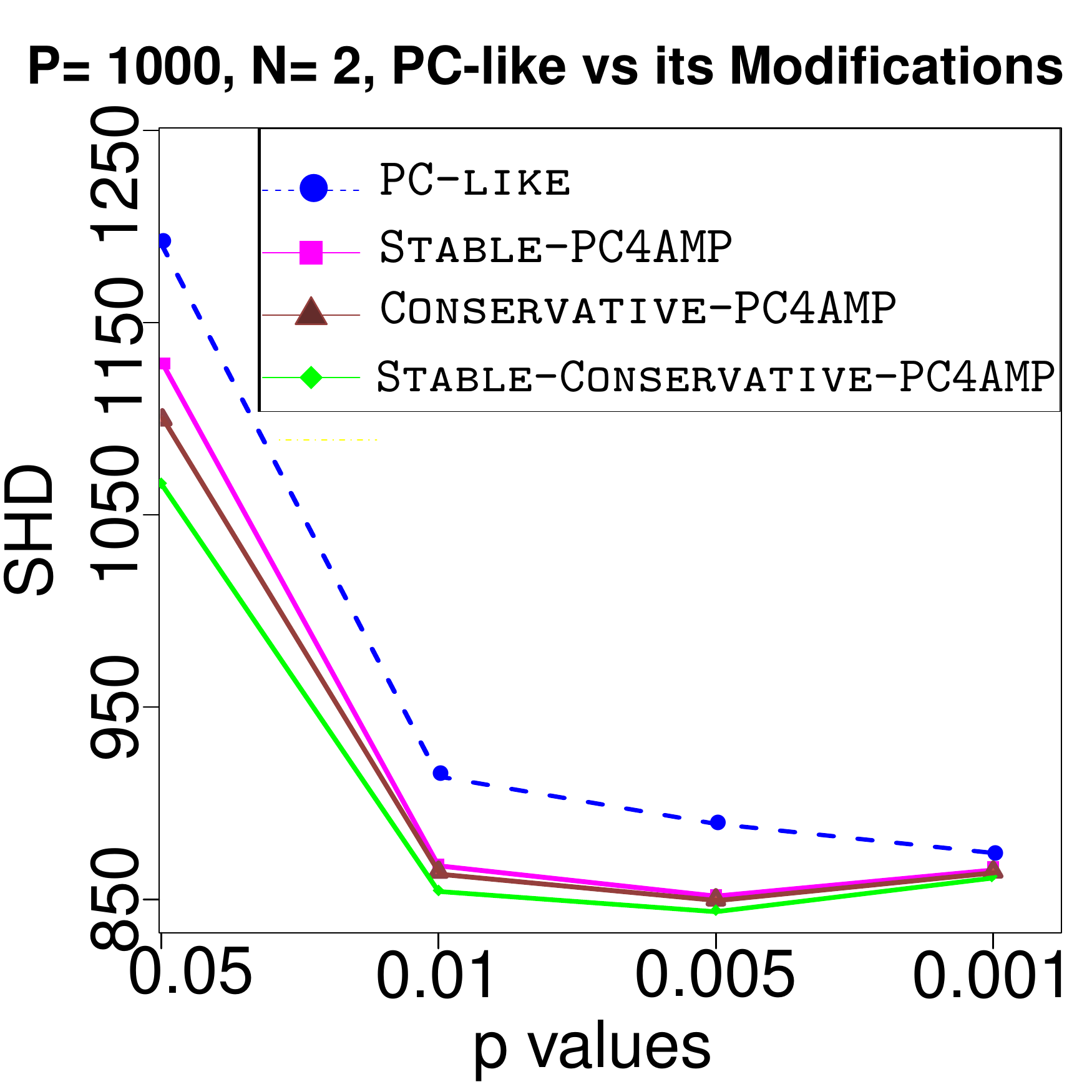}
    \caption{The SHD of the original \opc~algorithm against its modifications for randomly generated Gaussian chain graph models: average over 30 repetitions with 1000 variables  correspond to $N = 2$, sample size $S=50$, and the significance level $\alpha=0.05,0.01,0.005,0.001$.}
    \label{fig:shd}
\end{figure}

\begin{table}[!htpb]
\caption{P-values for pairwise t-tests. Bold numbers in the table mean that \opc’s average SHD is significantly different from other’s average SHD with the given p-value ($\alpha$).}\label{t:ttestPCvsModifications}
\centering
\begin{tabular}{l|c|c|c|c}
 & \opc~& \opc~& \opc~& \opc\\
 & ($\alpha=0.05$) & ($\alpha=0.01$) & ($\alpha=0.005$) & ($\alpha=0.001$)\\
\midrule
\spc & $\mathbf{7.84\mathrm{e}{-11}}$ & $\mathbf{3.338\mathrm{e}{-06}}$ & $\mathbf{0.0001399}$ & $0.1781$\\
    \midrule
\cpc & $\mathbf{9.8\mathrm{e}{-13}}$ & $\mathbf{1.682\mathrm{e}{-05}}$ & $\mathbf{0.001254}$ & 0.6045\\
\midrule
\scpc & $\mathbf{4.997\mathrm{e}{-16}}$ & $\mathbf{7.368\mathrm{e}{-07}}$ & $\mathbf{0.0001269}$ & 0.3511\\
\bottomrule
\end{tabular}
\end{table}

\begin{table}[!htpb]
\caption{P-values for pairwise t-tests. Bold numbers in the table mean that the average SHD is significantly different when executing the given pair of algorithms for the p-value $\alpha=0.05$.}\label{t:ttestModifiedPCs}
\centering
\begin{tabular}{l|c|c|c}
 &\sc \texttt{Stable-} &\sc \texttt{Conservative-} &\sc \texttt{Stable-Conservative-}\\
  & \texttt{PC4AMP} & \texttt{PC4AMP} & \texttt{PC4AMP}\\
\midrule
\sc \texttt{Stable-PC4AMP} & - & 0.116& $\mathbf{0.0003893}$ \\
    \midrule
\sc \texttt{Conservative-PC4AMP} & 0.116 & - & $\mathbf{0.04492}$\\
\midrule
\sc \texttt{Stable-Conservative-PC4AMP} & $\mathbf{0.0003893}$ & $\mathbf{0.04492}$ & -\\
\bottomrule
\end{tabular}
\end{table}

\subsection{Performance on Discrete Bayesian Networks}
Since Bayesian networks are special cases of AMP CGs, it is of interest to see whether our proposed algorithms still work well when the data are actually generated
from a Bayesian network. This matters because we often do not have the information
that the underlying graph is a DAG, which is usually untestable from data alone. For this purpose, we perform simulation studies for four well-known Bayesian networks from \href{http://www.bnlearn.com/bnrepository/}{Bayesian Network Repository} \citep{Scutari17}:  ASIA, INSURANCE, ALARM, and  HAILFINDER. We purposefully selected these networks because they have different sizes (from small to large numbers of nodes, edges, and parameters), and they are often used to evaluate structure learning
algorithms.
We briefly introduce these networks here:
\begin{itemize}
    \item ASIA \citep{asia} with 8 nodes, 8 edges, and 18 parameters, it describes the diagnosis of a patient at a chest clinic who may have just come back from a trip to Asia and may be showing dyspnea. Standard constraint-based learning algorithms are not able to recover the true structure of the network because of the presence of a functional node. 
    \item INSURANCE \citep{insurance} with 27 nodes, 52 edges, and 984 parameters, it evaluates car insurance risks.
    \item ALARM \citep{alarm} with 37 nodes, 46 edges and 509 parameters, it was designed by medical experts to provide an alarm message system for intensive care unit patients based on the output a number of vital signs monitoring devices.
    \item HAILFINDER \citep{Hailfinder} with 56 nodes, 66 edges, and 2656 parameters, it was designed to forecast severe summer hail in northeastern Colorado.
\end{itemize}

We compared the performance of our algorithms for these Bayesian networks for  significance level $\alpha=0.05$. The Structural Hamming Distance (SHD) compares the structure of the largest deflagged of the learned and the original networks, for a fair comparison. 

\begin{table}[!ht]
\caption{Results for discrete samples from the ASIA (5000 observations), INSURANCE (20000 observations), ALARM (20000 observations), and HAILFINDER (20000 observations) networks from the \textsf{bnlearn R} package respectively. Each row corresponds to the significance
level: $\alpha=0.05$. In order to learn an undirected independence graph from a given data set in the \lcd~ algorithm we used the Incremental Association with FDR (\iamb) algorithm \citep{Pena08Mb} from the \textsf{bnlearn R} package \citep{Scutari17} and the stepwise forward selection (\aic~or \bic) algorithms in \citep{Edwards2010}.}
\centering
\resizebox{!}{.3\textheight}{
\begin{tabular}{l|c|c|c|c|c}
Algorithm & TPR & TDR & FPR&ACC& SHD\\
\midrule
   \lcd~ Algorithm (\iamb) & 0.5 & 0.8 &	0.05 &	0.821 &	7\\
   \lcd~ Algorithm (\aic) & 0.75& \textbf{1} &	\textbf{0} &	0.929 &	3\\
   \lcd~ Algorithm (\bic) & \textbf{0.875}& \textbf{1} &	\textbf{0} &	\textbf{0.964} &	\textbf{1}\\
    \midrule
 \spc~Algorithm  &0.5& \textbf{1} &	\textbf{0}&	0.8571&	5\\
\midrule
Original \opc~Algorithm  &0.5& \textbf{1} &	\textbf{0}&	0.8571&	5\\
\midrule
\midrule
\lcd~ Algorithm (\iamb)& \textbf{0.558} & 0.935 &	0.0067 &	\textbf{0.929} &	\textbf{33}\\
\lcd~ Algorithm (\aic) & 0.385 & 0.952	& 0.0033 & 0.906 &	42  \\
\lcd~Algorithm (\bic) & 0.538 & 0.875 &	0.0134 &	0.920 &	36 \\
\midrule
\spc~Algorithm & 0.173 & \textbf{1} &	\textbf{0} &	0.877 &	43\\
\midrule
Original \opc~Algorithm  &0.346 & \textbf{1} &	\textbf{0}& 0.903 &	41\\
\midrule
\midrule
\lcd~ Algorithm (\iamb)& \textbf{0.783} & 0.878 &	0.0081 &	0.977 &	24\\
   \lcd~ Algorithm (\aic) & 0.696 & 0.914 & 0.0048 & 0.974 & 27 \\
\lcd~ Algorithm (\bic) & 0.760 & 0.921 &	0.0048 &	\textbf{0.979} &	20\\
    \midrule
\spc~Algorithm & 0.587 & \textbf{1} &	\textbf{0} &	0.971 &	25\\
\midrule
Original \opc~Algorithm  &0.696& \textbf{1} &	\textbf{0}& \textbf{0.979} &	\textbf{18}\\
\midrule
\midrule
\lcd~ Algorithm (\iamb)&0.515 & 0.971 &	0.00068 &	0.979 &	40\\
\lcd~ Algorithm (\aic) &  & &  & & \\
\lcd~ Algorithm  (\bic) & \textbf{0.803} & 0.930 &	0.0027 &	0.989 &	\textbf{38}\\
\midrule
\spc~Algorithm &0.394& \textbf{1} &	\textbf{0}&	0.974&	46\\
\midrule
Original \opc~Algorithm  &0.455&	0.811 &0.0047 &	0.972 &	49\\
\bottomrule
\end{tabular}}\label{t:discreteAMPCGs}
\end{table}

\subsubsection{Experimental Results}
The results of comparing all learning methods in Table \ref{t:discreteAMPCGs} indicate that the performance of {\lcd~} algorithm in many cases is better than that of the \opc~and \spc~algorithms. In particular, we observed: 
\begin{itemize}
    \item[(a)] Although the performance of our \lcd~ algorithm, overall, is better than the \opc~and \spc~algorithms, it is highly variable depending on the procedure that is used for the UIG discovery, especially in TPR and SHD. One of the most important implications of this observation is that there is much room for improvement to the UIG recovery algorithms and decomposition-based learning algorithms, and hopefully the present paper will inspire other researchers to address this important class of algorithms. In general, the more accurate the UIG discovery algorithm, the more robust the result. In our experiments, generally, the \iamb~ algorithm \citep{Pena08Mb} and the stepwise forward selection (FWD-BIC) algorithm \citep{Edwards2010} are more effective as a preliminary step (UIG recovery) towards understanding the overall dependence structure of high-dimensional discrete data.
    \item[(b)] The \opc~and \spc~algorithms tend to have better TDR and FPR. This comes at the expense, however, of much worse TPR. This suggests that the \opc~and \spc~algorithms tend to add too many  edges to the skeleton of the learned graph.
\end{itemize}


\section{Related Work}
The contributions of this paper regarding finding minimal separators and structure learning algorithms intersect with various works in the literature as follows. 
\subsection{Finding Minimal Separators in Probabilistic Graphical Models} 
A challenging task of model testing is to detect for any given pair of nodes a minimal or minimum separator.  Nontrivial algorithms for testing and for finding a minimal $d$-separator in a DAG were first proposed in \citep{ad,tpp}. An algorithm for learning the structure of Bayesian networks
from data, based on the idea of finding minimal
$d$-separating sets, proposed in \citep{ACID2001}. In \citep{van2019finding}, the authors show that testing and finding a minimal separator in DAGs can be done in linear time. As shown in \citep{van2019finding,van2019separators}, (minimal) separating sets have important applications in causal inference tasks like finding (minimal) covariate adjustment sets or conditional
instrumental variables. In \citep{jv-pgm18,jv-uai18}, the authors proposed algorithms for testing and for finding a minimal separator in an LWF CG and an MVR CG, respectively. In this paper, we proposed algorithms for testing and finding minimal separators in AMP chain graphs (see Section \ref{sec:findingminimals}). 

\subsection{\opc~Algorithms for Probabilistic Graphical Models}
The PC algorithm  proposed by \textbf{P}eter Spirtes and \textbf{C}lark Glymour \citep{sgs} learns the
Bayesian network structure from data by testing for conditional
independence between various sets of variables.
Given the results of these tests, a network pattern is constructed so that the Markov
property holds and $d$-separation confirms the resulting graph mirroring those conditional independencies found in the data. The PC algorithm consists of  two phases: In the first phase, an undirected graph is learned. This is known as the skeleton of the
Bayesian network. In the second phase, arrowheads are added to some of the edges
where they can be inferred. The output graph may not be fully oriented and is called a
pattern. When the pattern contains undirected edges, these indicate that the data are
consistent with models in which either orientation is possible. 

The PC algorithm is known to be order-dependent,
in the sense that the output can depend on the order in which the variables are given.
This order-dependence can be very pronounced in high-dimensional settings, where it can lead to highly
variable results. In order to resolve  the order-dependence problem, Colombo and Maathuis \citep{Colombo2014} proposed several modifications of the PC algorithm that remove part or all of this order-dependence. 

\opc~algorithms currently exist for all three chain graph interpretations \citep{javidian2019properties,penea12amp,sp} where the different phases are slightly altered according to the interpretation but the basic ideas are kept the same. The first phase finds the
adjacencies (skeleton), the second orients the edges that must be oriented the same in
every CG in the Markov equivalence class and the third phase transforms this graph into a CG.  Order-independent versions of the \opc~algorithm for LWF CGs and MVR CGs were proposed in \citep{javidian2019properties,javidian2019orderindepMVR}, respectively. In this paper, we proved that the proposed \opc~algorithm for AMP CGs in \citep{penea12amp} is order-dependent. Then, we proposed several modifications of the \opc~algorithm that remove part or all of this order-dependence, but the proposed algorithms do not change the result when perfect conditional independence information is used (see Section \ref{sec:pcalg}).

\subsection{Decomposition Based Learning (LCD-Like) Algorithms  for PGMs}
Structure learning of Bayesian networks via decomposition was proposed in \citep{xie}. This approach starts with finding a decomposition of the entire variable set into subsets, on each of which the
local skeleton is then recovered. In the next phase, the adjacency graph (global skeleton) is reconstructed by merging the decomposed graphs (local skeletons) together. In the last phase, arrowheads are added to some of the edges where they can be inferred in an
efficient manner with lower complexity than the PC algorithm \citep{xie}.

Following the same idea, a decomposition-based algorithm called LCD  (\textbf{L}earn \textbf{C}hain graphs via \textbf{D}ecomposition) was proposed in \citep{mxg,javidian2019properties}  to learn LWF CGs and MVR CGs, respectively; where the different phases are slightly altered according to the interpretation but the basic ideas in \citep{xie} are kept the same.
In this paper, we developed an LCD-like algorithm, called \lcd~, for learning the structure of AMP chain graphs  based on the idea of decomposing the learning problem into a set of smaller scale problems on its decomposed subgraphs. Similarities and differences between \lcd~ and other LCD-like algorithms are discussed in Section \ref{main-alg}.

\section{Conclusion}
This paper addresses two main problems in the context of AMP chain graphs (CGs): finding minimal separators and structure learning.  
We first studied and solved the problem of finding
minimal separating sets for pairs of variables in an AMP CGs. We also studied some extensions of the basic problem that include finding a
minimal separator from a restricted set of nodes, finding a minimal separator for two given disjoint sets, testing whether a given separator is minimal, and listing all minimal separators, given two non-adjacent nodes (or disjoint subsets) $X$ and $Y$. Applications
of this research include: (1) learning chain graphs from data and (2) problems related to the
selection of the variables to be instantiated when using chain graphs for inference tasks, a topic for future work. 

Experimental evaluations in the Gaussian case show that both ({\sc Stable}-) \opc~ and \lcd~algorithms yield good results when the underlying graph is sparse; this holds also in the discrete case, according to experiments with standard benchmark Bayesian networks.  This is important because Bayesian networks are special cases of AMP CGs and we often do not know the information that the true underlying structure is a DAG, which is not usually testable from data. The \lcd~algorithm achieves competitive results with the \opc~and \spc~learning algorithms in both the Gaussian and discrete cases.
In fact, our \lcd~ usually outperforms the \opc~and \spc~algorithms in all five performance metrics i.e., TPR, FPR, TDR, ACC, and SHD.  

The local skeletons of our \lcd~ algorithm and CI tests at each level of the skeleton recovery of the \spc~algorithm can be learned independently from each other, and later merged and reconciled to produce a coherent AMP chain graph. This allows the parallel implementations for scaling up the task of learning AMP chain graphs from data containing more than hundreds of variables, which is crucial for big data analysis tasks. 
The correctness proof of the decomposition-based algorithm (i.e., \lcd) is built upon our results on separating sets. This algorithm exhibits reduced complexity, as measured by run time and number of conditional independence tests, enhances the power of conditional independence tests by reducing the number of separating sets that need to be considered, and, according to  our experimental evaluation, achieves better quality with respect to the learned structure. 

A direction for future work is the design of a hybrid algorithm for learning AMP chain graphs that exploits minimal separators directly, as done in~\citep{ACID2001} for learning Bayesian networks. Another natural continuation of the work presented here would be to develop a learning algorithm with weaker assumptions than the faithfulness assumption. This could for example be a learning
algorithm that only assumes that the probability distribution satisfies the \textit{composition property}. It should be mentioned that \citep{psn} developed an algorithm for learning LWF CGs under the composition property. However, \citep{Addendum} proved that the same technique cannot be used for AMP chain graphs. We believe that our decomposition-based approach is extendable to the structural learning of marginal AMP chain graphs \citep{PENA2016MAMP} and ancestral graphs \citep{rs}. Also, a potential continuation of the work presented here would be to develop a learning algorithm via decomposition for marginal AMP chain graphs and ancestral graphs under the faithfulness assumption.
As we mentioned before, our \lcd~algorithm works better than the ({\sc Stable-}) \opc~in many settings. The reason is that \lcd~algorithm takes advantage of local computations that makes it robust against the choice of learning parameters. In Bayesian networks, the concept that enables us to take advantage of local computation is \textit{Markov blanket}.
Recently, \citep{uai2020LWFCGs} extended the concept of Markov blankets to LWF CGs and proved what variables make up the Markov blanket of a target variable in  an  LWF  CG. Characterizing Markov blankets in AMP CGs and designing a Markov blanket based algorithm for learning AMP CGs nother interesting direction for future work.

\section*{Acknowledgment}
We are grateful to Professor Jose M. Pe\~na and Dr. Dag Sonntag for providing
us with code that helped in the data generating procedure. This work has been partially supported by AFRL and DARPA (FA8750-16-2-0042). This work is also partially supported by an ASPIRE grant from the Office of the Vice President for Research at the University of South Carolina.

\section*{Appendix A. Proofs of Theorems \ref{thm_junctree_ampcg} and \ref{thm_main_ampcg}}

In Theorem \ref{thm3amp}, we showed that if we find a separator over $S$ in $(G_{ant(u\cup v)})^a$ then it is a $p$-separator in $G$. On the other hand, if there exists a $p$-separator over $S$ in $G$ then there must exist a separator over $S$ in $(G_{ant(u\cup v)})^a$ by removing all nodes which are not in $ant(u\cup v)$ from it. This observation  yield the following results.

\begin{lemma}\label{lem1amp}
	Let $u$ and $v$ be two non-adjacent vertices in AMP CG $G$, and let $\rho$ be a chain from $u$ to $v$. If $\rho$ is not contained in
	$ant(u\cup v)$, then $\rho$ is blocked by any subset $S$ of $ant(u\cup v)\setminus\{u,v\}$.
\end{lemma}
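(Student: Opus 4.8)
The plan is to exhibit, on the chain $\rho$, a triplex node that lies outside $ant(u\cup v)$ and that fails to belong to $An(S)$. By the first bullet of the pathwise $p$-separation criterion (Definition \ref{pseparation}), the mere presence of such a triplex node already forces $\rho$ to be blocked by $S$, irrespective of how $S$ interacts with the non-triplex nodes of $\rho$.

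First I would analyze how $\rho$ can cross the boundary of $ant(u\cup v)$. Since $u,v\in ant(u\cup v)$ while, by hypothesis, some vertex of $\rho$ lies outside $ant(u\cup v)$, the chain must leave and later re-enter $ant(u\cup v)$. I claim every crossing is forced to use a directed edge of a prescribed orientation. Suppose $a_i\in ant(u\cup v)$ and $a_{i+1}\notin ant(u\cup v)$ are consecutive on $\rho$. If the edge were $a_i-a_{i+1}$ or $a_{i+1}\to a_i$, then prepending it to an anterior chain witnessing $a_i\in ant(u\cup v)$ would witness $a_{i+1}\in ant(u\cup v)$ (an undirected edge, or a directed edge pointing toward the target vertex of $u\cup v$, preserves the anterior property), a contradiction. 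Hence the edge must be $a_i\to a_{i+1}$. The symmetric argument at a re-entry point $a_j\notin ant(u\cup v)$, $a_{j+1}\in ant(u\cup v)$ shows that edge must be $a_{j+1}\to a_j$. Thus on any maximal sub-chain $a_{i+1},\dots,a_j$ lying entirely outside $ant(u\cup v)$ (an \emph{excursion}), the boundary edges place an arrowhead into $a_{i+1}$ from the left and an arrowhead into $a_j$ from the right.

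Next I would locate a triplex node inside such an excursion. Recalling that a node is a triplex node exactly when at least one of its two chain-edges points into it while neither points out of it, I would walk along the excursion from $a_{i+1}$ toward $a_j$ maintaining the invariant that the current node has an incoming arrowhead on its left edge (true at $a_{i+1}$ by the boundary analysis). At each node either its right edge does not point away from it, in which case the node is a triplex node and I stop, or the right edge points away ($a_k\to a_{k+1}$), in which case the arrowhead is propagated to $a_{k+1}$ and the invariant is maintained. Because the right boundary edge points into $a_j$, the walk cannot run off the end and must terminate at a triplex node $a_m$ with $i+1\le m\le j$; in particular $a_m\notin ant(u\cup v)$, and $a_m$ is internal to $\rho$ (its endpoints $u,v$ lie in $ant(u\cup v)$), so the triplex-node notion genuinely applies.

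Finally I would show $a_m\notin An(S)$ for every $S\subseteq ant(u\cup v)\setminus\{u,v\}$. If instead $a_m\in An(S)$, then since $a_m\notin S$ (as $S\subseteq ant(u\cup v)$ and $a_m\notin ant(u\cup v)$) there would be a directed path from $a_m$ to some $s\in S\subseteq ant(u\cup v)$; concatenating this directed path with an anterior chain from $s$ to a vertex of $u\cup v$ yields an anterior chain from $a_m$ to $u\cup v$ (all directed edges still point forward), contradicting $a_m\notin ant(u\cup v)$. Hence the triplex node $a_m$ is not in $An(S)$, the first condition of $S$-openness fails, and $\rho$ is blocked by $S$, as required. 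I expect the main obstacle to be the bookkeeping in the excursion walk — making the arrowhead-propagation invariant precise and verifying that it must terminate at a genuine triplex node — together with the careful, repeated use of the definition of \emph{anterior} and its transitivity under concatenation with directed paths.
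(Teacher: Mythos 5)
Your proposal is correct and takes essentially the same route as the paper's own proof: the paper likewise isolates a maximal excursion of $\rho$ outside $ant(u\cup v)$, observes that the two boundary edges must be oriented $s\to t$ and $x\gets y$ (else the excursion's endpoints would be anterior), deduces the existence of a triplex node inside the excursion, and blocks $\rho$ by noting that no descendant of that node can lie in $ant(u\cup v)$, hence it is not in $An(S)$. Your version merely makes explicit the arrowhead-propagation step that the paper compresses into ``thus there exists at least one triplex between $s$ and $y$.''
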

\begin{proof}
	Since $\rho \not\subseteq ant(u\cup v)$, there is a sequence from $s$ (may be $u$) to $y$ (may be $v$) in $\rho=(u,\dots,s,t,\dots,x,y,\dots,v)$ such that $s$ and $y$ are contained in $ant(u\cup v)$ and all vertices from $t$ to $x$ are out of $ant(u\cup v)$.Then the edges $s-t$ and $x-y$ must be oriented as $s\to t$ and $x\gets y$, otherwise $t$ or $x$ belongs to $ant(u\cup v)$. Thus there exist at least one triplex between $s$ and $y$ on $\rho$. The middle vertex $w$ of the triplex closest to $s$ between $s$ and $y$ is not contained in
	$ant(u\cup v)$, and any descendant of $w$ is not in $ant(u\cup v)$. So $\rho$ is blocked
	by this triplex, and it cannot be activated conditionally on any vertex in $S$ where $S\subseteq ant(u\cup v)\setminus\{u,v\}$.
\end{proof}

\begin{lemma}\label{lem2amp}
	Let $T$ be a $p$-separation tree for the AMP CG $G$. For any vertex $u$ there exists at least one node of $T$ that contains $u$ and $pa(u)$.
\end{lemma}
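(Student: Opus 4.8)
The plan is to reduce the statement to a purely graph-theoretic fact about cliques in the augmented graph together with the defining property of junction trees. Concretely, I would show that $\{u\} \cup pa(u)$ is a complete set in the augmented graph $(G)^a$, and then argue that any such complete set must sit inside a single node of $T$.

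First I would fix the undirected independence graph from which $T$ arises. By Theorem \ref{thm_junctree_ampcg} the tree $T$ is a junction tree built from some UIG $\bar{G}_V$ of $G$, and its nodes are the maximal cliques of the triangulation $\bar{G}_V^t$. As observed after the definition of UIGs, $(G)^a$ is itself a UIG and any UIG is obtained from $(G)^a$ by possibly adding edges; indeed, if $a-b$ is an edge of $(G)^a$ then $a$ and $b$ are not $p$-separated by $V\setminus\{a,b\}$, so they must be adjacent in every UIG. Hence $(G)^a$ is a subgraph of $\bar{G}_V$, and it suffices to prove completeness in $(G)^a$. Each parent $p\in pa(u)$ satisfies $p\to u$ in $G$, so the edge $p-u$ appears in $(G)^a$ once directed edges are replaced by undirected ones. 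For any two distinct $p_1,p_2\in pa(u)$ there are two cases: if $p_1$ and $p_2$ are adjacent in $G$ they remain adjacent in $(G)^a$; if they are not adjacent, then the induced subgraph on $\{p_1,u,p_2\}$ is exactly $p_1\to u\gets p_2$, which is a triplex, and augmentation adds the edge $p_1-p_2$. In either case $p_1$ and $p_2$ are adjacent in $(G)^a$, so $\{u\}\cup pa(u)$ is complete in $(G)^a$, and therefore also in its edge-supersets $\bar{G}_V$ and $\bar{G}_V^t$.

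The conclusion then follows from the junction tree structure: since $\{u\}\cup pa(u)$ is a clique of the chordal graph $\bar{G}_V^t$, and every clique is contained in some maximal clique, there is a node $C$ of $T$ with $\{u\}\cup pa(u)\subseteq C$, which is exactly the assertion. The main obstacle is the clique claim in $(G)^a$, and specifically the verification that two non-adjacent parents of $u$ become adjacent after augmentation; this is where the particular form of AMP augmentation (augmenting triplexes $X\to Y\gets Z$) is essential. Once that is in hand, the remainder is the standard moralization-style junction tree argument, mirroring the DAG case in \citep{xie}.
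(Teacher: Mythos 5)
Your argument has a genuine gap at its very first step. You write that ``by Theorem \ref{thm_junctree_ampcg} the tree $T$ is a junction tree built from some UIG,'' but that theorem only goes in one direction: every junction tree constructed from a UIG is a $p$-separation tree. It does not say that every $p$-separation tree arises this way, and the paper explicitly stresses the opposite, namely that a $p$-separation tree ``does not require that every node be a clique or that every separator be complete on the augmented graph.'' Lemma \ref{lem2amp} is stated (and later used, via Lemma \ref{lem3amp} and Theorem \ref{thm_main_ampcg}) for an \emph{arbitrary} $p$-separation tree, so a proof that presupposes the nodes of $T$ are maximal cliques of a triangulated UIG does not establish the stated claim: for a general $T$, the fact that $\{u\}\cup pa(u)$ is a clique of $(G)^a$ says nothing directly about which variable sets appear as nodes of $T$.

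The ingredients you do verify are correct and worth keeping: $\{u\}\cup pa(u)$ is complete in $(G)^a$ (adjacent parents stay adjacent, and two non-adjacent parents form the triplex $p_1\to u\gets p_2$, which augmentation fills in), and $(G)^a$ is a subgraph of every UIG, so your proof is valid in the special case where $T$ is a junction tree of a UIG --- which happens to be the case produced by Algorithm \ref{DBalgampcg}. To close the gap for general $p$-separation trees you must argue directly from the defining property that every tree separator is a $p$-separator in $G$, which is what the paper's proof does: since $u$ is adjacent in $G$ to each parent, no separator of $T$ can lie between the nodes containing $u$ and those containing a given parent, so for each $v\in pa(u)$ some node contains $\{u,v\}$; and if two parents $v,w$ never occurred together in a node, some separator $S$ on the path between the corresponding nodes would have to contain $u$ while $p$-separating $v$ from $w$ --- impossible, because $u$ is then a triplex node on $v\to u\gets w$ with $u\in An(S)$, so that chain is $S$-open. (A Helly-type argument for subtrees of a tree then upgrades pairwise co-occurrence to a single common node containing $u$ and all of $pa(u)$.)
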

\begin{proof}
	If $pa(u)$ is empty, the result is trivial. Otherwise let $C$ denote the node of $T$ which contains $u$ and the most elements
	of $u$'s parent. Since no set can separate $u$ from a parent, there must be a node of $T$ that contains $u$ and the parent. If $u$ has only
	one parent, then we obtain the lemma. If $u$ has two or more parents, we choose two arbitrary elements $v$ and $w$ of $u$'s parent that are not contained in a single node of $T$ but are contained in two different nodes of $T$,
	say $\{u,v\}\subseteq C$ and  $\{u,w\}\subseteq C'$ respectively, since all vertices in $V$ appear in $T$. On the chain from $C$ to $C'$ in $T$, all separators must contain $u$, otherwise they cannot separate $C$ from $C'$. However, any separator containing $u$ cannot
	separate $v$ and $w$ because $v\to u\gets w$ is an active triplex between $v$ and $w$ in $G$. Thus we got a contradiction.
\end{proof}

\begin{lemma}\label{lem3amp}
	Let $T$ be a $p$-separation tree for AMP CG $G$ and $C$ a node of $T$. If $u$ and $v$ are two vertices in $C$ that
	are non-adjacent in $G$ and belong to two different chain components, then there exists a node $C'$ of $T$ containing $u, v$ and a set $S$ such that $S$ $p$-separates $u$ and $v$ in $G$.
\end{lemma}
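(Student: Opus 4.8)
The plan is to convert the $p$-separation question into an ordinary undirected separation question on a small augmented graph, and then to run a running-intersection argument that confines a separator to a single node of $T$; the hypothesis that $u$ and $v$ lie in \emph{different} chain components is exactly what makes this confinement possible. First I would set up the reduction. Because $u$ and $v$ are non-adjacent and lie in different chain components they are $p$-separable in $G$ (a separating set exists, e.g.\ the one produced by Algorithm \ref{alg2amp}), and by Theorem \ref{thm1amp} together with Propositions \ref{prop1amp}--\ref{prop3amp} it suffices to work inside $H:=G_{ant(u\cup v)}$ and, in fact, inside the undirected augmented graph $H^a=(G_{ant(u\cup v)})^a$, where $p$-separation in $G$ coincides with vertex separation in $H^a$ by the equivalence of the $p$-separation and augmentation criteria (Theorem~4.1 of \citep{LPM2001}). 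By Lemma \ref{lem1amp} every chain between $u$ and $v$ that leaves $ant(u\cup v)$ is already blocked by any $S\subseteq ant(u\cup v)\setminus\{u,v\}$, so only paths that live inside $H^a$ must be cut, and the goal is reduced to exhibiting a node $C'$ of $T$ with $u,v\in C'$ and a set $S\subseteq C'\setminus\{u,v\}$ separating $u$ and $v$ in $H^a$.

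Next I would localize the separator using the tree. Let $T_{uv}$ be the collection of nodes of $T$ that contain both $u$ and $v$. The nodes containing $u$ (resp.\ $v$) form a subtree — this follows directly from the defining separation property of $T$, since if $u$ appeared in two nodes but not in a separator on the path between them, $u$ would lie on both separated sides, which is impossible — so $T_{uv}$ is a nonempty subtree ($C\in T_{uv}$). I would then pick $C'\in T_{uv}$, take $S$ to be the restriction to $C'$ of a minimal $(u,v)$-separator of $H^a$, and argue that any $u$--$v$ path in $H^a$ that leaves $C'$ must re-cross a tree-separator which, by running intersection, is contained in $C'$, so the path meets $S$. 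The mechanism that supplies separators sitting inside a single node is the clique structure created by augmentation: exactly as in the proof of Lemma \ref{lem2amp}, the closure $\{w\}\cup pa_G(w)$ of any vertex $w$ is complete in the augmented graph, because each pair of parents of $w$ forms a collider triplex at $w$ and is therefore joined by an augmenting edge. Such blocking sets are thus confined to single nodes of $T$, and Lemma \ref{lem2amp} guarantees the node witnessing this contains $w$ as well.

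The main obstacle is precisely this confinement step, and it is here that the different-chain-component hypothesis is indispensable. Without it the statement is false: for two non-adjacent vertices lying in one undirected chain component a minimal separator can be an independent set — for instance $\{a,b\}$ in the four-cycle $u\erelbar{00} a\erelbar{00} v\erelbar{00} b\erelbar{00} u$ — which cannot be forced into a single node, consistent with the remark that Theorem \ref{thm_main_ampcg} cannot be strengthened to the same-component case. When $u$ and $v$ belong to different chain components, every chain joining them must pass through a triplex node or through parent sets, whose augmentations are cliques; the heart of the argument is to turn this observation into a minimal $(u,v)$-separator of $H^a$ all of whose vertices can be gathered, via running intersection, into one node $C'\in T_{uv}$. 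I expect the delicate bookkeeping to lie in verifying that the projected set $S\subseteq C'$ still cuts every within-$H^a$ path between $u$ and $v$, for which Lemmas \ref{lem1amp} and \ref{lem2amp} furnish the two needed ingredients.
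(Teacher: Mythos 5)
Your high-level intuition is sound: the lemma is about confining a separator to a single node of $T$, and your explanation of why the different-chain-component hypothesis is indispensable (a minimal separator of two vertices in one undirected component can be an independent set that no single tree node contains) matches the paper's own counterexample following Theorem \ref{thm_main_ampcg}. But the central step of your plan is missing, and the recipe you propose in its place would not work as stated. You say: pick an arbitrary $C'\in T_{uv}$, let $S$ be ``the restriction to $C'$ of a minimal $(u,v)$-separator of $H^a$,'' and argue via running intersection that every path leaving $C'$ re-crosses a tree separator contained in $C'$. A minimal $(u,v)$-separator of $H^a$ need not be contained in any node of $T$, and its restriction to $C'$ need not separate anything; the running-intersection property constrains the tree's own separators, not the projection of an arbitrary separator of $H^a$ onto a node. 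Moreover $C'$ cannot be arbitrary. You concede the point yourself when you write that you ``expect the delicate bookkeeping to lie in verifying that the projected set $S\subseteq C'$ still cuts every within-$H^a$ path'' --- but that verification \emph{is} the lemma; without it nothing has been proved.

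The paper's proof runs on entirely different fuel. It starts from the pairwise Markov property: choosing $v$ to be a non-descendant of $u$, one has $u\perp\!\!\!\perp v\mid pa(u)$. Lemma \ref{lem2amp} supplies a node $C_1$ of $T$ containing $u$ together with $pa(u)$; if $v\in C_1$ the separator is just $pa(u)$ and we are done. Otherwise one takes $C_2$ to be the node \emph{closest to $C_1$} containing both $u$ and $v$ --- this choice is essential, since it forces the tree separator $K$ adjacent to $C_2$ on the path toward $C_1$ to contain $u$ but not $v$, and to $p$-separate from $v$ every parent of $u$ that failed to make it into $C_2$. The explicit separator is then $S=[ant(u\cup v)\cap(K\cup\{p\in pa(u)\mid p\in C_2\})]\setminus\tau_u$, and the proof that $S$ blocks every chain $\rho$ from $u$ to $v$ is a case analysis on the first edge of $\rho$ out of $u$ ($u\gets x$, $u\to x$, $u\erelbar{00}x$), using Lemma \ref{lem1amp} for chains leaving $ant(u\cup v)$, the non-descendant choice to kill the $u\to x$ case, and the different-chain-component hypothesis to produce a blocking triplex in the $u\erelbar{00}x$ case. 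None of this construction --- the pairwise Markov property, the role of $pa(u)$, the specific choice of $C_2$, the subtraction of $\tau_u$, or the directional case analysis --- appears in your sketch, so the gap is not bookkeeping but the main argument.
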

\begin{proof}
    Assume that $u$ and $v$ are two vertices in $G$ that are non-adjacent and belong to two different chain components.
    Without loss of generality, we can suppose that $v$ is not a descendant of the vertex $u$ in $G$, i.e., $v\not\in nd(u)$. According to the pairwise Markov property for AMP chain graphs in \citep{AMP2001}, $u\perp\!\!\!\perp v|pa(u).$ By Lemma \ref{lem2amp}, there is a
	node $C_1$ of $T$ that contains $u$ and $pa(u)$. If $v\in C_1$, then $S$ defined as the parents of $u$ $p$-separates $u$ from $v$.
	
	If $v\not\in C_1$, choose the node $C_2$ that is the closest node in $T$ to the node $C_1$ and that contains $u$ and $v$. Consider that there is at least one parent $p$ of $u$ that is not contained
	in $C_2$. Thus there is a separator $K$ connecting $C_2$ toward $C_1$ in $T$ such that $K$ $p$-separates $p$ from all vertices in $C_2\setminus K$. Note that on the chain from $C_1$ to $C_2$ in $T$, all
	separators must contain $u$, otherwise they cannot separate $C_1$ from $C_2$. So, we have $u\in K$ but $v\not\in K$ (if $v\in K$, then $C_2$ is not the closest node of $T$ to the node $C_1$). In fact, for every parent $p'$ of $u$ that is contained in $C_1$ but not in $C_2$,  $K$ separates $p'$ from all vertices in $C_2\setminus K$, especially the vertex $v$.
	
	Define $S=[ant(u\cup v)\cap (K\cup\{p\in pa(u)|p\in C_2\})]\setminus \tau_u$, where $\tau_u$ is the chain component that includes $u$.  It is not difficult to see that $S$ is a subset of $C_2$. We need to show that $u$ and $v$ are $p$-separated by $S$, that is, every chain between $u$
	and $v$ in $G$, say $\rho$, is blocked by $S$.
	
	If $\rho$ is not contained in $ant(u\cup v)$, then we obtain from Lemma \ref{lem1amp} that $\rho$ is blocked by $S$.
	
	When $\rho$ is contained in $ant(u\cup v)$, let $x$ be adjacent to $u$ on $\rho$, that is, $\rho =(u, x, y, \dots , v)$. We consider the three possible orientations of the edge between $u$ and $x$.  We now show that $\rho$ is blocked in all three cases by $S$.
	
	\begin{itemize}
		\item[i:] $u\gets x$, so it is obvious that $x$ is not a triplex node and we have two possible sub-cases:
			\begin{enumerate}
				\item $x\in C_2$. In this case the chain $\rho$ is blocked at $x$.
				\item $x\not\in C_2$. In this case $K$ $p$-separates $x$ from $v$. Theorem \ref{thm3amp} guarantees that the set $S'=K\cap ant(x\cup v)$ also $p$-separates $x$ from $v$. Note that $S'\cap \tau_u=\emptyset$ to prevent a partially directed cycle, and $S'\subseteq S$. So, $S$ $p$-separates $x$ from $v$ i.e., the chain between $v$ and $x$ is blocked by $S$.  Hence the chain $\rho$ is blocked by $S$.
			\end{enumerate}
		\item[ii:] $u\to x$. We have the following sub-cases:
			\begin{enumerate}
				\item $x\in ant(u)$. This case is impossible because a partially directed cycle would occur.
				\item $x\in an(v)$. This case is impossible because $v$ cannot be a descendant of $u$.
			\end{enumerate}
		\item[iii:] $u\erelbar{00} x$, so $x\in \tau_u$. In this case the chain $\rho$ between $u$ and $v$ has a triplex node at $y\in\tau_u$  that is not in $S$. So, the chain $\rho$ is blocked at $y$ and cannot be activated by $S$. 
		\end{itemize}
\end{proof}

\begin{proof}[Proof of Theorem \ref{thm_junctree_ampcg}]
	From \citep{cdls}, we know that any separator $S$ in junction tree $T$ separates $V_1\setminus S$ and $V_2\setminus S$ in the triangulated graph $\bar{G}_V^t$, where $V_i$ denotes the variable set of the subtree $T_i$ induced by removing the edge with
	a separator $S$ attached, for $i = 1, 2$. Since the edge set of $\bar{G}_V^t$ contains that of undirected independence graph $\bar{G}_V$ for $G$, $V_1\setminus S$ and $V_2\setminus S$ are also separated in $\bar{G}_V$. Since $\bar{G}_V$ is an undirected independence graph for $G$, using  the definition of $p$-separation tree we obtain that $T$ is a $p$-separation tree for $G$.
\end{proof}

\begin{proof}[Proof of Theorem \ref{thm_main_ampcg}] 
	\noindent ($\Rightarrow$) If condition (i) is the case, nothing remains to prove. Otherwise, Lemma \ref{lem3amp} implies condition (ii).
	
	\noindent ($\Leftarrow$) Assume that $u$ and $v$ are not contained together in any chain component and any node $C$ of $T$. Also, assume that $C_1$ and $C_2$ are two nodes of $T$ that contain $u$ and $v$, respectively. Consider that $C_1'$ is the most distant node from $C_1$, between $C_1$ and $C_2$, that contains $u$ and $C_2'$ is the most distant node from $C_2$, between $C_1$ and $C_2$, that contains $v$. Note that it is possible that $C_1'=C_1$ or $C_2'=C_2$. By the condition (i) we know that $C_1'\ne C_2'$. The sufficiency of condition (i) is given by the definition of the $p$-separation tree, because any separator between $C_1'$ and $C_2'$ $p$-separates $u$ from $v$.
	
	The sufficiency of conditions (ii) is trivial by the definition of $p$-separation.
\end{proof}

The following example shows that Theorem \ref{thm_main_ampcg} cannot be strengthened.
\begin{figure}[!ht]
\centering
\captionsetup[subfigure]{font=footnotesize}
\centering
\subcaptionbox{}[.5\textwidth]{%
\begin{tikzpicture}[transform shape]
	\tikzset{vertex/.style = {shape=circle,inner sep=0pt,
  text width=5mm,align=center,
  draw=black,
  fill=white}}
\tikzset{edge/.style = {->,> = latex',thick}}
	\node[vertex,thick] (f) at  (1,1) {$f$};
	\node[vertex,thick] (g) at  (3,1) {$g$};
	\node[vertex,thick] (e) at  (1,3) {$e$};
	\node[vertex,thick] (h) at  (3,3) {$h$};
	\node[vertex,thick] (b) at  (0,0) {$b$};
	\node[vertex,thick] (a) at  (0,4) {$a$};
	\node[vertex,thick] (c) at  (4,0) {$c$};
	\node[vertex,thick] (d) at  (4,4) {$d$};
	\draw[edge] (a) to (e);
	\draw[edge] (d) to (h);
	\draw[edge] (b) to (f);
	\draw[edge] (c) to (g);
	\draw[thick] (e) to (h);
	\draw[thick] (e) to (f);
	\draw[thick] (f) to (g);
	\draw[thick] (g) to (h);
\end{tikzpicture}}%
\subcaptionbox{}[.5\textwidth]{\begin{tikzpicture}[transform shape]
	\tikzset{vertex/.style = {shape=circle,inner sep=0pt,
  text width=5mm,align=center,
  draw=black,
  fill=white}}
\tikzset{edge/.style = {->,> = latex',thick}}
	\node[vertex,thick] (f) at  (1,1) {$f$};
	\node[vertex,thick] (g) at  (3,1) {$g$};
	\node[vertex,thick] (e) at  (1,3) {$e$};
	\node[vertex,thick] (h) at  (3,3) {$h$};
	\node[vertex,thick] (b) at  (0,0) {$b$};
	\node[vertex,thick] (a) at  (0,4) {$a$};
	\node[vertex,thick] (c) at  (4,0) {$c$};
	\node[vertex,thick] (d) at  (4,4) {$d$};
	\draw[thick] (a) to (e);
	\draw[thick] (d) to (h);
	\draw[thick] (b) to (f);
	\draw[thick] (c) to (g);
	\draw[thick] (e) to (h);
	\draw[thick] (e) to (f);
	\draw[thick] (f) to (g);
	\draw[thick] (g) to (h);
	\draw[thick,red] (a) to (d);
	\draw[thick,red] (a) to (h);
	\draw[thick,red] (e) to (d);
	\draw[thick,red] (a) to (b);
	\draw[thick,red] (a) to (f);
	\draw[thick,red] (b) to (e);
	\draw[thick,red] (b) to (c);
	\draw[thick,red] (b) to (g);
	\draw[thick,red] (f) to (c);
	\draw[thick,red] (c) to (d);
	\draw[thick,red] (c) to (h);
	\draw[thick,red] (g) to (d);
\end{tikzpicture}}

\subcaptionbox{}[.5\textwidth]{\begin{tikzpicture}[transform shape]
	\tikzset{vertex/.style = {shape=circle,inner sep=0pt,
  text width=5mm,align=center,
  draw=black,
  fill=white}}
\tikzset{edge/.style = {->,> = latex',thick}}
	\node[vertex,thick] (f) at  (1,1) {$f$};
	\node[vertex,thick] (g) at  (3,1) {$g$};
	\node[vertex,thick] (e) at  (1,3) {$e$};
	\node[vertex,thick] (h) at  (3,3) {$h$};
	\node[vertex,thick] (b) at  (0,0) {$b$};
	\node[vertex,thick] (a) at  (0,4) {$a$};
	\node[vertex,thick] (c) at  (4,0) {$c$};
	\node[vertex,thick] (d) at  (4,4) {$d$};
	\draw[thick] (a) to (e);
	\draw[thick] (d) to (h);
	\draw[thick] (b) to (f);
	\draw[thick] (c) to (g);
	\draw[thick] (e) to (h);
	\draw[thick] (e) to (f);
	\draw[thick] (f) to (g);
	\draw[thick] (g) to (h);
	\draw[thick,red] (a) to (d);
	\draw[thick,red] (a) to (h);
	\draw[thick,red] (e) to (d);
	\draw[thick,red] (a) to (b);
	\draw[thick,red] (a) to (f);
	\draw[thick,red] (b) to (e);
	\draw[thick,red] (b) to (c);
	\draw[thick,red] (b) to (g);
	\draw[thick,red] (f) to (c);
	\draw[thick,red] (c) to (d);
	\draw[thick,red] (c) to (h);
	\draw[thick,red] (g) to (d);
	\draw[thick,blue] (f) to (h);
	\draw[thick,blue] (f) to [out=35,in=245] (d);
	\draw[thick,blue] (b) to [out=65,in=205] (h);
	\draw[thick,blue] (b) to [out=120,in=150] (d);
\end{tikzpicture}}%
\subcaptionbox{}[.5\textwidth]{\begin{tikzpicture}[auto,node distance=1cm]
    \tikzset{edge/.style = {->,> = latex',thick}}
    \node[entity,thick] (node1) {$a,b,d,e,f,h$}
    [grow=up]
    child[grow=up,thick]  {node[attribute,thick] (ch1) {$b,d,f,h$}};
    \node[entity,thick] (node2) [above = of ch1]	{$b,c,d,f,g,h$};
    \path[thick] (ch1) edge (node2);
    \end{tikzpicture}}
        \caption{(a) AMP CG $G$, (b) augmented graph $G^a$, (c) triangulated graph $(G^a)^t$, and (d) \textit{p}-separation tree $T$.}
    \label{fig:counterexampleAMPCGs}
\end{figure}
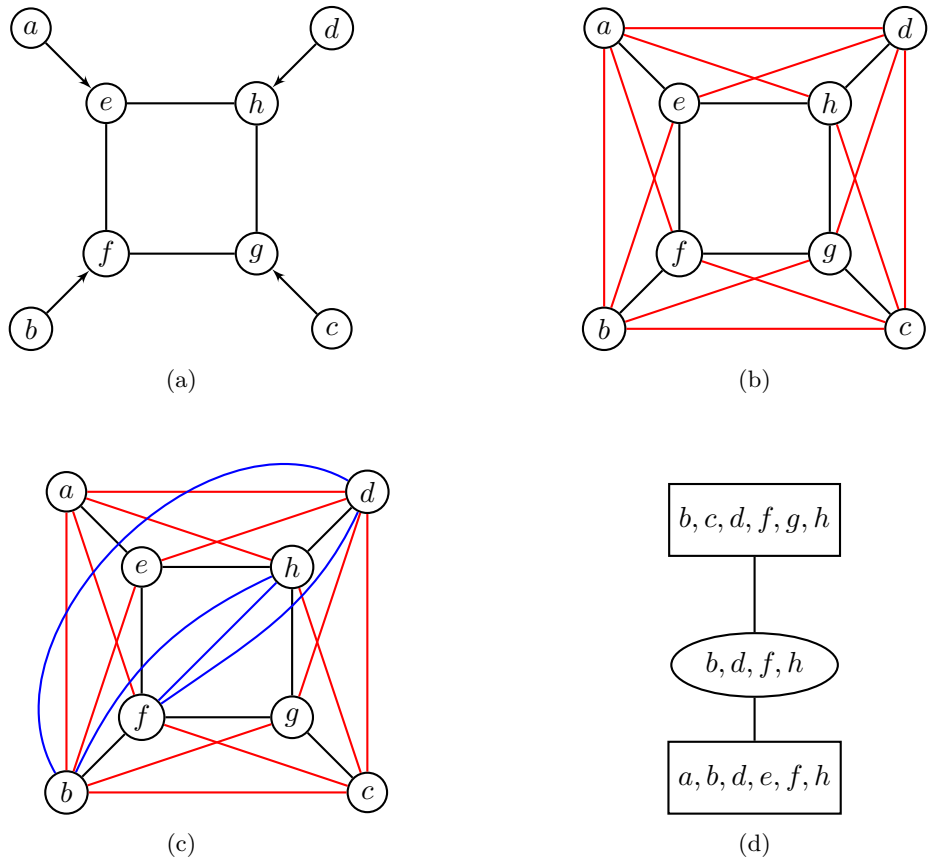

\begin{example}
Consider the AMP CG $G$ in Figure \ref{fig:counterexampleAMPCGs}(a). Vertices $f$ and $h$ are not adjacent but both of them belong to the same chain component. As one can see in the Figure \ref{fig:counterexampleAMPCGs}(d), vertices $f$ and $h$ belong to nodes tree $C_1=\{b,c,d,f,g,h\}$ and $C_2=\{a,b,d,e,f,h\}$. However, none of them contains a subset of $V_G$ that $p$-separates $f$ from $h$.
\end{example}

\begin{proof}
    [Correctness of Algorithm \ref{DBalgampcg}] By the definition of \textit{p}-separation trees and Theorem \ref{thm_main_ampcg}, the initializations at local and global skeleton recovery phases guarantee that no edge is created between any two variables which are not in the same node of the
	$p$-separation tree. Also, deleting edges at local and global skeleton recovery phases guarantees that any other edge
	between two $p$-separated variables can be deleted in some local skeleton or in the removal procedure at the global skeleton recovery phase. Thus the global skeleton obtained after line 22 is
	correct. Note that, in an AMP CG, every missing edge corresponds to at least one independency in the corresponding
	independence model. Therefore, each augmented edge $u\erelbar{00} v$ in the undirected independence graph must be deleted at some subgraph over a node of the $p$-separation tree or at some point of the removal procedure of the global skeleton recovery. The proof of the correctness of orientation rules R1-R4  can be found in \citep{penea12amp}.
\end{proof}
\newpage
\bibliography{bibliography}

\end{document}